\DeclareMathOperator*{\argmin}{argmin}
\newtheorem{assumption}{Assumption}
\newtheorem{theorem}{Theorem}
\newtheorem{lemma}[theorem]{Lemma}
\newtheorem{proposition}[theorem]{Proposition}
\newtheorem{corollary}[theorem]{Corollary}
\newtheorem{definition}{Definition}
\title{Working Principles of Model-Based Genetic Algorithms in PAC Framework: A Theory of Problem Decomposition}
\author*[1]{\fnm{Tian-Li} \sur{Yu}}\email{tianliyu@ntu.edu.tw}
\author[1]{\fnm{Chi-Hsien} \sur{Chang}}\email{d07921004@ntu.edu.tw}
\author[2]{\fnm{Ying-ping} \sur{Chen}}\email{ypchen@cs.nycu.edu.tw}
\affil*[1]{\orgdiv{Department of Electrical Engineering}, \orgname{National Taiwan University}, \country{Taiwan}}
\affil[2]{\orgdiv{Department of Computer Science}, \orgname{National Yang Ming Chiao Tung University}, \country{Taiwan}}
\begin{document}

\newcommand{\TLY}[1]{{\color{violet}#1}}
\newcommand{\YP}[1]{{\color{blue}#1}}
\newcommand{\Salima}[1]{{\color{teal}#1}}

\newcommand{\note}[1]{{\color{red}#1}}

\abstract{
The concepts of linkage, building blocks, and problem decomposition have long existed in the genetic
algorithm field and have guided the development of model-based genetic algorithms for decades. However, their definitions are usually vague, making it difficult to develop theoretical support. This paper provides an algorithm-independent definition to describe the concept of linkage. With this definition, the paper proves that  any problem with a bounded degree of linkage is decomposable and that proper problem decomposition is possible via linkage learning. The way of decomposition given in this paper also offers a new perspective on nearly decomposable problems with bounded difficulty and building blocks from the theoretical aspect. Finally, this paper relates problem decomposition to probably approximately correct (PAC) learning and proves that the global optima of problems with bounded decomposition difficulty are PAC learnable and the decomposition is decidable in polynomial time under certain conditions.
}

\keywords{
model-based genetic algorithm (MBGA), problem decomposition, linkage, building block, epistatic graph, probably approximately correct (PAC)
}	

\maketitle

\section{Introduction}

In the field of genetic algorithm (GA), the notion of linkage was first brought into this field by \mbox{Holland}~\cite{Holland:1975}. Before long, the meaning of linkage was split~\cite{YPC:ll}. Some have used it to indicate the degree of genes recombined together during the recombination operator; some have used it to indicate whether genes are separated when the problem is decomposed into subproblems. The former views linkage from a more operational aspect, while the latter views linkage from a more problem-difficulty aspect. To avoid confusion, the term ``epistasis'' was borrowed from biology to focus the research on problem difficulty~\cite{Rochet:1997:epistasis}. In 1992, \mbox{Goldberg} \textit{et al.} proposed a GA design theory~\cite{Goldberg:1992:ga-decomposition}, later widely referred to as the \textit{building-block hypothesis}, which pointed out that the development of GAs should focus on problem decomposition and that the atomic fragments of chromosomes are called building blocks. 

Despite the vague meanings of linkage and building blocks~\cite{Holland:1975, Goldberg:1992:ga-decomposition}, efforts have been put into developing new GAs~\cite{Goldberg:doi:2002} to conquer the difficulty of deception~\cite{Goldberg:1987:trap}. Among different attempts, a branch called the model-based GA (MBGA) has reached notable achievements~\cite{Larranaga:2002:eda,Pelikan:2005:pmbga,Pelikan:eda:2015}. The key idea is to build a model based on linkage detection and then use the model to guide recombination. Some researchers have used the term model-building GA, which emphasizes the model building phase. The term model-based GA includes all model-building GAs and is more general. For example, in the field of gray-box optimization~\cite{Whitley:2019:gray}, the model is usually provided as a prior, and hence the research belongs to model-based GAs, but may not be considered as model-building GAs. Even though the meanings of these two terms are different, such a difference does not affect the context of this paper.

Due to the success in application, researchers have put efforts into developing theoretical supports for MBGAs. Here we list some major research as background. The works in \cite{Pelikan:2003:boa-scalibility, Yu:2007:pop-sizing} focused on the population sizing for model building. The work in \cite{Fan:2013:nfe} discussed building blocks from a practical point of view and empirically showed that the optimal problem decomposition, defined by minimum number of function evaluations, of the concatenated trap is indeed composed of subfunctions. The work in \cite{Yu:2004:model-q-e} analyzed the effect of imperfect model building. The works in \cite{Thirens:2011:om, Thierens:2012:evolvability} focused on analyzing optimal mixing, a specific operator used in state-of-the-art MBGAs, among other operators. The works in \cite{Przewozniczek:2020:ell, Przewozniczek:2021:dld} went in another direction by discussing what linkage is and categorizing linkage learning into statistical and empirical ones.

All the aforementioned research, among others, has made significant progress in understanding MBGAs and helping the development of modern ones. 
However, because of the diverse and stochastic nature of GAs, the generality and rigorousness of algorithm-dependent analyses are usually limited. The purpose of this paper is to develop a formal mathematical framework to explain the working principles of MBGAs. To achieve generality, the proposed framework focuses on the relationship between linkage and problem decomposition. The major contributions of this paper come as follows:

\begin{enumerate}
	\item We propose a mathematical framework to investigate problem decomposition.
	\item With the framework, we are able to obtain the following results.
	\begin{itemize}
		\item The problem decomposition theorem states that the minimum decomposition directly relates to binary epistatic relations on genes. 
		\item The epistasis blanket theorem states that the alleles of genes can be correctly set 		
		as long as their epistatically related genes within a specific range are correctly set. 
		\item The global optima of nearly decomposable problems with bounded difficulty are probably approximately correct (PAC) learnable~\cite{Valiant:pac:1984}, and such decomposition is PAC-decidable under certain conditions.
	\end{itemize}
	\item We relate the framework to the working principles of MBGAs.
\end{enumerate}

To the best of our knowledge, we are the first to propose a mathematical framework that provides a solid foundation for obtaining all the aforementioned results taking into consideration the concept of linkage, problem decomposition, and model building. 

The remainder of this paper is organized as follows. Section~\ref{sec:EpistasisProperties} defines epistasis and shows its properties. Section~\ref{sec:eg} constructs the epistasis graph and theoretically relates it to problem decomposition. Section~\ref{sec:learnability} proves the learnability of the global optimum and the decomposition under the PAC framework. Finally, Section~\ref{sec:conc} concludes this paper.

\section{Epistasis and Its Properties}
\label{sec:EpistasisProperties}

\noindent In this section, we formally define epistasis as used in this paper. Frequently used operators and symbols are also introduced. Using these symbols, we demonstrate several important properties of epistasis with commonly used test problems in the GA field, which are repeatedly referenced in the remainder of this paper.

\begin{table}
	\centering
	\caption{Fitness of the exemplified test problems. The latter four problems consist of $m$ subproblems, each of which consists of 4 genes.}
	\label{tbl:problems}
	\renewcommand{\arraystretch}{2.0} 
	
	\begin{itemize}
		\item $u_4 = \sum_{j=0}^3 b_j$
		\item $trap_4(b_0, \ldots, b_{3}) = 
		\begin{cases}
			4 & \text{if } u_4 = 4 \\
			3-u_4 & \text{otherwise} 
		\end{cases}$
		\item $niah_4(b_0, \ldots, b_3) = 
		\begin{cases}
			4 & \text{if } u_4 = 4 \\
			0 & \text{otherwise}
		\end{cases}	$
		\item $L_{i} =
		\begin{cases}
			1 & \text{if } i = 0\\
			1 & \text{if } i \geq 1\text{ and } L_{i-1} = 1 \text{ and } \\
			& trap_4(y_{4(i-1)}, \ldots, y_{4(i-1)+3}) = 4\\
			0 & \text{otherwise}
		\end{cases}$
	\end{itemize}
	
	\begin{tabular}{|p{2.2cm}|c|}
	\hline
	\textbf{Name} & \textbf{Fitness Function}\\ 
	\hline
	\hline
	\textsc{OneMax} &
	\( f(\vec{y}) = \sum_{i=0}^{\ell-1} y_i \) \\
	\hline
	\textsc{LeadingOnes} &	
	\( f(\vec{y}) = \sum_{i=0}^{\ell-1} \prod_{j=0}^i y_j \) \\
	\hline
	\textsc{CTrap} & 	
	\(
	\begin{aligned}
	& f(\vec{y}) = \sum_{i=0}^{m-1} trap_4(y_{4i}, \ldots, y_{4i+3})\\	
	\end{aligned}
	\) \\
	\hline
	\textsc{CycTrap} & 	
	\(
	\begin{aligned}
	& f(\vec{y}) = \sum_{i=0}^{m-1} trap_4(y_{3i}, \ldots, y_{3i\oplus3}), \\
	& \text{where } a \oplus b = (a+b) \mod \ell\\
	\end{aligned}
	\) \\
	\hline
	\textsc{CNiah} &	
	\(\begin{aligned}
	&f(\vec{y}) = \sum_{i=0}^{m-1} niah_4(y_{4i}, \ldots, y_{4i+3})\\
	\end{aligned}
	\)\\
	\hline
	\textsc{LeadingTraps} & 
	\(
	\begin{aligned}
	&f(\vec{y}) =  \sum_{i=0}^{m-1} L_{i} \cdot trap_4(y_{4i}, \ldots, y_{4i+3})\\
	\end{aligned}
	\) \\
	\hline		
	\end{tabular}
\end{table}

\subsection{Exemplified Test Problems}
This subsection defines several test problems for discussion purposes. Specifically, these problems are the one-max (\textsc{OneMax}), the leading-ones (\textsc{LeadingOnes}), the concatenated trap (\textsc{CTrap}), the cyclic trap (\textsc{CycTrap}), the concatenated needle-in-a-haystack (\textsc{CNiah}), and the leading-traps (\textsc{LeadingTraps}) problems. To eliminate the correlation between the decomposition and locus, all fitness functions are defined over $\vec y = (y_i)$, which is a permutation of the chromosome $\vec x = (x_i)$. Their formulae are listed in Table~\ref{tbl:problems}.

The reason for the choice is as follows. \textsc{OneMax} has long been used for theoretical analysis in the GA field. \textsc{LeadingOnes}, \textsc{CTrap}, and \textsc{CycTrap} have been used as benchmark problems to test the ability/performance of GAs~\cite{Thierens:OMEA:2011, Hsu:dsmga2:2015}.  \textsc{CNiah} is used to demonstrate different types of epistasis through comparison with \textsc{CTrap}. We additionally define \textsc{LeadingTraps}, which follows the concept of \textsc{BlockLeadingOnes} in~\cite{Doerr:2023:blockleadingones}, to demonstrate the integration of these concepts through problem decomposition in optimizing complex problems.

The commonly used benchmark NK-landscape embeds an overlapping cyclic structure that is similar to \textsc{CycTrap} but is not adopted in this paper due to its randomness. Other commonly used benchmark problems, such as the spin-glasses, \textsc{MaxSat}, \textsc{MaxCut}~\cite{Thierens:OMEA:2011, Hsu:dsmga2:2015} are excluded for the same reason. One may, of course, apply the mathematical tools proposed in this paper to any particular instance of those problems, but the results are not guaranteed to be general due to the randomness of structures. Note that given any fitness function, epistatic relations can always be constructed by definition. The selection of exemplified problems does not limit the applicability of our theory. As an example, the epistatic relations of a manually designed instance of \textsc{MaxSat} are illustrated in the next section.

\subsection{Basic Operations}
Here we make assumptions and define symbols. 
Important symbols used in this paper are summarized in Table~\ref{tbl:sym}. To start with, let the problem size (or equivalently the length of chromosomes) be $\ell$ and define $V$, where $|V|=\ell$, as the set of loci, indexed starting from 0. $f$ is the fitness function, assumed to be maximization without loss of generality. $g$ is the indexable vector of the unique global optimum (Assumption~\ref{asm:unique}) under fitness $f$:  $\forall v\in V,\; g[v] \in \{0,1\}$. The allele of the global optimum at locus $v$ is $g[v]$ with its complement being $\overline{g[v]}$. 

\begin{table}
	\caption{Important symbols and their meanings used in this paper.}
	\label{tbl:sym}	
	\begin{tabular}{p{0.15\columnwidth}p{0.75\columnwidth}}
		\toprule	
		$f$ & The fitness function, assumed to be maximization.\\		
		$V$ & The set of all loci $\{0, \ldots, \ell-1\}$, $|V|=\ell$. (later used in graph).\\	
		$g$ & The indexable vector of the unique global optimal pattern under fitness $f$.\\ 
		    & $\forall v\in V,\; g[v] \in \{0,1\}$.\\		
		$A$ & Assignment that is indexable with the symbol `*' indicating no assignment at that locus.\\
		    & $\forall v\in V,\; A[v] \in \{0, 1, *\}$. \\
		$\mathcal{C}(A)$ & The coverage of an assignment $A$: The set of loci where the alleles are assigned in $A$.\\
		$\Psi_{A}[v]$ & The set of constrained optima at locus $v$ with the constraint of assignment $A$.\\  
		    &  $\forall v\in V$, $\Psi_{A}[v] \in \{ \{0\} , \{1\}, \{0,1\} \}$. \\	
		$S \Rightarrow v$ & $S$ is $|S|$-epistatic to $v$.\\
		$u\rightarrow v$ & $u$ is strictly epistatic to $v$.\\
		$u\dashrightarrow v$ & $u$ is non-strictly epistatic to $v$.\\
		$\mathcal{IN}^i(v)$ & The set of vertices that connect to $v$ with $i$ edges. $\mathcal{IN}^*(v)$ is the set of vertices that connect to $v$.\\
		$\mathcal{M}_{SO}(v)$ & The minimum stationary optimum of locus $v$.\\
		$r$ & The minimum deception rate of a problem.\\
		\bottomrule
	\end{tabular}
	
\end{table}

\begin{assumption}
	\label{asm:unique}
	The global optimum is unique.
\end{assumption}

Assumption~\ref{asm:unique} provides properties for the ease of derivations while limits the applicability of the conclusion of this paper on the other hand. Nevertheless, when niching techniques~\cite{Goldberg:1989:ga-book} are applied to problems containing multiple global optima and result in several clusters, where each cluster evolves toward each global optimum respectively, our results also apply to each cluster.

We start our derivations with the following definitions of full/partial gene assignments and their coverages. 

\begin{definition} Assignments and coverage	
	\begin{itemize}
		\item A \textbf{gene assignment} is a locus-allele pair $(v, a)$, where $v \in V$ and $a\in \{0, 1\}$. 
		\item For all $v\in V$, the assignment $(v, g[v])$ is shortened to $(v, g)$ and is called the \textbf{correct} assignment; likewise, the assignment $(v, \overline{g[v]})$ is shortened to $(v, \overline{g})$ and is called the \textbf{incorrect} assignment.
		\item 	An \textbf{assignment}, denoted by $A$, is a set of gene assignments. Multiple assignments for the same gene are not allowed. An assignment is a \textbf{full} assignment if every gene is assigned; otherwise, it is a \textbf{partial} assignment. 
		For convenience, we make the assignment $A$ indexable:
		\begin{linenomath*}
		\begin{equation*}
		 A[v] = \begin{cases} a, & \text{ if } (v,a) \in A\\ *,& \text{ otherwise. }\end{cases}
		\end{equation*}
		\end{linenomath*}		 
		\item 	A \textbf{batch assignment}: For convenience, we overload the symbol: $\{(S, a)\}$, where $S \subseteq V$ and $a \in \{ 0, 1\}$, to be equivalent to $\bigcup_{s\in S} \{(s, a)\}$. In addition, we often need to assign a batch of genes to be either the corresponding global optimal pattern or its complement. We denote 
		$\bigcup_{s\in S} \{(s, g[s])\}$ by $\{(S, g)\}$ and $\bigcup_{s\in S} \{(s, \overline{g[s]})\}$ by $\{(S, \overline{g})\}$.
		\item  	The \textbf{coverage} of an assignment $A$ is a set of loci whose alleles are assigned by $A$, denoted by $\mathcal{C}(A) = \{v \;|\; A[v] \neq *\}$.  For a full assignment $A$, $|\mathcal{C}(A)| = \ell$.
		\item \textbf{Applying} assignment $A$ to chromosome $\vec y$ results in $\vec y^A$, where the alleles at loci specified in $A$ are overridden.
	\end{itemize}	
\end{definition}

The above definition of assignment resembles Holland's notation of schema and messy encoding~\cite{Holland:1975}, from which we did borrow, but the meaning of the symbol `*' is different. Here are examples of assignments. Let $\ell=4$ and $A = {\{(1,0), (3,1)\}}$. Then we have $A[0] = *$, $A[1] = 0$, $A[2] = *$, $A[3] = 1$, and $\mathcal{C}(A) = \{1,3\}$. One may also use the batch assignment to assign alleles. For example, $A = \{(\{1, 3\}, 1)\}$ indicates that $A[1] = A[3] = 1$ and $A[0] = A[2] = *$. Next, we define the constrained optima. 

\begin{definition}
	The \textbf{constrained optima} with the constraint of assignment $A$, denoted by $\Psi_{A}$, is a set of chromosomes with assignment $A$ applied and with maximum fitness: 
	For all $\psi \in \Psi_{A}$ and for all chromosome $\vec y$, we have $f(\psi) \geq f(\vec y^A)$. Let $\Psi_{A}[v]$ be the set of alleles at locus $v$ of the constrained optima. 
	$\forall v\in V$, $\Psi_{A}[v] \in \{ \{0\}, \{1\}, \{0,1\}\}$.
\end{definition}

Note that, even though the global optimum is unique, the constrained optima may not be, and hence are denoted in the form of sets. For example, for a 4-bit \textsc{CNiah} (only one subproblem), the constrained optima with the constraint of assignment $\{(0,0), (1,0)\}$ is $\Psi_{\{(0,0), (1,0)\}} = \{0000, 0001, 0010, 0011\}$, while the global optimum $1111$ is unique. In this example, $\Psi_{\{(0,0),(1,0)\}}[0] = \{0\}$ and $\Psi_{\{(0,0),(1,0)\}}[2] = \{0,1\}$.

\begin{definition} \textbf{Evaluating an assignment}	
	Naturally, only full assignments are evaluable. For convenience, we make a partial assignment $A$ also evaluable by evaluating the constrained optima given the constraint of assignment $A$. If there exist multiple optima given the constraint, the evaluation lies in either one of them since their fitness values are equal. Such evaluation is denoted by $f(A)$. Mathematically, $f(A) = f(\psi)$, where $\psi \in  \Psi_A$.
\end{definition}

If an assignment consists solely of a fragment of the global optimum, the constrained optimum with the assignment is the global optimum itself (Proposition~\ref{prop:opt}). Next, a constrained optimum remains unchanged with additional constraints if the additional constraints come from the constrained optimum itself (Proposition~\ref{prop:opt-remain1}). 
Their proofs are straightforward and are omitted here. Finally, Proposition~\ref{prop:opt-remain1} can be generalized to Proposition~\ref{prop:opt-remain}.

\begin{proposition}
	\label{prop:opt}
	$\forall S \subseteq V, \forall v\in V$, we have $ \Psi_{\{(S, g)\}}[v]  = \{ g[v]\}$.
\end{proposition}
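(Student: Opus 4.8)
The plan is to show that the unique global optimum $g$ is itself a constrained optimum under the assignment $\{(S,g)\}$, and then to use Assumption~\ref{asm:unique} to conclude that it is the \emph{only} one, which immediately gives $\Psi_{\{(S,g)\}}[v]=\{g[v]\}$ at every locus $v$.

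First I would observe that applying the assignment $\{(S,g)\}$ to the chromosome $g$ itself produces $g$ again: every locus $v\in S$ has its allele $g[v]$ overridden by $g[v]$, and loci outside $S$ are untouched, so $g^{\{(S,g)\}}=g$. Hence $g$ lies in the family of chromosomes ``with the assignment applied'' over which the maximization in the definition of $\Psi_{\{(S,g)\}}$ ranges. Next, because $g$ is the global optimum, $f(g)\ge f(\vec y)$ for every chromosome $\vec y$, and in particular $f(g)\ge f(\vec y^{\{(S,g)\}})$ for every $\vec y$; together with the previous observation this says exactly that $g\in\Psi_{\{(S,g)\}}$.

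Finally I would invoke uniqueness: every element of $\Psi_{\{(S,g)\}}$ has fitness equal to $\max_{\vec y} f(\vec y^{\{(S,g)\}})=f(g)$, but by Assumption~\ref{asm:unique} the only chromosome with fitness $f(g)$ is $g$ itself, so $\Psi_{\{(S,g)\}}=\{g\}$. Reading off the allele at an arbitrary locus $v$ then yields $\Psi_{\{(S,g)\}}[v]=\{g[v]\}$, as claimed. I do not expect a genuine obstacle here; the only point needing care is the feasibility remark that $g$ actually belongs to the constrained search space (so that the constrained maximum cannot exceed $f(g)$ and is attained by $g$), after which the uniqueness assumption closes the argument.
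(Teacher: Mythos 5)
Your proof is correct and is precisely the straightforward argument the paper has in mind when it omits the proof of Proposition~\ref{prop:opt}: $g^{\{(S,g)\}}=g$ places $g$ in the constrained search space, global optimality makes it a constrained optimum, and Assumption~\ref{asm:unique} forces $\Psi_{\{(S,g)\}}=\{g\}$. Nothing is missing.
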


\begin{proposition}
	$\forall A$ and $\forall v\in V$, $\Psi_A[v] = \{a\}$ implies $\Psi_{A \cup \{(v,a)\}} = \Psi_A$.\label{prop:opt-remain1}
\end{proposition}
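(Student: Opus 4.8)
The plan is to unwind the definition of constrained optima so that the claim becomes an elementary statement about the sets of chromosomes consistent with an assignment. First I would dispose of the degenerate case $v \in \mathcal{C}(A)$: every $\psi \in \Psi_A$ agrees with $A$ on $\mathcal{C}(A)$, and $\Psi_A[v] = \{a\}$ forces $A[v] = a$, so $A \cup \{(v,a)\} = A$ and there is nothing to prove. Hence I may assume $v \notin \mathcal{C}(A)$, so that $A \cup \{(v,a)\}$ is a legal assignment (no conflicting gene assignment at $v$) with coverage $\mathcal{C}(A) \cup \{v\}$.

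Next I would record the key containment: a chromosome has $A \cup \{(v,a)\}$ applied if and only if it has $A$ applied and additionally carries allele $a$ at locus $v$; in particular the family of chromosomes with $A \cup \{(v,a)\}$ applied is a subset of the family with $A$ applied. Writing $\mu = \max_{\vec y} f(\vec y^{\,A})$, maximizing over the smaller family gives $\max_{\vec y} f(\vec y^{\,A \cup \{(v,a)\}}) \leq \mu$. For the reverse inequality, pick any $\psi \in \Psi_A$; by hypothesis $\psi[v] = a$, so $\psi$ has $A \cup \{(v,a)\}$ applied, and $f(\psi) = \mu$, whence $\max_{\vec y} f(\vec y^{\,A \cup \{(v,a)\}}) \geq \mu$. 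Thus the two maximal fitness values coincide.

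Finally I would identify the two optimum sets directly. By definition $\Psi_{A \cup \{(v,a)\}}$ is the set of chromosomes having $A \cup \{(v,a)\}$ applied whose fitness equals $\mu$, i.e. the chromosomes having $A$ applied that carry allele $a$ at $v$ and attain fitness $\mu$. On the other hand $\Psi_A$ is the set of chromosomes having $A$ applied that attain $\mu$, and each such chromosome already carries allele $a$ at $v$ because $\Psi_A[v] = \{a\}$; therefore the extra constraint ``$\cdot[v] = a$'' is vacuous and the two descriptions name the same set, giving $\Psi_{A \cup \{(v,a)\}} = \Psi_A$.

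I expect the only care required is notational bookkeeping: pinning down that ``$\vec y$ has assignment $A$ applied'' is equivalent to ``$\vec y$ agrees with $A$ on $\mathcal{C}(A)$'', verifying that $A \cup \{(v,a)\}$ is a well-formed assignment, and treating the case $v \in \mathcal{C}(A)$ separately. There is no substantive obstacle beyond this, which is consistent with the paper's remark that the proof is straightforward.
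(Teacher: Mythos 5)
Your proof is correct. The paper omits this proof entirely, remarking only that it is straightforward, and your argument is exactly the expected direct unwinding of the definitions: the chromosomes consistent with $A \cup \{(v,a)\}$ form a subset of those consistent with $A$, the hypothesis $\Psi_A[v] = \{a\}$ guarantees the maximizers of the larger family all lie in the smaller one, and hence the two optimum sets and their optimal values coincide; your separate treatment of the degenerate case $v \in \mathcal{C}(A)$ (where well-formedness of $A \cup \{(v,a)\}$ forces $A[v]=a$) is appropriate bookkeeping that the paper leaves implicit.
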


\begin{proposition}
	$\forall A$ and $\forall v\in V$, $a \in \Psi_A[v]$ implies $\Psi_{A \cup \{(v,a)\}} \subseteq \Psi_A$.\label{prop:opt-remain}
	\begin{proof}
	For the case where $\Psi_{A}[v]$ = $\{a\}$, the statement holds by Proposition~\ref{prop:opt-remain1}.\\
	For the case where $\Psi_{A}[v]$ = ${\{a, \overline{a}\}}$, the additional constraint ${\{(v, a)\}}$ merely removes the patterns in $\Psi_{A \cup \{(v, \overline{a})\}}$ from $\Psi_{A}$, \textit{i.e.}, $\Psi_{A \cup \{(v, a)\}}$ = $\Psi_{A}-$   $\Psi_{A \cup \{(v, \overline{a})\}}$.  Hence, $\Psi_{A \cup \{(v, a)\}} \subseteq \Psi_{A}$.
	\end{proof}
\end{proposition}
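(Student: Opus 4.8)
The plan is to show that adjoining the gene assignment $(v,a)$ leaves the optimal constrained fitness unchanged, and then to read off the claimed set inclusion directly from the definitions of $\Psi$ and of $f(\cdot)$ on partial assignments. First I would dispose of a degenerate case: if $v \in \mathcal{C}(A)$, then every element of $\Psi_A$ carries the allele $A[v]$ at locus $v$ (since $A$ is applied to it), so $\Psi_A[v] = \{A[v]\}$; the hypothesis $a \in \Psi_A[v]$ then forces $a = A[v]$, hence $A \cup \{(v,a)\} = A$ and the claim is vacuous. So I may assume $v \notin \mathcal{C}(A)$, which in particular guarantees that $A \cup \{(v,a)\}$ is a legitimate assignment (no locus is assigned twice).

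Next, using $a \in \Psi_A[v]$, I would fix a witness $\psi^\star \in \Psi_A$ with $\psi^\star[v] = a$. Since $\psi^\star$ already has $A$ applied and additionally satisfies $\psi^\star[v] = a$, it also has $A \cup \{(v,a)\}$ applied, so it is feasible for the constrained-optimum problem with constraint $A \cup \{(v,a)\}$; this gives $f(A \cup \{(v,a)\}) \geq f(\psi^\star) = f(A)$. Conversely, any chromosome with $A \cup \{(v,a)\}$ applied also has $A$ applied, so the maximum over the smaller feasible set cannot exceed the maximum over the larger one, i.e. $f(A \cup \{(v,a)\}) \leq f(A)$. Combining the two inequalities yields $f(A \cup \{(v,a)\}) = f(A)$.

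Finally, take any $\psi \in \Psi_{A \cup \{(v,a)\}}$. It has $A \cup \{(v,a)\}$ applied, hence $A$ applied, and its fitness equals $f(A \cup \{(v,a)\}) = f(A)$, which is the maximal fitness attainable by a chromosome with $A$ applied. By the definition of $\Psi_A$, this forces $\psi \in \Psi_A$, which is exactly the desired inclusion $\Psi_{A \cup \{(v,a)\}} \subseteq \Psi_A$. A shorter alternative would be to case-split on $\Psi_A[v]$: if $\Psi_A[v] = \{a\}$, Proposition~\ref{prop:opt-remain1} applies verbatim; if $\Psi_A[v] = \{0,1\}$, one argues that imposing $(v,a)$ merely deletes from $\Psi_A$ those optima carrying $\overline{a}$ at $v$.

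I do not expect a genuine obstacle here: the only points requiring care are the well-definedness of the union $A \cup \{(v,a)\}$ (settled by the $v \in \mathcal{C}(A)$ case split) and keeping the direction of the feasible-set nesting straight, namely that "constraint $A \cup \{(v,a)\}$ applied" implies "constraint $A$ applied," so the stronger constraint gives the smaller feasible region. Everything else is bookkeeping with the definitions.
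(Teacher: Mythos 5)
Your proposal is correct, and its main line is a genuinely different (and more self-contained) argument than the paper's. The paper case-splits on whether $\Psi_A[v]$ is $\{a\}$ or $\{a,\overline{a}\}$, invoking Proposition~\ref{prop:opt-remain1} in the first case and asserting in the second that the extra constraint merely filters out the optima carrying $\overline{a}$ at $v$, i.e.\ $\Psi_{A\cup\{(v,a)\}}=\Psi_A-\Psi_{A\cup\{(v,\overline{a})\}}$. You instead prove the single identity $f(A\cup\{(v,a)\})=f(A)$ --- the witness $\psi^\star\in\Psi_A$ with $\psi^\star[v]=a$ gives one inequality, the nesting of feasible sets gives the other --- and then read off the inclusion uniformly, with no case split and no reliance on Proposition~\ref{prop:opt-remain1} (whose own proof the paper omits). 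This buys two things: it makes explicit the value-preservation fact that silently underlies the paper's ``merely removes'' step, and it handles the degenerate situation $v\in\mathcal{C}(A)$, where $A\cup\{(v,\overline{A[v]})\}$ would not even be a legal assignment under the paper's ``no multiple assignments to the same gene'' convention --- a corner the paper's proof passes over. The paper's version is shorter and leans on the already-stated Proposition~\ref{prop:opt-remain1}; yours is more elementary and closer to first principles. Both are sound.
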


\subsection{Epistasis}
To investigate the properties of linkage, here we give the definition of epistasis used in this paper with previously defined mathematical notation.

\begin{definition}  Epistasis $S \Rightarrow v$\\
	Suppose that $S \subseteq V-\{v\}$ and $v \in V$. 
	$S$ is an \textbf{$|S|$-epistasis} to $v$, or $S$ is \textbf{$|S|$-epistatic} to $v$, if and only if $\forall s \in S$, there exists an assignment $A$ with $\mathcal{C}(A)=S$, such that $\Psi_A[v] \neq \Psi_{A-\{(s,a)\}}[v]$, where $(s,a) \in A$. $|S|$ is the \textbf{order} of the epistasis. Such a relation is denoted by $S \Rightarrow v$.

	We further categorize \textbf{order-1} epistases, $\{u\} \Rightarrow v$, into strict and non-strict ones:
	\begin{itemize}
	\item $u\rightarrow v$\\
	This is the case for $\Psi_{\{(u, g)\}}[v] \cap \Psi_{\{(u, \overline{g})\}}[v] = \phi$. In other words, $\Psi_{\{(u, \overline{g})\}}[v] = \{\overline{g[v]}\}$ (note that $\Psi_{\{(u, g)\}}[v] = \{g[v]\}$ by Proposition~\ref{prop:opt}). We say that $u$ is \textbf{strictly} epistatic to $v$.

	\item $u\dashrightarrow v$\\
	This is the case for 
	$\Psi_{\{(u, g)\}}[v] \cap \Psi_{\{(u, \overline{g})\}}[v] \neq \phi$. In other words, $\Psi_{\{(u, \overline{g})\}}[v] = \{g[v], \overline{g[v]}\}$. We say that $u$ is \textbf{non-strictly} epistatic to $v$.
	\end{itemize}

\end{definition}

Note that in the above definition, an empty set $\phi$ is not a 0-epistasis to any gene since it does not alter the optimal pattern at any locus. In short, $\forall v \in V,\; \phi \not \Rightarrow v $. In an 8-bit \textsc{CTrap}, where [0-1-2-3] and [4-5-6-7] are two subproblems, $\{4,5,6\}$ is 3-epistatic to $7$. Note that the definition represents a many-to-one relation. One might think that a many-to-many relation would be more general. Nevertheless, stating that $\{a, b\}$ is epistatic to $\{c, d\}$ is semantically equivalent to stating that $\{a, b\}$ is epistatic to $c$ and $\{a, b\}$ is epistatic to $d$.

There are many different possible ways to define epistasis to describe the concept of linkage, and each has its own benefits and drawbacks. Our definition is based on optimal patterns instead of based on a population of chromosomes with high fitness. One of the benefits is that such a definition is not related to any algorithms but to the problem itself, while the drawback is that we lose the possible benefits from the law of large numbers. 

\begin{table}[t]

\centering
	
\caption{An example of epistasis where the assignment is not complementary to the global optimum. In this example, $\{a,b\} \Rightarrow c$ since $\Psi_{\{(a,0),(b,1)\}}[c] = \{0\}$, while $\Psi_{\{(a,0)\}}[c] = \{1\}$ and $\Psi_{\{(b,1)\}}[c] = \{1\}$.}\label{tbl:top5}

\begin{tabular}{p{4.3cm}cp{4.3cm}}
&	
\begin{tabular}{cccc}
\toprule
\textbf{a} & \textbf{b} & \textbf{c} & \textbf{Fitness} \\ 
\midrule
1 & 1 & 1 & 10 \\
0 & 0 & 1 & 9  \\
1 & 0 & 1 & 8  \\
0 & 1 & 0 & 7  \\
0 & 1 & 1 & 6  \\
\multicolumn{3}{c}{others} & $<6$ \\ 
\botrule
\end{tabular}
& \\
\end{tabular}
\end{table}

Also, the definition does not require the assignments to be complementary to the global optimum, \textit{i.e.}, for $\overline{g[v]} \in \Psi_A[v]$, $A$ is not necessarily $\{( \mathcal{C}(A), \overline{g})\}$. The benefit is that such a definition is more general. Table~\ref{tbl:top5} illustrates an example of a 3-bit problem where the top-5 best chromosomes are shown. In the example, $\{a, b\}$ is epistatic to $c$ due to the assignment $a$ to 0 and $b$ to 1, while setting both $a$ and $b$ to 0 does not cause the optimal assignment at $c$ to be 0. Finally, our definition requires each locus in $S$ to be either essential to cause $v$ incorrectly assigned to mislead the algorithm or essential to cause $v$ correctly assigned such that the global optimum can be constructed. In other words, we disallow \textit{hitchhikers} in the epistasis: $S \Rightarrow v$ does not imply that  $\exists T \neq \phi, (S\cup T) \Rightarrow v$.

Next, we define strong and weak epistases as follows.
\begin{definition}Strong and weak epistases\\
	Given $S \Rightarrow v$, where $|S| \geq 2$, we call the epistasis \textbf{strong} if $\forall \text{nonempty }T\subset S$, $T \Rightarrow v$. The epistasis is \textbf{weak} if $\,\forall T\subset S$, $T \not\Rightarrow v$. For convenience, we define every 1-epistasis as strong.
\end{definition}
\noindent Note that the above definition is not a dichotomy. Strong and weak epistases merely represent two extreme cases. Most epistases are neither strong nor weak. In $ctrap_4$, the epistasis  $\{0,1,2\} \Rightarrow 3$ is strong since $\{0\} \Rightarrow 3$, $\{1\} \Rightarrow 3$, $\ldots$, $\{0,1\} \Rightarrow 3$, $\{0,2\} \Rightarrow 3$, and so on. In the example of Table~\ref{tbl:top5}, the epistasis $\{a,b\} \Rightarrow c$ is weak, because $\{a\} \not\Rightarrow c$ and $\{b\} \not\Rightarrow c$.

Intuitively, strong epistases easily mislead the direction of optimization, but are also easily detectable. On the other hand, weak epistases may not greatly affect the optimization, and observing such epistases resembles finding the needle in a haystack. In general, a weak epistasis $S \Rightarrow v$ can be observed only when certain patterns of size $|S|$ and the corresponding patterns deviating from each pattern with one bit exist in the population. In addition, such an observation is easily disrupted. If any of the alleles in $S$ is changed by crossover or mutation, the relation is not observable again. The argument here resembles the schema disruption by recombination in Holland's schema theorem~\cite{Holland:1975}. Therefore, the probability of observing a specific weak epistasis negatively relates to its order and generation and positively relates to the size of the initial population.

The following problem exemplifies the above argument of the observability of weak epistases. Consider a slightly modified \textsc{OneMax} problem:
\begin{linenomath*}
\begin{equation*}
\textsc{OneMax}'(\vec x)  = \begin{cases}
	1.5 & \text{if all genes are 0}\\
	\textsc{OneMax}(\vec x) & \text{otherwise}\\
\end{cases}
\end{equation*}	
\end{linenomath*}
In the 100-bit \textsc{OneMax}$'$, $\{0,1,\ldots, 98 \} \Rightarrow 99$ is a weak $99$-epistasis. Such an epistasis is only observable when all genes are 0. Even if it exists in the population, most crossover or mutation would disrupt the pattern and make it unobservable. 

\begin{figure}
\centering
\includegraphics[width=0.8\columnwidth, trim={18mm 0mm 26mm 7mm}, clip]{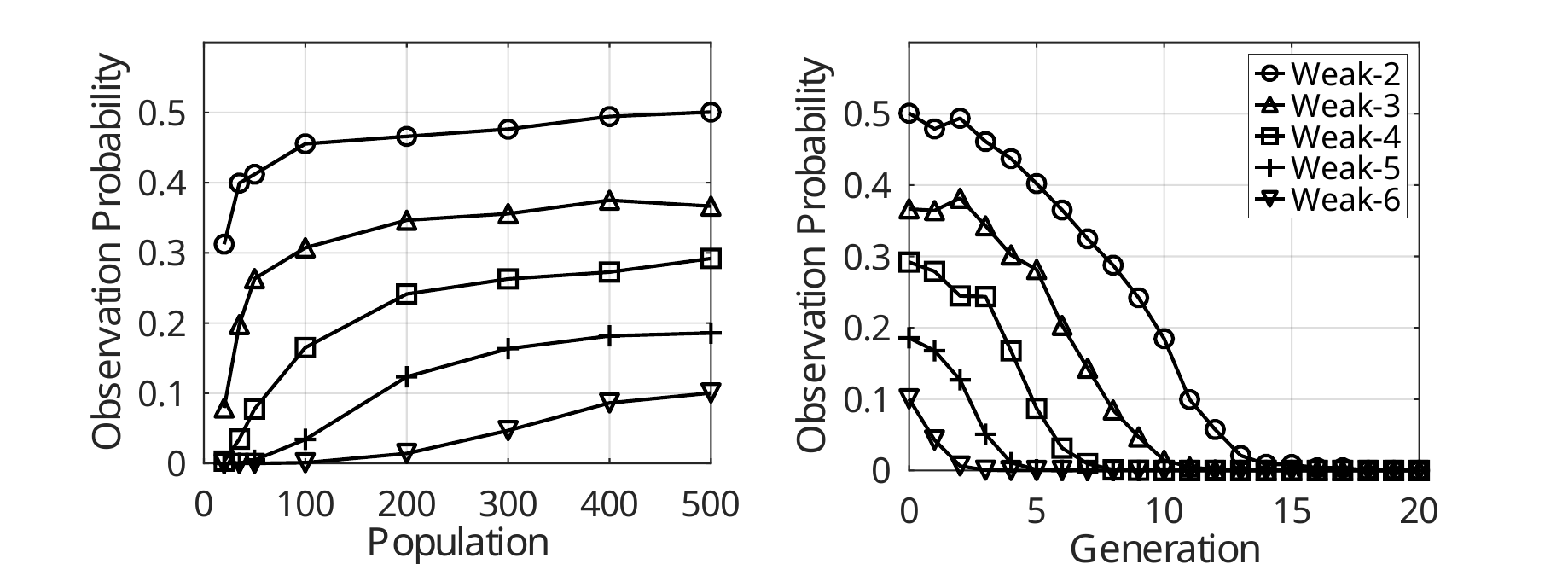}
\caption{Probability of observing a specific weak epistasis in different initial populations and in different generations. The left is at generation 0 (initial population); and the right is with a population of size 500.}
\label{fig:weak}
\end{figure}

As a result, we argue that weak epistases do not affect the evolutionary process by much since they are not easily observable. To make our point, we conduct the following experiment. The problem is of 25 bits with fitness defined as $\textsc{OneMax}'(x_0,\ldots, x_2)$ + $\textsc{OneMax}'(x_3,\ldots, x_6)$ + $\textsc{OneMax}'(x_7,\ldots, x_{11})$ + $\textsc{OneMax}'(x_{12},\ldots, x_{17})$ + 
$\textsc{OneMax}'(x_{18},\ldots, x_{24})$. The first block consists of three weak epistases of order 2 ($\{0,1\} \Rightarrow 2$, $\{0,2\} \Rightarrow 1$, and $\{1,2\} \Rightarrow 0$), and likewise the last block consists of seven weak epistases of order 6. We then perform a simple GA on the problem with binary tournament selection, uniform crossover with probability 0.9, and bit-wise mutation with  probability 0.01~\cite{Goldberg:1989:ga-book}. To obtain stable results, all probabilities are averaged over 1000 independent runs. As shown on the left of Figure~\ref{fig:weak}, the higher order of an epistasis is, the lower the probability is for it to be observed in the initial population. Also, such a probability increases with the  population size. When the population is sufficiently large, the probability converges to a stable value, but still not 1 due to cross competition with other blocks. Such a probability drops rapidly with GA operators over generation as shown on the right of Figure~\ref{fig:weak}.

Here, we assume that the problem-to-discuss does not contain weak epistasis (Assumption~\ref{asm:no-weak}) for the rest of this paper. With the assumption, any composition of non-epistatic relations remains non-epistatic (Proposition~\ref{prop:composition2}).

\begin{assumption}
The target problem does not contain weak epistasis.\label{asm:no-weak}
\end{assumption}

\begin{proposition}
	If every 2-epistasis to $v$ is non-weak, {\normalfont (}$ \{a\} \not\Rightarrow v $ \normalfont{and} $\{b\} \not\Rightarrow  v${\normalfont )} implies $\{a,b\} \not\Rightarrow v$.\label{prop:composition}	
	\begin{proof}
		Since every 2-epistasis to $v$ is non-weak,\\
		$\{a,b\} \Rightarrow v$ implies ($\{a\} \Rightarrow v$ or $\{b\} \Rightarrow v$), which is the contrapositive of the statement.
	\end{proof}
\end{proposition}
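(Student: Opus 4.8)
The plan is to prove the contrapositive: assuming $\{a,b\} \Rightarrow v$, I want to conclude that $\{a\} \Rightarrow v$ or $\{b\} \Rightarrow v$. Granting this, taking the contrapositive immediately yields the stated implication, since the negation of ``$\{a\} \Rightarrow v$ or $\{b\} \Rightarrow v$'' is precisely ``$\{a\} \not\Rightarrow v$ and $\{b\} \not\Rightarrow v$''.

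First I would unpack what ``non-weak'' means for a 2-epistasis. By definition, $S \Rightarrow v$ with $|S| \ge 2$ is \emph{weak} iff every proper subset $T \subset S$ satisfies $T \not\Rightarrow v$; hence it is non-weak iff there exists a proper subset $T \subset S$ with $T \Rightarrow v$. Since $\phi \not\Rightarrow v$ always holds (an empty set alters no optimal pattern), such a $T$ must be nonempty. For $S = \{a,b\}$ the only nonempty proper subsets are the singletons $\{a\}$ and $\{b\}$, so ``$\{a,b\} \Rightarrow v$ is non-weak'' is literally the statement ``$\{a\} \Rightarrow v$ or $\{b\} \Rightarrow v$''.

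Putting these together: if $\{a,b\} \Rightarrow v$, then by the hypothesis this 2-epistasis is non-weak, so by the previous paragraph $\{a\} \Rightarrow v$ or $\{b\} \Rightarrow v$, which is what was wanted. There is no real obstacle here — the argument is a direct quantifier manipulation over the definition of weakness — and the only point requiring a moment's care is correctly negating the universal quantifier in the definition of ``weak'' and invoking the convention $\phi \not\Rightarrow v$ so that the witnessing subset is guaranteed to be one of the two singletons rather than the empty set.
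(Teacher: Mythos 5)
Your proposal is correct and follows the same route as the paper: prove the contrapositive by observing that a non-weak 2-epistasis $\{a,b\}\Rightarrow v$ must, by definition (and the convention $\phi\not\Rightarrow v$), have a singleton subset epistatic to $v$. You simply spell out the quantifier negation that the paper leaves implicit.
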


\begin{proposition}
Suppose that every epistasis to $v$ is non-weak. $S \Rightarrow v$ implies that $\exists s \in S$, $\{s\} \Rightarrow v$.\label{prop:at-least-one}
\begin{proof}
	$S\Rightarrow v$ is non-weak, so $\exists S'\subset S, S'\Rightarrow v$. Again, $S'\Rightarrow v$ is non-weak, so $\exists S''\subset S', S''\Rightarrow v$. Eventually, we  have some $s\in S$ such that $\{s\}\Rightarrow v$.
\end{proof}	
\end{proposition}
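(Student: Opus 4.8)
The plan is a straightforward well-founded (strong) induction on the order $|S|$ of the epistasis $S \Rightarrow v$, using non-weakness to descend to a smaller witnessing subset at each step.

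For the base case $|S| = 1$, write $S = \{s\}$; then the conclusion is immediate, since $\{s\} \Rightarrow v$ is literally the hypothesis $S \Rightarrow v$, and no appeal to non-weakness is needed (a $1$-epistasis is declared strong by definition anyway). For the inductive step, suppose $|S| \geq 2$ and $S \Rightarrow v$. By the standing hypothesis this epistasis is non-weak, and ``non-weak'' is exactly the negation of ``$\forall T \subset S,\ T \not\Rightarrow v$'', so there is a proper subset $T \subsetneq S$ with $T \Rightarrow v$. This $T$ is nonempty, because the remark after the definition of epistasis records $\phi \not\Rightarrow v$ for all $v$; hence $1 \le |T| < |S|$. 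Now $T \Rightarrow v$ is again an epistasis to $v$, so it too is non-weak by the standing hypothesis, and the induction hypothesis applied to $T$ gives some $s \in T$ with $\{s\} \Rightarrow v$. Since $T \subseteq S$ we have $s \in S$, finishing the induction.

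I do not anticipate a genuine obstacle here; the only things to be careful about are (i) isolating $|S| = 1$ as the base case, since the strong/weak dichotomy is only defined for orders $\ge 2$ and the descent must terminate there, and (ii) verifying that the extracted subset $T$ is nonempty, which both keeps $T \Rightarrow v$ a bona fide epistasis (so the induction hypothesis is applicable) and ensures $|T| < |S|$ strictly, guaranteeing termination. An equivalent packaging, avoiding explicit induction, is to iterate the non-weakness property to build a strictly descending chain $S \supsetneq S' \supsetneq S'' \supsetneq \cdots$ of subsets each epistatic to $v$; since cardinalities strictly decrease this chain must reach a singleton $\{s\}$ with $\{s\} \Rightarrow v$.
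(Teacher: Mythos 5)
Your proof is correct and takes essentially the same approach as the paper: the paper's argument is exactly your ``equivalent packaging'' — iterating non-weakness to produce a strictly descending chain $S \supsetneq S' \supsetneq S'' \supsetneq \cdots$ of subsets epistatic to $v$ until a singleton is reached. Your explicit strong induction on $|S|$, together with the check that the extracted subset is nonempty (via $\phi \not\Rightarrow v$), is just a more carefully formalized version of the paper's ``eventually'' step.
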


\begin{proposition}
	Suppose that every epistasis to $v$ is non-weak. Given $S \subseteq V$ where $\forall s\in S,\; \{s\} \not\Rightarrow v$,  we have $\forall S' \subseteq S,\; S' \not\Rightarrow v$.\label{prop:composition2}
	\begin{proof}
	Assume there exists $S'\subseteq S,\; S' \Rightarrow v$. By Proposition~\ref{prop:at-least-one}, $\exists s\in S'$, $\{s\} \Rightarrow v$. Since $s \in S$, that leads to a contradiction.
	\end{proof}
\end{proposition}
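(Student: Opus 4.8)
The plan is to argue by contradiction, reducing the statement to the descent lemma already proved. Suppose, contrary to the claim, that there is some $S' \subseteq S$ with $S' \Rightarrow v$. The hypothesis ``every epistasis to $v$ is non-weak'' is precisely what powers Proposition~\ref{prop:at-least-one}, so I would invoke it directly: $S' \Rightarrow v$ forces the existence of a singleton $\{s\} \subseteq S'$ with $\{s\} \Rightarrow v$. Since $S' \subseteq S$, this $s$ lies in $S$, contradicting the standing assumption that $\{s\} \not\Rightarrow v$ for every $s \in S$. That closes the argument.

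If one prefers to avoid citing Proposition~\ref{prop:at-least-one} and build the proof from scratch, I would split on $|S'|$. The cases $|S'| = 0$ and $|S'| = 1$ are immediate: $\phi \not\Rightarrow v$ for every $v$ by the remark following the definition of epistasis, and $\{s\} \not\Rightarrow v$ is part of the hypothesis. For $|S'| \geq 2$, I would run a finite descent: because every epistasis to $v$ is non-weak and (crucially) $\phi$ is never an epistasis, $S' \Rightarrow v$ yields a \emph{nonempty} proper subset $S'' \subset S'$ with $S'' \Rightarrow v$; then iterate on $S''$. Since $|S'|$ is finite and strictly decreases at each step while remaining at least $1$, the chain terminates at a singleton, which reproduces the conclusion of Proposition~\ref{prop:at-least-one} and yields the same contradiction.

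The only point that deserves care — and the reason Proposition~\ref{prop:at-least-one} is phrased as it is — is the well-foundedness of this descent: one must know that the subset produced at each stage is nonempty (otherwise the ``descent'' could reach $\phi$ and be vacuous) and strictly smaller (so the process halts). Both facts are free here: $\phi \not\Rightarrow v$ rules out the empty subset, and ``weak'' is defined through \emph{proper} subsets, so each step strictly shrinks the carrier. Beyond that there is essentially no obstacle; the proposition is just a repackaging of the descent lemma together with the hypothesis on $S$, so I expect the formal proof to be two or three lines.
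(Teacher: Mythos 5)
Your first paragraph is exactly the paper's proof: assume $S' \subseteq S$ with $S' \Rightarrow v$, apply Proposition~\ref{prop:at-least-one} to extract $s \in S' \subseteq S$ with $\{s\} \Rightarrow v$, and contradict the hypothesis. The alternative descent argument you sketch is just an unpacking of Proposition~\ref{prop:at-least-one} itself, so nothing new is needed; your proof is correct and matches the paper's.
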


The next two propositions state that the existence of epistasis is essential to make a constrained optimum differ from the global optimum.

\begin{proposition}
	$\forall A, \forall v\not\in\mathcal{C}(A), \; \overline{g[v]} \in \Psi_A [v]$ implies $\exists S \subseteq \mathcal{C}(A), S \Rightarrow v$. Furthermore, if no weak epistasis exists in the problem, the above condition implies $\exists s \in \mathcal{C}(A), \{s\} \Rightarrow v.$\label{prop:someone}
\begin{proof}
Let $A_{\min}$ be the minimum subset of $A$ such that $\overline{g[v]} \in \Psi_{A_{\min}} [v]$. $A_{\min}$ is nonempty because $\Psi_{\phi} [v] = \{g[v]\}$. Since $A_{\min}$ is minimum, dropping any one more gene assignment makes the constrained optima different: $\forall (s,a) \in A_{\min}$, $\Psi_{A_{\min}} [v] \neq \Psi_{A_{\min} -\{(s,a)\}} [v]$. This fits the definition of epistasis, and hence we have $\mathcal{C}(A_{\min}) \Rightarrow v$, which makes the first claim of the proposition. The second claim is straightforward following Proposition~\ref{prop:at-least-one}.
\end{proof}	
\end{proposition}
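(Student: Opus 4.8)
The plan is to extract a minimal ``culprit'' sub-assignment and show it is precisely an epistasis to $v$. First I would let $A_{\min}$ be a minimum-cardinality subset of $A$ (recall an assignment is a set of gene assignments) for which $\overline{g[v]} \in \Psi_{A_{\min}}[v]$; such a subset exists because $A$ itself qualifies by hypothesis. I would then record that $A_{\min}$ is nonempty: by Proposition~\ref{prop:opt} applied with $S=\phi$ we have $\Psi_{\phi}[v]=\{g[v]\}$, so the empty assignment cannot witness $\overline{g[v]}$.

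Next I would exploit minimality. For every gene assignment $(s,a)\in A_{\min}$, the set $A_{\min}-\{(s,a)\}$ is a proper subset of $A_{\min}$, so by minimality $\overline{g[v]}\notin\Psi_{A_{\min}-\{(s,a)\}}[v]$; since every $\Psi$-value is a nonempty subset of $\{0,1\}$, this forces $\Psi_{A_{\min}-\{(s,a)\}}[v]=\{g[v]\}$. But $\overline{g[v]}\in\Psi_{A_{\min}}[v]$, hence $\Psi_{A_{\min}}[v]\neq\Psi_{A_{\min}-\{(s,a)\}}[v]$. Since $v\notin\mathcal{C}(A)\supseteq\mathcal{C}(A_{\min})$, the set $S:=\mathcal{C}(A_{\min})$ satisfies $S\subseteq V-\{v\}$, and the single assignment $A_{\min}$ simultaneously witnesses the required change in $\Psi[v]$ upon dropping each $(s,a)\in A_{\min}$. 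This is (more than) what the definition of epistasis demands, so $S\Rightarrow v$ with $S\subseteq\mathcal{C}(A)$, establishing the first claim.

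For the second claim, assume the problem contains no weak epistasis, in particular no weak epistasis to $v$. Applying Proposition~\ref{prop:at-least-one} to $S=\mathcal{C}(A_{\min})\Rightarrow v$ yields some $s\in\mathcal{C}(A_{\min})$ with $\{s\}\Rightarrow v$; since $\mathcal{C}(A_{\min})\subseteq\mathcal{C}(A)$, this $s$ lies in $\mathcal{C}(A)$, as required.

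The only delicate point is the verification in the first claim that the minimal culprit set satisfies the epistasis definition verbatim: one must check that a \emph{single} assignment $A_{\min}$ serves as the witness for \emph{every} locus it covers (the minimality argument gives exactly this), and separately rule out $A_{\min}=\phi$ via Proposition~\ref{prop:opt}. Everything else is routine bookkeeping with the already-established monotonicity facts about $\Psi$.
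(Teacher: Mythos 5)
Your proof is correct and follows essentially the same route as the paper's: extract a minimum sub-assignment $A_{\min}$ witnessing $\overline{g[v]}\in\Psi_{A_{\min}}[v]$, rule out $A_{\min}=\phi$ via $\Psi_{\phi}[v]=\{g[v]\}$, use minimality to show dropping any $(s,a)$ changes $\Psi[\cdot][v]$ so that $\mathcal{C}(A_{\min})\Rightarrow v$, and finish with Proposition~\ref{prop:at-least-one}. Your extra checks (that a single witness $A_{\min}$ serves all loci in its coverage, and that $v\notin\mathcal{C}(A_{\min})$) are points the paper leaves implicit, but the argument is the same.
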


\begin{proposition}
	For all assignments $A$ and $R$, $\Psi_{A \cup R} \neq \Psi_{R} $ implies $\exists (s,a)\in A, \overline{a} \in \Psi_{R}[s]$.\label{prop:alter}
\begin{proof}
	Assume the conclusion is not true: $\forall (s,a)\in A, \Psi_{R}[s] = \{a\}$. By Proposition~\ref{prop:opt-remain1},  $\Psi_{A \cup R} = \Psi_{R} $, which leads to a contradiction.
\end{proof}	
\end{proposition}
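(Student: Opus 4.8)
The plan is to prove the contrapositive: assume that for every $(s,a)\in A$ we have $\overline{a}\notin\Psi_R[s]$, and show that then $\Psi_{A\cup R}=\Psi_R$, which contradicts the hypothesis $\Psi_{A\cup R}\neq\Psi_R$.

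First I would rewrite the negated conclusion in a usable form. For any assignment $R$ and any locus $v$, $\Psi_R[v]$ is a nonempty member of $\{\{0\},\{1\},\{0,1\}\}$ (nonempty because $\Psi_R$ always contains at least one constrained optimum). Consequently, for each $(s,a)\in A$, the statement ``$\overline{a}\notin\Psi_R[s]$'' is equivalent to ``$\Psi_R[s]=\{a\}$''. That is precisely the premise required by Proposition~\ref{prop:opt-remain1}: imposing a constraint $(s,a)$ that already matches the unique constrained-optimal allele at $s$ does not change the constrained optima.

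Next I would remove the gene assignments of $A$ one at a time and track $\Psi$ by induction. Write $A=\{(s_1,a_1),\dots,(s_k,a_k)\}$, put $R_0=R$ and $R_i=R_{i-1}\cup\{(s_i,a_i)\}$ so that $R_k=A\cup R$. The inductive claim is $\Psi_{R_i}=\Psi_R$ for all $i$; the base case $i=0$ is immediate. For the step, the inductive hypothesis gives $\Psi_{R_{i-1}}[s_i]=\Psi_R[s_i]=\{a_i\}$, so Proposition~\ref{prop:opt-remain1}, applied with ``$A$'' taken to be $R_{i-1}$ and ``$(v,a)$'' taken to be $(s_i,a_i)$, yields $\Psi_{R_i}=\Psi_{R_{i-1}\cup\{(s_i,a_i)\}}=\Psi_{R_{i-1}}=\Psi_R$. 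Hence $\Psi_{A\cup R}=\Psi_{R_k}=\Psi_R$, the desired contradiction.

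There is no substantive obstacle here; the argument is essentially a single invocation of Proposition~\ref{prop:opt-remain1} iterated over the gene assignments of $A$. The only points worth stating carefully are (i) why the nonemptiness of $\Psi_R[s]$ lets us upgrade ``$\overline a\notin\Psi_R[s]$'' to the equality ``$\Psi_R[s]=\{a\}$'' needed by Proposition~\ref{prop:opt-remain1}, and (ii) that each intermediate $R_i$ is still a well-formed assignment (no locus assigned twice), which holds automatically since $R_i\subseteq A\cup R$ and $A\cup R$ is assumed to be a valid assignment. If one prefers to avoid writing out the induction, the same conclusion follows by observing that $A\cup R$ differs from $R$ only by constraints that each already agree with $\Psi_R$, so repeated application of Proposition~\ref{prop:opt-remain1} collapses $\Psi_{A\cup R}$ onto $\Psi_R$ — which is the compressed form of the argument.
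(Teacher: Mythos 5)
Your proof is correct and follows essentially the same route as the paper's: negate the conclusion, observe that nonemptiness of $\Psi_R[s]$ upgrades $\overline{a}\notin\Psi_R[s]$ to $\Psi_R[s]=\{a\}$, and apply Proposition~\ref{prop:opt-remain1} to conclude $\Psi_{A\cup R}=\Psi_R$. The only difference is that you make explicit the induction over the gene assignments of $A$ that the paper compresses into a single invocation of Proposition~\ref{prop:opt-remain1}; this is a welcome clarification, not a different argument.
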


In general, deciding all epistatic relations costs $2^\ell$ function evaluations (full enumeration of the search space). If the global optimum is known in advance, a specific $k$-epistatic relation can be decided in $\Theta(2^k)$ evaluations. Capturing all $k$-epistases costs $\Theta(C^\ell_k)$ evaluations since all combinations of $k$ loci need to be considered. If $k$ is not known beforehand, capturing all epistases requires $\Theta(C^\ell_2 + C^\ell_3 + \ldots + C^\ell_\ell) = \Theta(2^\ell)$ function evaluations.  In other words, knowing the global optimum in advance does not reduce the number of evaluations needed. However, most modern MBGAs start with detecting low-order linkage and use that information to build up high-order linkage to reduce evaluations needed for model building. Here we use linkage as a general concept since different MBGAs detect different things, which also differ from the epistasis defined in this paper. 

Proposition~\ref{prop:at-least-one} states that any non-weak epistasis is comprised of at least one order-1 epistasis, which gives the theoretical support for building high-order epistasis from low-order ones. Such a property enables us to investigate the epistatic relations on the level of order-1, which results in the following definition of epistatic graph.
	
\begin{definition} Epistatic graph (EG) and in-closure
	\begin{itemize}
		\item $G = (V,E)$ is called an \textbf{epistatic graph}, where $E =\left\{ (u,v) \mid \{u\} \Rightarrow v\right\}$. \\
		Illustratively, the edge is solid if $u\rightarrow v$; it is dashed if $u\dashrightarrow v$.
		\item $\mathcal{IN}(v) = \{u \mid \{u\} \Rightarrow v \}$; \; $\mathcal{IN}^i(v) = \{u \mid \exists w \in \mathcal{IN}^{i-1}(v) \text{ s.t. } \{u\} \Rightarrow w \}$ for $i > 1$, where $\mathcal{IN}^1(v) = \mathcal{IN}(v)$. For convenience, define $\mathcal{IN}^0(v) = \{v\}$ and overload $\mathcal{IN}(S) = \bigcup_{v\in S}\;\mathcal{IN}(v)$ for $S\subseteq V$.
		\item $\mathcal{IN}^*(v)= \bigcup_{i=0}^{\ell-1} \mathcal{IN}^i(v)$ is the in-closure of $v$. 		
	\end{itemize}	
\end{definition}

Figure~\ref{fig:EG} illustrates the EGs of the six exemplified test problems. The EG of \textsc{OneMax} contains no edges since no epistasis exists between any pair of loci. The EG of \textsc{LeadingOnes} is a directed acyclic graph (DAG) with the adjacency matrix being entirely upper triangular, where a locus is non-strictly epistatic to each latter one. The EGs of \textsc{CTrap} and \textsc{CNiah} are quite similar. They both consist of $m$ disjoint maximal cliques, each corresponding to a subfunction in the fitness function. The only difference is that the epistases in \textsc{CTrap} are strict, while those in \textsc{CNiah} are non-strict; that indicates different degrees of deception of the subfunctions. The EG of \textsc{CycTrap} is interesting. Non-overlapping loci in the same block are epistatic to each other, but not epistatic to the overlapping ones. An overlapping locus is epistatic to non-overlapping loci in its adjacent blocks. Note that while the fitness function seems non-separable, the in-closures are separable. We will discuss this later. Finally, the EG of \textsc{LeadingTraps} consists of both strict and non-strict epistases. Loci in the same block form a maximal clique with strict epistases, while maximal cliques are non-strictly epistatic to others in a similar manner to that in \textsc{LeadingOnes}. The big dashed arrows between blocks in Figure~\ref{fig:EG} mean that each locus in the block is non-strictly epistatic to each locus in the next block.

\begin{figure}
\centering
\subfloat[\textsc{OneMax}]{
	\includegraphics[width=0.22\textwidth]{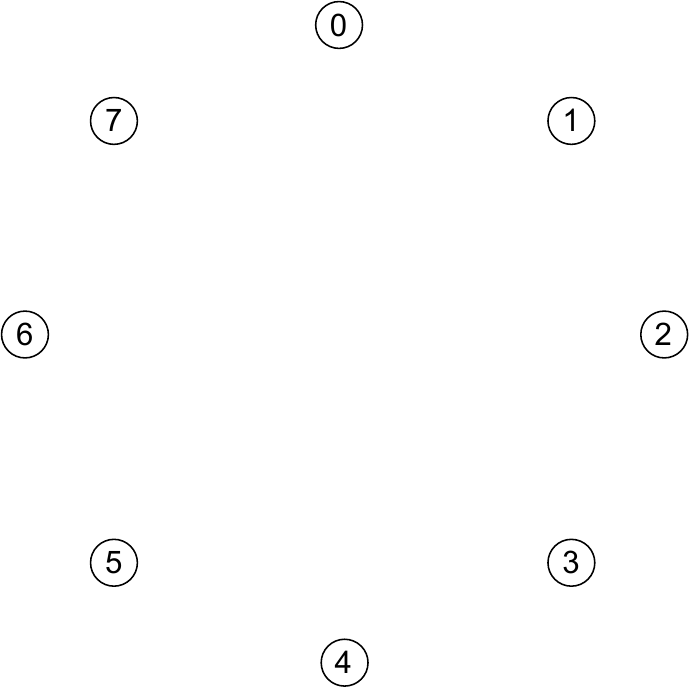}
}
\hfill
\subfloat[\textsc{LeadingOnes}]{
	\includegraphics[width=0.22\textwidth]{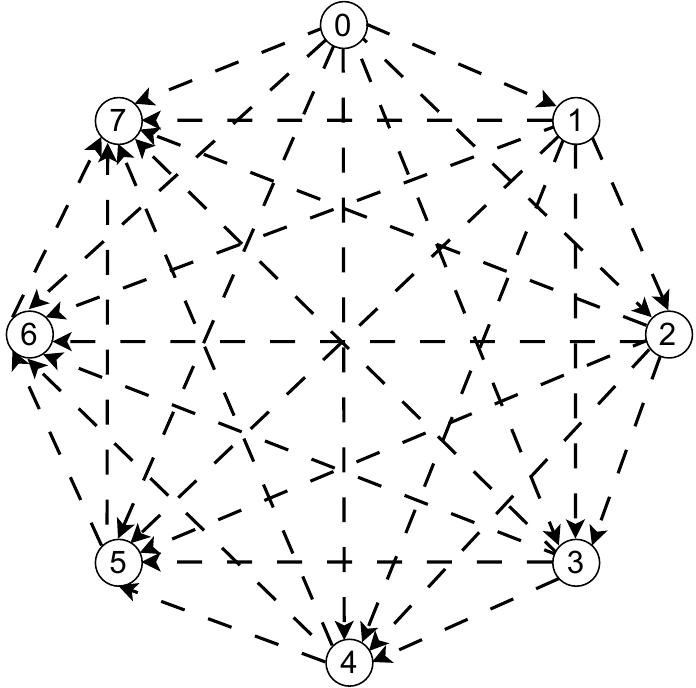}
}
\hfill
\subfloat[\textsc{CTrap}]{
	\includegraphics[width=0.22\textwidth]{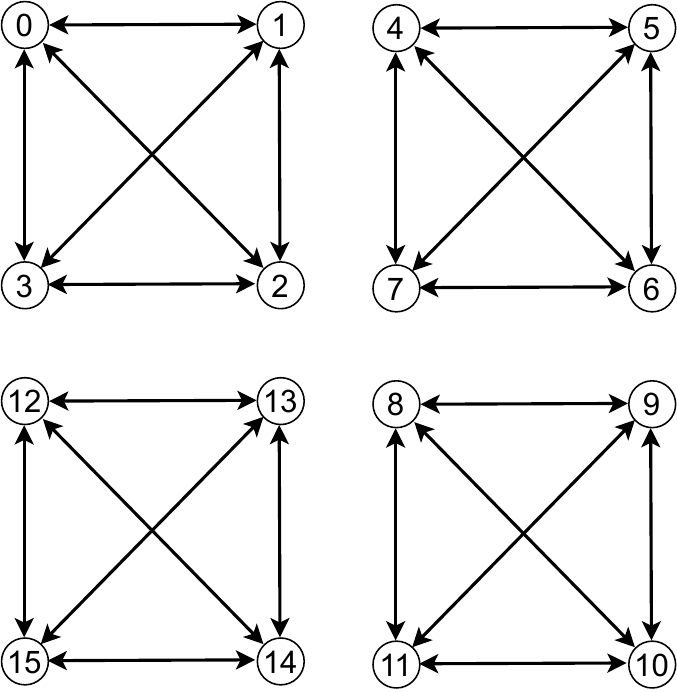}
}
\hfill
\subfloat[\textsc{CNiah}]{
	\includegraphics[width=0.22\textwidth]{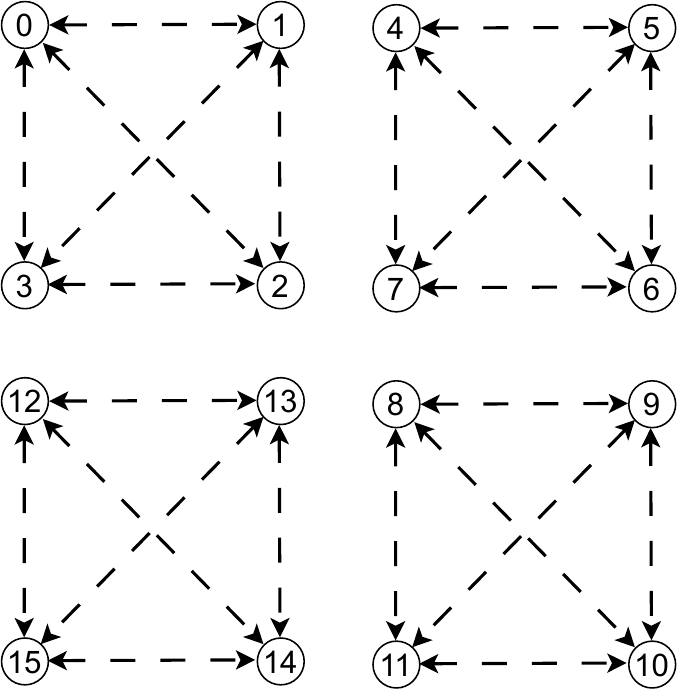}
}\\

\hfill
\subfloat[\textsc{CycTrap}\label{fig:cyc-eg}]{
	\includegraphics[width=0.22\textwidth]{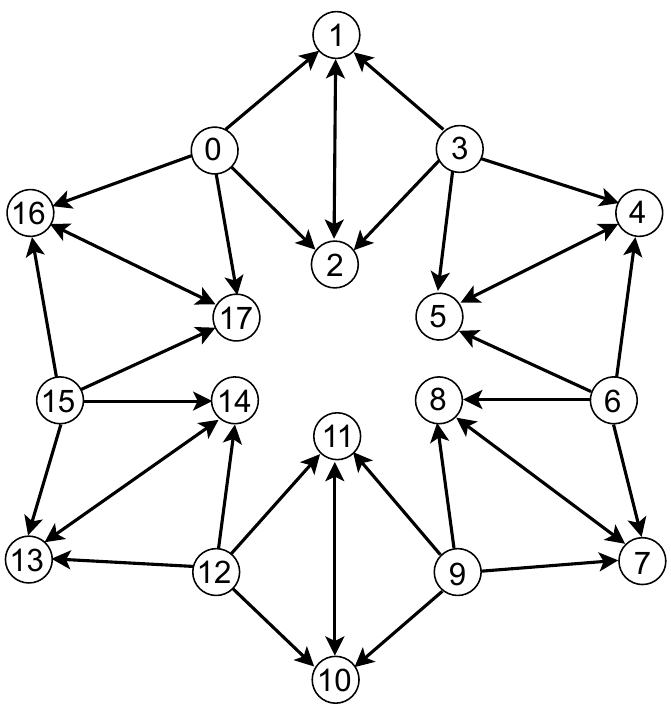}
}
\hfill
\subfloat[\textsc{LeadingTraps}]{
\includegraphics[width=0.22\textwidth]{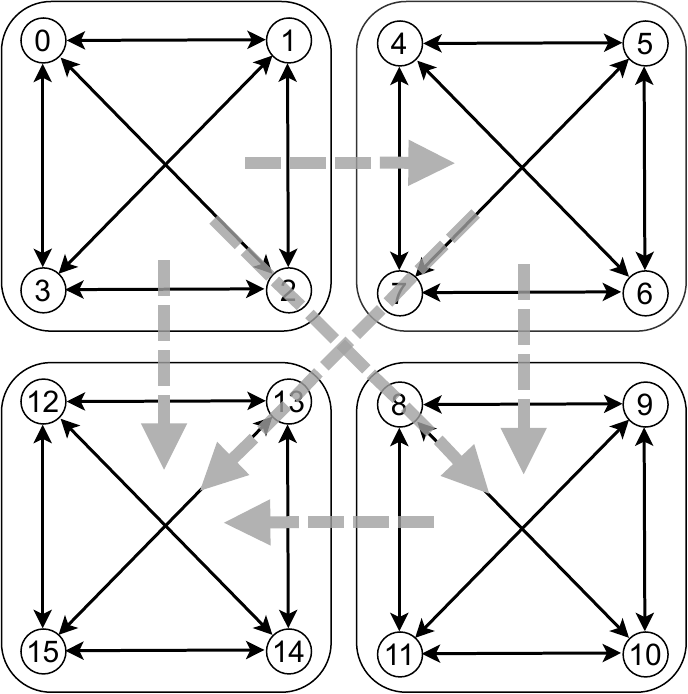}
}
\hfill
\subfloat[\textsc{MaxSat}: \((x_0\vee x_2 \vee x_3) \wedge (x_0 \vee \neg x_3) \wedge (\neg x_0 \vee x_3 \vee x_6) \wedge (\neg x_0 \vee x_5) \wedge (x_1 \vee x_2) \wedge (\neg x_1 \vee \neg x_2 \vee x_7) \wedge (\neg x_2) \wedge (\neg x_2 \vee x_3) \wedge (\neg x_3) \wedge x_4 \wedge (x_5 \vee \neg x_6 \vee x_7) \wedge \neg x_7 \)]{
	\hspace{1.6cm}
	\includegraphics[width=0.22\textwidth]{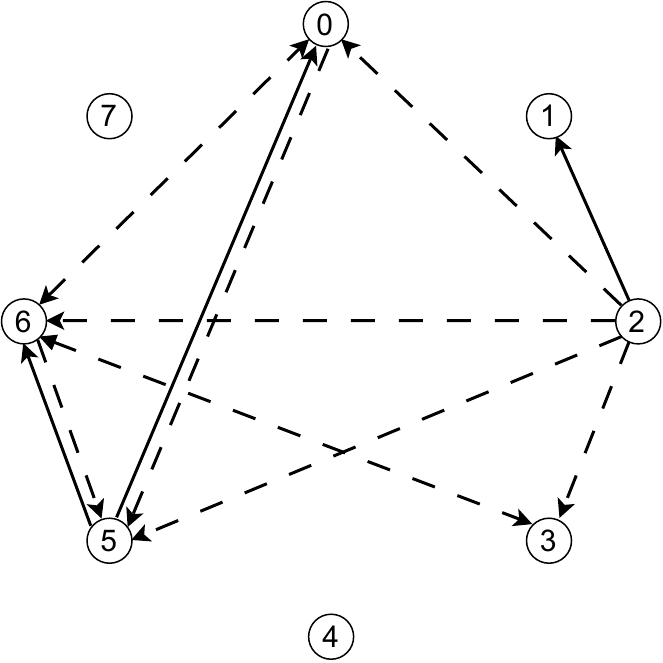}
	\hspace{1.6cm}
}
\caption{Epistasis graphs of the exemplified problems. }
\label{fig:EG}
\end{figure}

An EG is essentially a collection of binary relations defined by epistasis. Adopting binary relations to describe higher-order relations is not new in the field of MBGA. To name a few, the Bayesian network in BOA~\cite{Pelikan:boa:1999}/hBOA~\cite{Pelikan:hboa:2001}/ENBA~\cite{Etxeberria:enba:1999}, the dependency structure matrix in DSMGA-I~\cite{Yu:dsmga:2009}/II~\cite{Hsu:dsmga2:2015}, the linkage tree detection in LT-GOMEA~\cite{Pelikan:pairwise:2011}, and the variable-interaction graph in gray-box optimization~\cite{Whitley:vig:2016} are all binary relations. Their mechanisms/definitions are somewhat different. Here we adopt binary relations with strong reasons, and the definitions enable the derivations in the rest of this paper.

\section{Problem Decomposition and\\Epistatic Graph}
\label{sec:eg}

Modern MBGAs utilize models to achieve problem decomposition. Based on the schema theorem~\cite{Holland:1975}, short, low-order, and highly fit schemata are favored during the GA process and are considered as \textit{building blocks} to be recombined for obtaining potentially better  solutions~\cite{Goldberg:1989:ga-book}. Equipped with the formalized mathematical constructs, established according to the nature and properties of the underlying problem by using EGs, proposed in Section~\ref{sec:EpistasisProperties}, we are now enabled to formalize the concept of building blocks, and on such a ground, whether or how a problem may be decomposed can be discussed and quantified. Thus, we may be able to fill a gap in the theoretical development of GAs in this regard and provide a foundation for further research investigations and GA design guidelines. 

This section proposes the problem decomposition theorem which strongly relates problem decomposition to EGs. Specifically, the in-closures of EGs provide a foundation of atomic components for problem decomposition. Then this section shows that there exists a way of decomposition such that the global optimum can be found in polynomial time if the sizes of the strongly connected components (SCCs) in the EG of the given problem are bounded by the logarithm of the problem size.

\subsection{Stationary Optima}
The nature of GAs rewards promising fragments during recombination by providing the recombined chromosomes with different survival rates according to the fitness function. If we take this idea to the extreme, we can consider those fragments that do not reduce fitness no matter which chromosomes they are pasted to. We call such a fragment a stationary optimum formally defined as follows.
\begin{definition}
	A \textbf{stationary optimum} (SO) is a nonempty assignment $A$ with the following property: No matter how one assigns the alleles unspecified by $A$, the gene assignments on the loci specified by $A$ remain \textit{superior} to all other assignments on the same coverage.\label{def:SO} \\	
	Mathematically, for all assignments $A' \neq A$ where $\mathcal{C}(A')=\mathcal{C}(A)$ and for all remaining assignments $R$ where $\mathcal{C}(A) \cup \mathcal{C}(R) = V$, we have $f(A\cup R) > f(A'\cup R)$. 
\end{definition}

An obvious stationary optimum is the global optimum. In general, the stationary optimum may not be unique; however, it is unique under the assumption that the global optimum is unique (Assumption~\ref{asm:unique}). Following Definition~\ref{def:SO}, we can know that for an SO, for any locus, the allele is either unassigned or assigned the allele of the global optimum:
\begin{proposition}
	For any SO $A$, $\forall v, A[v] = g[v]$ or *.
	\label{prop:so-g}
	\begin{proof}
	Suppose that $A[v] = \overline{g[v]}$ for some $v \in V$.\\
	Let $W = V - \mathcal{C}(A)$ be the set of remaining loci. Since $\{(V, g)\}$ is the unique global optimum, $f( \{(\mathcal{C}(A), g)\} \cup \{(W, g)\} ) > f(A \cup \{ (W, g) \})$, which makes $A$ not stationarily optimal.
	\end{proof}
\end{proposition}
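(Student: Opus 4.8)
The plan is to argue by contradiction, the only real ingredient being the uniqueness of the global optimum (Assumption~\ref{asm:unique}). Suppose some SO $A$ violates the claim at a locus $v$; since $A[v]\in\{0,1,*\}$ and we are excluding $g[v]$ and $*$, the only possibility is $A[v]=\overline{g[v]}$. From here I would produce a single competing pair that falsifies the defining inequality of a stationary optimum (Definition~\ref{def:SO}).

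Concretely, let $R=\{(V-\mathcal{C}(A),g)\}$, so that $\mathcal{C}(A)\cup\mathcal{C}(R)=V$ and both $A\cup R$ and $A'\cup R$ are full assignments, hence directly evaluable by $f$. For the competitor on the coverage $\mathcal{C}(A)$, take the correct assignment $A'=\{(\mathcal{C}(A),g)\}$. Because $A$ and $A'$ disagree at $v$, we have $A'\neq A$, so $A'$ is an admissible competitor in Definition~\ref{def:SO}. Moreover $A'\cup R=\{(V,g)\}$ is exactly the global optimum, while $A\cup R$ differs from it at $v$ and is therefore not the global optimum; by Assumption~\ref{asm:unique} this yields $f(A'\cup R)>f(A\cup R)$. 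This directly contradicts the requirement that $f(A\cup R)>f(A'\cup R)$ for every admissible $A'$ and every completing $R$. Hence no such $v$ exists, and $A[v]\in\{g[v],*\}$ for all $v$.

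There is essentially no hard step; the work is purely bookkeeping — checking that the constructed $R$ completes the coverage, that $A'\neq A$ so the SO inequality actually applies to this pair, and that $A\cup R\neq\{(V,g)\}$ so that uniqueness gives a \emph{strict} inequality — all of which follow at once from $A[v]=\overline{g[v]}$. If one prefers not to mention full assignments explicitly, the identical argument goes through using the convention $f(A)=f(\psi)$ for $\psi\in\Psi_A$, since the relevant assignments here are full and $\Psi$ is a singleton on them.
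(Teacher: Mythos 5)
Your proof is correct and follows essentially the same route as the paper's: take the completion $R=\{(V-\mathcal{C}(A),g)\}$ and the competitor $A'=\{(\mathcal{C}(A),g)\}$, and use uniqueness of the global optimum to get the strict inequality that contradicts Definition~\ref{def:SO}. The extra bookkeeping you spell out (that $A'\neq A$ and that $A\cup R$ is not the global optimum) is implicit in the paper's version but adds nothing new in substance.
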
	

Conceptually, SOs provide as fragments of problem decomposition in optimization. The definition of SOs makes they can be optimized separately. Once the loci in an SO are correctly assigned, the epistases in the remaining subproblem remain unchanged. We call the next theorem the separation property of SOs.

\begin{theorem}[Separation property of stationary optima]
Suppose that assignment $A$ is an SO. $\forall R \cap A = \phi$,
	$\psi_R = \psi_{R\cup A}$.\label{thm:sep}
\begin{proof}
Since $A$ is stationarily optimal, $\forall R \cap A = \phi$ and $\forall s\in A$, we have $\psi_R[s] = \{g[s]\}$. By Proposition~\ref{prop:opt-remain1}, we have $\psi_R = \psi_{R\cup\{(s,g)\}}$. We can then continue to union the gene assignments in $A$, and eventually we have $\psi_R = \psi_{R\cup \{(\mathcal{C}(A),g)\}} = \psi_{R\cup A}$.
\end{proof}
\end{theorem}

The separation property states that any constrained optimum remains unchanged with additional constraint $\{(\mathcal{C}(A),g)\}$ as long as $A$ is stationarily optimal. Since epistasis is defined based on constrained optima, epistatic relations remain unchanged with a stationarily optimal assignment, and so does the EG.

\begin{corollary}
Suppose that assignment $A$ is an SO of the target problem. The EG of the subproblem with the loci in $\mathcal{C}(A)$ being correctly assigned is the subgraph defined on $V-\mathcal{C}(A)$ of the EG of the target problem. \label{coro:sep}
\end{corollary}

Next, we derive that in-closures of any loci are SOs if the underlying problem does not contain weak epistasis.

\begin{lemma}
	If the problem does not contain weak epistasis, for all $v \in V$, the assignment $A = \{ \mathcal{(IN}^*(v), g)\}$ is stationarily optimal.\label{lemma:in-so}
	\begin{proof}
		Suppose that $A$ is not stationarily optimal.\\
		There exists an assignment $R$ to loci unspecified by $A$, \textit{i.e.}, $\mathcal{C}(R) \cap \mathcal{IN}^*(v) = \phi$, such that the constrained optima with constraint $R$ differ from $A$ at some loci specified in $A$. Let $w$ be one such locus: $w \in \mathcal{C}(A)$  and $\Psi_R[w] \neq \{A[w]\}$. Since $w$ is assigned to $g[w]$ in $A$, we have $\overline{g[w]} \in \Psi_R[w]$.

		Given that all epistases in the problem are non-weak, by Proposition~\ref{prop:someone}, $\exists r \in \mathcal{C}(R)$, $\{r\}\Rightarrow w$. Since $w \in \mathcal{IN}^*(v)$ and $r \not \in \mathcal{IN}^*(v)$, that leads to a contradiction.
	\end{proof}
	
\end{lemma}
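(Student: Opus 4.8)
The plan is to unfold the definition of stationary optimum and reduce it to a statement purely about constrained optima. Fix $v$, write $I = \mathcal{IN}^*(v)$, and let $A = \{(I,g)\}$, so $\mathcal{C}(A) = I$. By Definition~\ref{def:SO} I must show that for every remaining assignment $R$ with $\mathcal{C}(R) = V - I$ and every $A' \neq A$ with $\mathcal{C}(A') = I$, we have $f(A\cup R) > f(A'\cup R)$. The one fact I would isolate and prove is the \emph{key claim}: for every such $R$ and every $w \in I$, $\Psi_R[w] = \{g[w]\}$. Granting the claim, note that on $\mathcal{C}(R)$ every constrained optimum of $R$ equals $R$, and on $I$ it equals $g$; since $V = I \cup \mathcal{C}(R)$ and the two coverages are disjoint, $\Psi_R = \{A \cup R\}$, so $f(A\cup R) = f(\Psi_R)$. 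On the other hand, any $A' \neq A$ on coverage $I$ assigns $\overline{g[w]}$ at some $w \in I$; since $\Psi_R[w] = \{g[w]\}$, the chromosome $A' \cup R$ is not a constrained optimum of $R$, hence $f(A'\cup R) < f(\Psi_R) = f(A\cup R)$. This yields stationarity.

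To prove the key claim I would argue by contradiction. Suppose $\overline{g[w]} \in \Psi_R[w]$ for some $w \in I$; note $w \notin \mathcal{C}(R)$ since $\mathcal{C}(R) = V - I$. Because the problem has no weak epistasis, the strengthened (``furthermore'') form of Proposition~\ref{prop:someone} produces an $r \in \mathcal{C}(R)$ with $\{r\}\Rightarrow v'$... more precisely $\{r\}\Rightarrow w$, i.e. $r \in \mathcal{IN}(w)$. Now I invoke that $I = \mathcal{IN}^*(v)$ is closed under the operator $\mathcal{IN}(\cdot)$: if $w \in \mathcal{IN}^i(v)$ then $r \in \mathcal{IN}^{i+1}(v)$, and $\mathcal{IN}^{i+1}(v) \subseteq \mathcal{IN}^*(v)$, because in an $\ell$-vertex graph every directed walk reaching $v$ can be shortened to a simple path of length at most $\ell-1$, so the truncated union defining $\mathcal{IN}^*(v)$ has already stabilized. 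Hence $r \in I$, contradicting $r \in \mathcal{C}(R) = V - I$. This proves the claim and the lemma.

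The step I expect to be the only real subtlety is the closure property of $\mathcal{IN}^*(v)$ under $\mathcal{IN}(\cdot)$ despite the union in its definition being truncated at $\ell-1$; this is a routine reachability fact, but it is exactly what turns ``$r$ is one $\mathcal{IN}$-step past $w$'' into ``$r$ lies in the in-closure'', which is the crux of the contradiction. Everything else is bookkeeping with the definitions of $\Psi$, $\mathcal{C}$, and applying/evaluating assignments, together with a single appeal to Proposition~\ref{prop:someone}. (One can sidestep the truncation issue altogether by characterizing $\mathcal{IN}^*(v)$ as the set of all $u$ from which $v$ is reachable in the epistatic graph, for which closure under $\mathcal{IN}(\cdot)$ is immediate.)
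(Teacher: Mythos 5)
Your proposal is correct and follows essentially the same route as the paper's proof: the crux in both is that a deviation $\overline{g[w]} \in \Psi_R[w]$ at some $w \in \mathcal{IN}^*(v)$ forces, via Proposition~\ref{prop:someone} and the no-weak-epistasis assumption, some $r \in \mathcal{C}(R)$ with $\{r\}\Rightarrow w$, contradicting the closure of $\mathcal{IN}^*(v)$ under $\mathcal{IN}(\cdot)$. You are merely more explicit than the paper about unfolding Definition~\ref{def:SO} into the key claim and about why the truncated union defining $\mathcal{IN}^*(v)$ has stabilized, both of which are fine.
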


Next, we are interested in the \emph{minimum stationary optimum} (MSO) defined as follows.
\begin{definition}
	The \textbf{minimum stationary optimum} of locus $v$, denoted by $\mathcal{M}_{SO}(v)$, is a stationary optimum with $v$ assigned, having the minimum size. 
\end{definition}

For example, in \textsc{OneMax}, $\mathcal{M}_{SO}(v) = \{(v,1)\}$ for all $v\in V$. In the 8-bit \textsc{CTrap}, $\mathcal{M}_{SO}(0) = \mathcal{M}_{SO}(1) = \mathcal{M}_{SO}(2) = \mathcal{M}_{SO}(3) = \{(0,1), (1,1), (2,1), (3,1)\}$, and $\mathcal{M}_{SO}(4) = \mathcal{M}_{SO}(5) = \mathcal{M}_{SO}(6) = \mathcal{M}_{SO}(7) =\{(4,1), (5,1), (6,1), (7,1)\}$. One may think that multiple assignments may fit the definition of $\mathcal{M}_{SO}(v)$. However, the following theorem states that under our assumptions, $\mathcal{M}_{SO}(v)$ is unique for each $v \in V$ since its coverage is the in-closure of $v$. This is our first major result.

\begin{theorem} [Problem Decomposition Theorem]
	If the problem does not contain weak epistasis, $\mathcal{C}(\mathcal{M}_{SO}(v)) = \mathcal{IN}^{*}(v)$ for all $v\in V$.
	\label{thm:m=in}
	\begin{proof}
		
	First, we prove by contradiction the fact that for any stationary optimum $A$, $\forall v\in \mathcal{C}(A), u \not \in \mathcal{C}(A)$, we have $\{u\} \not\Rightarrow v$.
	Suppose that there exist $v\in \mathcal{C}(A)$ and $u \not \in \mathcal{C}(A)$ such that $\{u\} \Rightarrow v$. Because $\Psi_{ \{(u, g) \} }[v] = \{ g[v]\}$ (Proposition~\ref{prop:opt}), by the definition of epistasis, we have $\overline{g[v]} \in \Psi_{ \{(u, \overline{g}) \} }[v]$. We also know that $A[v] = {g[v]}$ by Proposition~\ref{prop:so-g}.
	Since the definition of SO requires the assignment $(v, g[v])$ to be strictly superior, we have a contradiction.
	
	Next, we claim that $\mathcal{IN}^{*}(v) - \mathcal{C}(\mathcal{M}_{SO}(v)) = \phi$. Otherwise, there exist $u \not\in \mathcal{C}(\mathcal{M}_{SO}(v))$ and $w \in \mathcal{C}(\mathcal{M}_{SO}(v))$ such that $\{u\} \Rightarrow w$, which contradicts with the aforementioned fact since $\mathcal{M}_{SO}(v)$ is a stationary optimum. As a result, $\mathcal{IN}^{*}(v) \subseteq \mathcal{C}(\mathcal{M}_{SO}(v))$. On the other hand, we know that $\{ \mathcal{IN}^*(v), g \}$ is an SO by Lemma~\ref{lemma:in-so}, which means $ \mathcal{C}(\mathcal{M}_{SO}(v)) \subseteq \mathcal{IN}^{*}(v)$. Putting them together makes $\mathcal{C}(\mathcal{M}_{SO}(v)) = \mathcal{IN}^{*}(v)$.	
	\end{proof}
\end{theorem}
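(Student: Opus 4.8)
The plan is to establish the set equality $\mathcal{C}(\mathcal{M}_{SO}(v)) = \mathcal{IN}^*(v)$ by two inclusions, leaning on the two earlier-established facts about stationary optima: Proposition~\ref{prop:so-g} (an SO only ever assigns global-optimal alleles) and Lemma~\ref{lemma:in-so} (under the no-weak-epistasis hypothesis, $\{\mathcal{IN}^*(v), g\}$ is itself an SO). The linchpin is a structural observation that I would isolate first: \emph{no stationary optimum can leave out a locus that is order-1 epistatic to a covered locus}. Precisely, for any SO $A$, if $v \in \mathcal{C}(A)$ and $u \notin \mathcal{C}(A)$, then $\{u\} \not\Rightarrow v$. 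I would prove this by contradiction: if $\{u\} \Rightarrow v$, then since $\Psi_{\{(u,g)\}}[v] = \{g[v]\}$ by Proposition~\ref{prop:opt}, the definition of order-1 epistasis forces $\overline{g[v]} \in \Psi_{\{(u,\overline{g})\}}[v]$; but $A[v] = g[v]$ by Proposition~\ref{prop:so-g}, so extending the constraint $\{(u,\overline{g})\}$ to a full assignment exhibits a chromosome on which the allele $\overline{g[v]}$ at $v$ ties or beats $g[v]$ at $v$ — contradicting the strict-superiority requirement in the definition of SO.

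With that fact in hand, the inclusion $\mathcal{IN}^*(v) \subseteq \mathcal{C}(\mathcal{M}_{SO}(v))$ follows by a short propagation argument. Suppose not; then there is some $u \in \mathcal{IN}^*(v) \setminus \mathcal{C}(\mathcal{M}_{SO}(v))$. Since $u$ lies in the in-closure, there is a directed path in the EG from $u$ to $v$, so — walking along that path — there must be a consecutive pair with the tail outside $\mathcal{C}(\mathcal{M}_{SO}(v))$ and the head inside it (the head $v$ of the path is inside, by definition of $\mathcal{M}_{SO}(v)$). That is, $\exists u' \notin \mathcal{C}(\mathcal{M}_{SO}(v))$, $w \in \mathcal{C}(\mathcal{M}_{SO}(v))$ with $\{u'\} \Rightarrow w$, contradicting the structural fact applied to the SO $\mathcal{M}_{SO}(v)$. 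Hence $\mathcal{IN}^*(v) \subseteq \mathcal{C}(\mathcal{M}_{SO}(v))$.

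For the reverse inclusion, Lemma~\ref{lemma:in-so} says $\{\mathcal{IN}^*(v), g\}$ is a stationary optimum, and it obviously assigns $v$; since $\mathcal{M}_{SO}(v)$ is by definition a \emph{minimum-size} SO covering $v$, we get $|\mathcal{C}(\mathcal{M}_{SO}(v))| \le |\mathcal{IN}^*(v)|$. Combined with $\mathcal{IN}^*(v) \subseteq \mathcal{C}(\mathcal{M}_{SO}(v))$ from the previous paragraph, the two sets must coincide: $\mathcal{C}(\mathcal{M}_{SO}(v)) = \mathcal{IN}^*(v)$. (This simultaneously yields uniqueness of $\mathcal{M}_{SO}(v)$, since its coverage is pinned down and, being an SO, Proposition~\ref{prop:so-g} forces every assigned allele to be the global-optimal one.)

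I expect the main obstacle to be the structural fact in the first paragraph — getting the quantifier bookkeeping in the definition of SO to line up with the definition of epistasis. In particular one must be careful that the SO definition demands $f(A \cup R) > f(A' \cup R)$ for \emph{all} completions $R$, whereas epistasis only guarantees the existence of \emph{some} constraint witnessing $\overline{g[v]} \in \Psi_{\{(u,\overline{g})\}}[v]$; the argument works because that single bad witness, once extended arbitrarily to a full assignment, already violates the universally-quantified SO inequality at the locus $v$. The no-weak-epistasis hypothesis is used only indirectly here, via Lemma~\ref{lemma:in-so}; everything else is purely order-1/EG reasoning. The rest is routine.
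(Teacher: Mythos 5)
Your proposal is correct and follows essentially the same route as the paper's proof: the same key structural fact (no SO excludes a locus order-1 epistatic to a covered locus, proved by the same contradiction via Propositions~\ref{prop:opt} and~\ref{prop:so-g}), the same boundary-crossing argument for $\mathcal{IN}^*(v)\subseteq\mathcal{C}(\mathcal{M}_{SO}(v))$, and the same appeal to Lemma~\ref{lemma:in-so} plus minimality for the reverse direction. Your closing of the reverse inclusion via the cardinality bound combined with the forward set inclusion is in fact slightly more careful than the paper's direct assertion, but it is the same argument.
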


Under the assumption that the global optimum is unique, we also know the assignment on $\mathcal{M}_{SO}(v)$.
\begin{corollary}
	$\mathcal{M}_{SO}(v) = \{ (\mathcal{IN}^*(v), g) \}$.
\end{corollary}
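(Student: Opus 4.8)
The plan is to combine Theorem~\ref{thm:m=in} with Proposition~\ref{prop:so-g} to nail down not just the coverage of $\mathcal{M}_{SO}(v)$ but the actual alleles assigned. Theorem~\ref{thm:m=in} already gives $\mathcal{C}(\mathcal{M}_{SO}(v)) = \mathcal{IN}^*(v)$, so the only thing left to verify is that on this coverage the assignment agrees everywhere with the global optimum $g$; that is, $\mathcal{M}_{SO}(v)[w] = g[w]$ for every $w \in \mathcal{IN}^*(v)$.

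First I would invoke Proposition~\ref{prop:so-g}: since $\mathcal{M}_{SO}(v)$ is by definition a stationary optimum, every locus $w$ in its coverage satisfies $\mathcal{M}_{SO}(v)[w] = g[w]$ or $*$. But $w \in \mathcal{C}(\mathcal{M}_{SO}(v))$ means $\mathcal{M}_{SO}(v)[w] \neq *$, so the only remaining possibility is $\mathcal{M}_{SO}(v)[w] = g[w]$. Since this holds for every $w \in \mathcal{IN}^*(v) = \mathcal{C}(\mathcal{M}_{SO}(v))$, we conclude $\mathcal{M}_{SO}(v) = \{(\mathcal{IN}^*(v), g)\}$.

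I should also note, for completeness, why this object is well-defined as ``the'' minimum stationary optimum rather than merely ``a'' minimum-size one: Lemma~\ref{lemma:in-so} guarantees $\{(\mathcal{IN}^*(v), g)\}$ is itself a stationary optimum containing $v$, and the argument above shows any stationary optimum whose coverage contains $v$ and has minimum size must coincide with it, so uniqueness is automatic. Strictly speaking, once Theorem~\ref{thm:m=in} is granted, this corollary is essentially a one-line consequence — the only mild subtlety is making sure the reader sees that ``minimum size'' forces the coverage to be exactly $\mathcal{IN}^*(v)$ and not a proper subset, which is precisely what the theorem supplies by pinning the coverage of \emph{every} locus's MSO. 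There is no real obstacle here; the work has all been done in Lemma~\ref{lemma:in-so} and Theorem~\ref{thm:m=in}, and this corollary simply reads off the alleles via Proposition~\ref{prop:so-g}.
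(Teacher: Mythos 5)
Your proposal is correct and matches the paper's (implicit) justification exactly: the paper leaves this corollary unproved precisely because it follows immediately from Theorem~\ref{thm:m=in} giving the coverage $\mathcal{IN}^*(v)$ and Proposition~\ref{prop:so-g} forcing every assigned locus of a stationary optimum to carry the globally optimal allele. Your added remark on uniqueness via Lemma~\ref{lemma:in-so} is a reasonable bit of bookkeeping but not a departure from the paper's route.
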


We now apply the corollary to our exemplified test problems. In \textsc{OneMax}, each in-closure is simply one single vertex, and hence $\mathcal{M}_{SO}(v)=\{(v,1)\}$. In \textsc{LeadingOnes}, the in-closure of $v$ consists of all its preceding vertices including itself, and hence $\mathcal{M}_{SO}(v)=\{(\{0,1,\ldots, v\},1)\}$.
In \textsc{Ctrap} and \textsc{CNiah}, the in-closures are the maximal cliques, and hence $\mathcal{M}_{SO}(v)=\{ (\{i, i+1, i+2, i+3   \},1)\}$, where $i = 4\lfloor v/4\rfloor$. In \textsc{LeadingTraps}, the in-closures are similar to those in \textsc{LeadingOnes}, but in modulo 4; 
$\mathcal{M}_{SO}(v)=\{ (\{0, 1, \ldots, 4\lfloor v/4\rfloor+3   \},1)\}$. As for \textsc{CycTrap}, the situation is a bit complicated. Taking a look at Figure~\ref{fig:EG}(e), one may think $\mathcal{M}_{SO}(2) = \{(\{0, 1, 2, 3\}, 1)\}$, where the coverage is the in-closure of locus 2, according to the corollary. However, it requires at least three blocks for a set to be stationarily optimal, \textit{e.g.}, $\mathcal{M}_{SO}(2)=\{ (\{0, 1, \ldots, 9\},1)\}$, whose coverage consists of $4+3+3=10$ loci. This is because \textsc{CycTrap} consists of weak epistases of order 2. For example, $\{2,4\}\Rightarrow 3$, while $\{2\}\not\Rightarrow 3$ and $\{4\}\not\Rightarrow 3$. Nevertheless, once we have an MSO (of 10 loci); with additional 3 loci from the adjacent block, it forms another SO (of 13 loci); again, with another additional 3 loci from the adjacent block, it forms yet another SO (of 16 loci). This observation gives a strong hint of problem decomposition for \textsc{CycTrap}, which is later addressed in Section~\ref{sec:eg-summary}.

We have seen that MSOs are disjoint as in \textsc{OneMax}, \textsc{CTrap}, and \textsc{CNiah}, or that one is a subset of another as in \textsc{LeadingOnes} and \textsc{LeadingTraps}. MSOs can overlap as well. Consider the fitness shown in Figure~\ref{fig:abac}. In this problem, $a\rightarrow b$ and $a\rightarrow c$. $\mathcal{M}_{SO}(a) = \{(a,1)\}$, $\mathcal{M}_{SO}(b) = \{(a,1), (b,1)\}$, and $\mathcal{M}_{SO}(c) = \{(a,1), (c,1)\}$. In this case, $\mathcal{M}_{SO}(b)$ and $\mathcal{M}_{SO}(c)$ partially overlap.

\begin{figure}
	\centering
	\hspace{2.5cm}	
	\includegraphics[width=0.3\columnwidth]{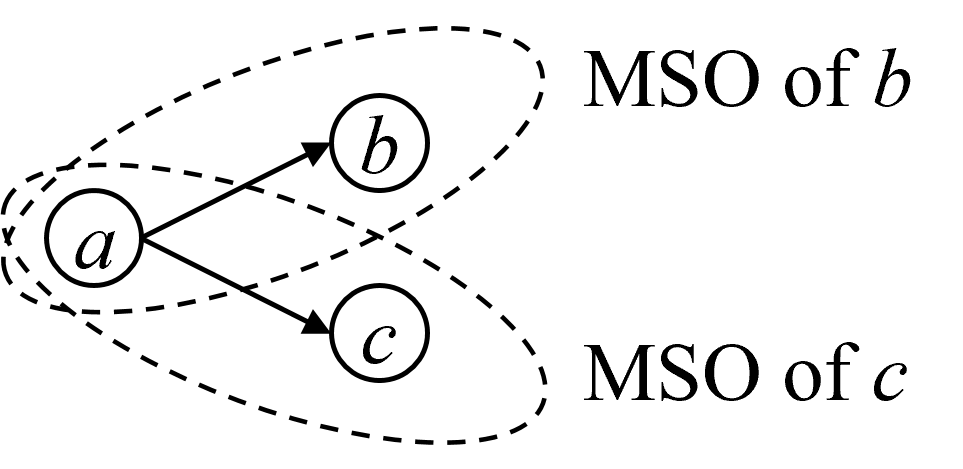}
	\hfill	
	\small{
	\begin{tabular}{cccc}
		\toprule
		\textbf{a} & \textbf{b} & \textbf{c} & \textbf{Fitness} \\ \hline
		1 & 1 & 1 & 10 \\
		\hline
		1 & 1 & 0 & 9  \\
		1 & 0 & 1 & 9  \\
		\hline
		1 & 0 & 0 & 8  \\
		\hline
		0 & 0 & 0 & 7  \\
		\hline
		0 & 1 & 0 & 6  \\
		0 & 0 & 1 & 6  \\
		\hline
		\multicolumn{3}{c}{others} & $<6$ \\ \bottomrule
	\end{tabular}	
    }
	\hspace{2cm}
	\vspace{6px}
	\caption{A problem with overlapping MSOs. In this problem, $a\rightarrow b$ and $a\rightarrow c$. $\mathcal{IN}^*(a)=\{a\}$, $\mathcal{IN}^*(b)=\{a,b\}$, and $\mathcal{IN}^*(c)=\{a,c\}$.}\label{fig:abac}
\end{figure}
The previous section elaborated that under certain assumptions, important characteristics can be captured by binary relations between genes, and subsequently EG is defined. Here, the problem decomposition theorem (Theorem~\ref{thm:m=in}) plays an important role in gluing two concepts together: problem decomposition and the epistatic graph. To the best of our knowledge, this is the first formal proof showing that problem decomposition is related to the concept of linkage in general.

\subsection{Problems That Can Be Solved via Proper Decomposition in Polynomial Time with Known EGs}
\label{sec:eg-summary}
Note that even if the problem is decomposable, MSO can still be large. For example, \textsc{LeadingOnes} can be solved bit-by-bit in the correct order; while $\mathcal{C}(\mathcal{M}_{SO}(0)) = \{0\}$ and $\mathcal{C}(\mathcal{M}_{SO}(99)) = \{0,1,\ldots, 99\}$. This observation strongly suggests that the algorithm should start from the smallest MSO. With the EG provided, such a proper order can be decided. This subsection discusses those problems that can be solved in polynomial time if their EGs are known.

In an \textit{additively separable} problem, subproblems are disjoint~\cite{Eiben:critique:1999}. One of the characteristics of additively separable problems is that the global optimum can be found via partial enumeration in polynomial time with the problem structure known in advance. For example, if the problem consists of $m$ disjoint subproblems of $k$ bits, $2^km$ function evaluations are sufficient to obtain the global optimum with $2^k$ evaluations on each subproblem.

The above idea can be carried on problems that are not additively separable to some extent. Let tuple $D = (d_0, d_1, \ldots, d_{|D|-1} )$ be a partition of set $V$. Consider the partial enumeration algorithm (PE) shown in Algorithm~\ref{alg:pe}, which takes $\sum_{i=0}^{|D|-1} 2^{|d_i|}$ function evaluations. With the input $D=(V)$, the algorithm reduces to a full enumeration algorithm that obtains the global optimum by $2^\ell$ function evaluations, which is exponential to the problem size. Next, we show that under certain conditions, PE returns the global optimum in polynomial time with an appropriate partition $D$.

\begin{algorithm}
	\caption{Partial Enumeration (PE)}
	\label{alg:pe}
	
	\KwIn{$D = (d_0, d_1, \ldots, d_{|D|-1})$: a partition of $V$}
	\KwOut {A chromosome that aims to be the global optimum}
	
	Randomly initialize a chromosome $\vec y$.\\
	\For{$i=0, 1, \ldots, |D|-1$} {
		\For{each $A$ in all $2^{|d_i|}$ possible assignments}{
			\If{$f(\vec{y}^A) > f(\vec y)$} {
				$\vec y \leftarrow \vec y^A$
			}
		}
	}
	\textbf{return} $\vec y$\\
\end{algorithm}		

\subsubsection{Acyclic EG}
\label{sec:DAG}
For problems whose EGs are DAGs, let $D
= (\{v_0\}, \{v_1\}, \ldots, \{v_{|V|-1}\})$ be a partition of $V$, where $v_i\in V$ follows a topological order: $\forall 0 < i < |V|$, $\{v_{i}\} \not\Rightarrow v_{i-1}$. PE takes $2\ell$ function evaluations on the input $D$. We claim that PE returns the global optimum on such an input $D$ as well.

\begin{theorem}
The allele of $v_i$ is assigned to $g[v_i]$ for all $i$ in PE on input   $D = (\{v_0\}, \{v_1\}, \ldots, \{v_{|V|-1}\})$ where $\forall 0< i < |V|$, $\{v_{i}\} \not\Rightarrow v_{i-1}$.\label{thm:acyc}
\begin{proof}
Prove by induction.
\begin{itemize}
\item Base: $v_0$ is correctly set to $g[v_0]$ by PE since $\mathcal{C}(\mathcal{M}_{SO}(v_0)) = \{ v_0 \}$.
\item Induction hypothesis: $v_j$ is set to $g[v_j]$ for all $j < i$.
\item Now consider $v_i$. Let $V_i = \bigcup_{j\leq i} v_j$. Since $\mathcal{C}(\mathcal{M}_{SO}(v_i)) = \mathcal{IN}^*(v_i)$, we have $\mathcal{M}_{SO}(v_i) \subseteq V_i$. When PE decides the allele on $v_i$, all other alleles in $\mathcal{M}_{SO}(v_i)$ have been correctly decided (induction hypothesis), so $v_i$ is correctly assigned to $g[v_i]$.
\end{itemize}
By the principle of mathematical induction, the statement holds for all $i \geq 0$.
\end{proof}
\end{theorem}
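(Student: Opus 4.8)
The plan is to prove Theorem~\ref{thm:acyc} by induction on $i$, exactly as the statement's structure suggests, using the Problem Decomposition Theorem (Theorem~\ref{thm:m=in}) and its corollary as the main engine. The key fact I will lean on is that $\mathcal{C}(\mathcal{M}_{SO}(v_i)) = \mathcal{IN}^*(v_i)$, together with the observation that the topological ordering condition $\forall 0 < i < |V|,\; \{v_i\} \not\Rightarrow v_{i-1}$ forces the entire in-closure of $v_i$ to sit among the already-processed loci $\{v_0,\ldots,v_i\}$.

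First I would establish the base case: since the EG is a DAG and $v_0$ is first in topological order, nothing points to $v_0$, so $\mathcal{IN}(v_0) = \phi$ and thus $\mathcal{IN}^*(v_0) = \{v_0\}$; by the corollary $\mathcal{M}_{SO}(v_0) = \{(v_0, g)\}$. When PE enumerates the two assignments on $d_0 = \{v_0\}$, the stationary-optimum property (Definition~\ref{def:SO}) guarantees $f(\vec y^{\{(v_0,g)\}}) > f(\vec y^{\{(v_0,\overline g)\}})$ regardless of the random initialization of the other loci, so $v_0$ ends up set to $g[v_0]$. Then, assuming inductively that $v_j$ is set to $g[v_j]$ for all $j < i$, I would argue that $\mathcal{IN}^*(v_i) \subseteq V_i := \{v_0,\ldots,v_i\}$: the topological property (extended from consecutive indices to all pairs, which needs a short argument — see below) means no $v_k$ with $k > i$ is order-1 epistatic to anything in $\{v_0,\ldots,v_i\}$, and since $\mathcal{IN}^*(v_i)$ is generated by iterating $\mathcal{IN}(\cdot)$ starting from $v_i$, it cannot escape $V_i$. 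Hence when PE processes block $d_i = \{v_i\}$, every other locus in $\mathcal{M}_{SO}(v_i) = \{(\mathcal{IN}^*(v_i), g)\}$ already carries its global-optimal allele, so the current chromosome $\vec y$ restricted to $\mathcal{IN}^*(v_i) - \{v_i\}$ equals $\{(\mathcal{IN}^*(v_i) - \{v_i\}, g)\}$; applying the SO property with $A = \mathcal{M}_{SO}(v_i)$ and the ``remaining'' assignment $R$ being the current values on $V - \mathcal{IN}^*(v_i)$, the assignment $(v_i, g[v_i])$ strictly beats $(v_i, \overline{g[v_i]})$, so PE sets $v_i$ correctly.

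The step I expect to be the main obstacle — or at least the one needing care — is the passage from the \emph{local} topological condition as stated ($\{v_i\} \not\Rightarrow v_{i-1}$ only for consecutive indices) to the \emph{global} statement that $\mathcal{IN}^*(v_i)\subseteq\{v_0,\dots,v_i\}$. Literally, ``topological order'' for a DAG should mean all edges go forward, i.e. $\{v_j\}\Rightarrow v_k \implies j < k$; the paper's phrasing with only $i-1$ is presumably shorthand, and I would either (a) interpret the hypothesis as the full topological-order property and note this explicitly, or (b) if forced to use only the consecutive version, observe that in a DAG one can always choose the $v_i$ to be a full topological sort, and PE's correctness only requires \emph{some} valid input $D$, so WLOG take the genuine topological order. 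Either way the resolution is a sentence or two, not a real difficulty. The rest — base case, induction step, invoking Definition~\ref{def:SO} with the right choice of $A$, $A'$, and $R$ — is routine once the in-closure containment is in hand. One small point to state cleanly: PE only ever \emph{overwrites} $\vec y$ when fitness strictly increases, and within the block $d_i = \{v_i\}$ the two candidate assignments are compared against the running best, so after processing $d_i$ the locus $v_i$ holds whichever allele gives higher fitness given the fixed context — which by the SO property is $g[v_i]$ — and this value is never disturbed by later blocks since later blocks only touch loci $v_k$ with $k > i$.
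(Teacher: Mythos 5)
Your proof is correct and follows essentially the same route as the paper's: induction on $i$, using Theorem~\ref{thm:m=in} to identify $\mathcal{C}(\mathcal{M}_{SO}(v_i))$ with $\mathcal{IN}^*(v_i)\subseteq\{v_0,\dots,v_i\}$ and then invoking the stationary-optimum property to conclude PE sets $v_i$ to $g[v_i]$. Your observation that the stated hypothesis only constrains consecutive pairs and must be read as a full topological order is a legitimate point that the paper's own proof silently assumes, and your proposed fix is the right one.
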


The problems \textsc{OneMax}  and \textsc{LeadingOnes} fall into this category (Figure~\ref{fig:EG}). In \textsc{OneMax}, every gene is independent of the others. The order does not matter, and hence PE returns the global optimum. In \textsc{LeadingOnes}, according to the topological order of the EG, PE decides the alleles from the first gene to the last gene, and that results in the global optimum. 

\subsubsection{EG with boundedly sized SCCs}
\label{sec:SCC}
Next, we show that problems can be solved in polynomial time if the SCCs in their EGs are with bounded sizes. This is because, after graph condensation, the resulted component graph is a DAG, we can then use a similar approach as in Section~\ref{sec:DAG}. The following theorem formally states such a property. Figure~\ref{fig:condensation} illustrates the component graphs of \textsc{CTrap}, \textsc{CNiah}, \textsc{CycTrap}, and \textsc{LeadingTraps}. Since there does not exist any SCC of size greater than 1 in the EGs of \textsc{OneMax} and \textsc{LeadingOnes}, their component graphs are identical to their EGs.

\begin{figure*}
\centering
\subfloat[Component graph of \textsc{CTrap}/\textsc{CNiah}.]{
\includegraphics[width=0.28\textwidth]{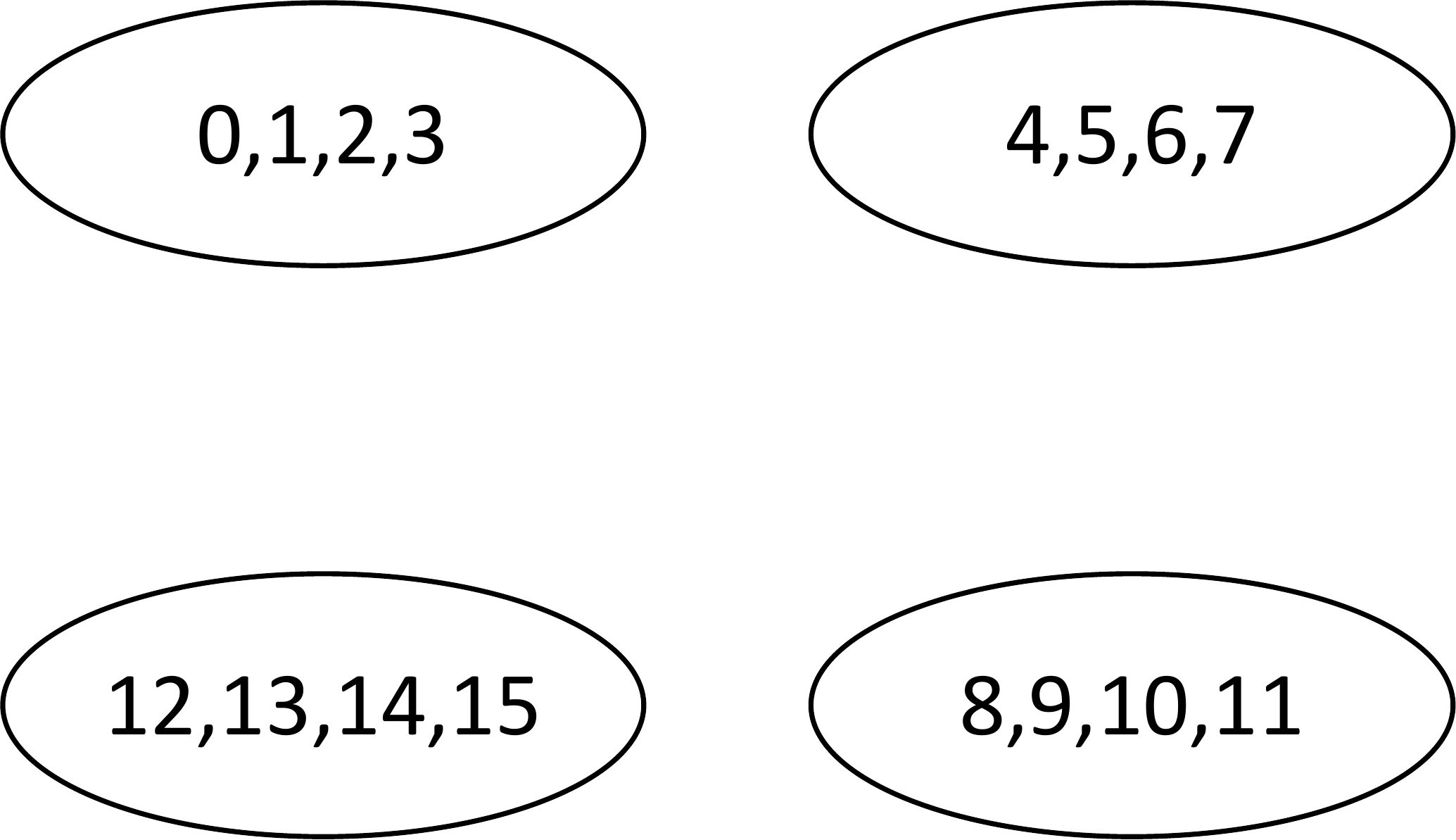}
}
\hfill
\subfloat[Component graph of \textsc{CycTrap}.]{
	\includegraphics[width=0.38\textwidth]{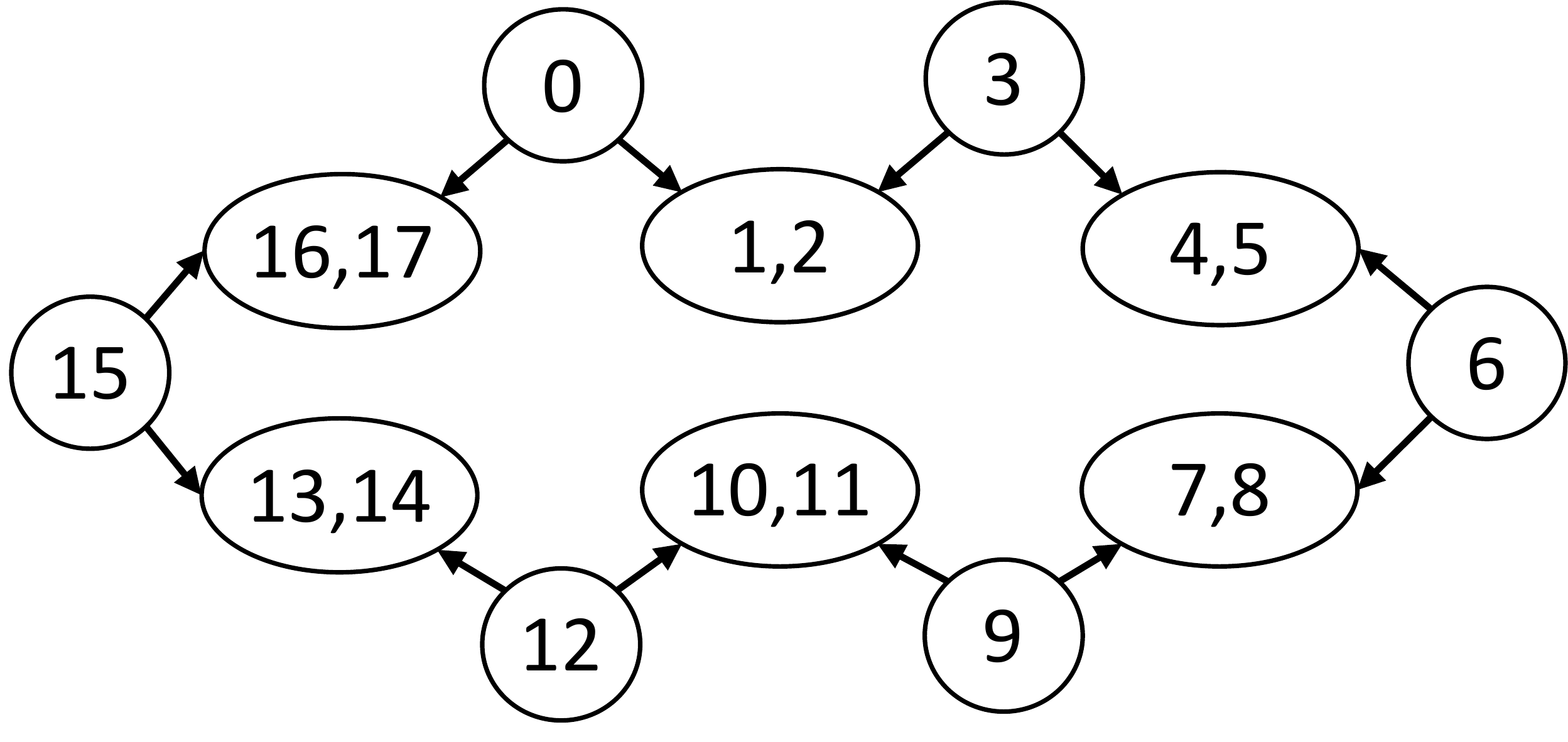}
}
\hfill
\subfloat[Component graph of \textsc{LeadingTraps}.]{
	\includegraphics[width=0.28\textwidth]{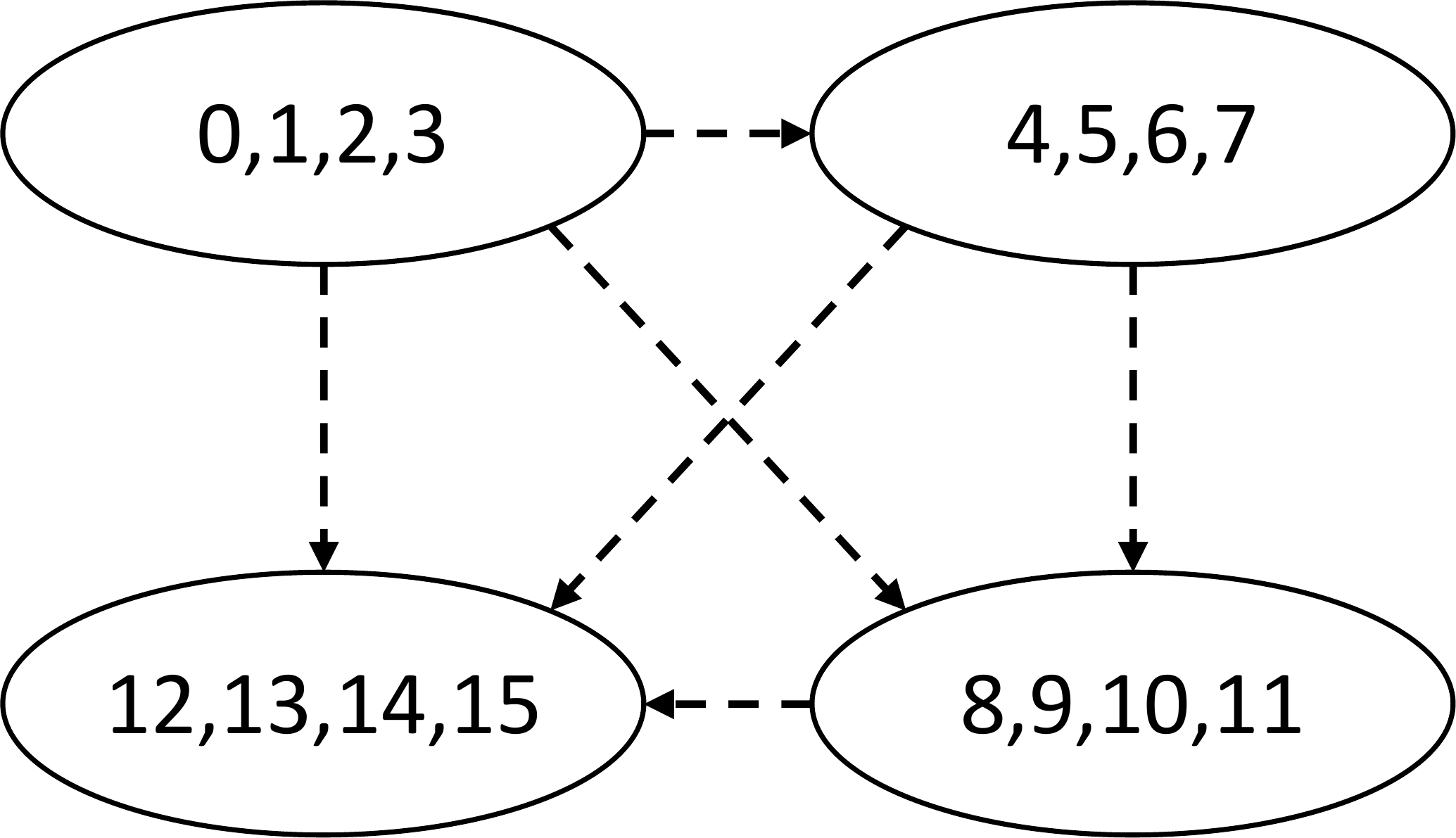}
}
\caption{Graph condensation resulted in the component graph, which is a DAG.}
\label{fig:condensation}
\end{figure*}

\begin{theorem}
	If the SCCs in the EG are with the sizes of $O(\log \ell)$, there exists a partition $D$ of set $V$ for PE to return the global optimum within $\ell^{O(1)}$ function evaluations.	
	\label{thm:np}
	\begin{proof}
		Let $G_c = (V_c, E_c)$ be the component graph of the EG with all SCCs contracted to vertices. Since $G_c$ is a DAG, let $Q
		= (q_0, q_1, \ldots, q_{|V_c|-1})$ be a topological order of $V_c$ such that $\forall 0< i < |V_c|$, $(q_{i}, q_{i-1})\not\in E_c$. Finally, let $D = (d_0, d_1, \ldots, d_{|V_c|-1})$ be the partition of $V$ corresponding to $Q$, \textit{i.e.}, the contraction of $d_i$ is $q_i$ for all $i$. Next, we prove the following properties.
		\begin{enumerate}
			\item  The number of function evaluations consumed by PE with $D$ as input is $\ell^{O(1)}$. \\
			Each $d_i$ corresponds to one single vertex in the component graph $G_c$, where each vertex represents either one vertex or one SCC in EG. Therefore, $|d_i| = O(\log \ell)$. 	
			The number of function evaluations in the inner loop of Algorithm~\ref{alg:pe} is $O(2^{|d_i|}) = \ell^{O(1)}$. Given that $|D| = O(\ell)$, the total number of function evaluations is then $\ell ^ {O(1)}$.
			\item The algorithm returns the global optimum.\\	
			Since $Q$ follows a topological order and $D$ is its correspondence, the alleles in the component graph are correctly assigned by Theorem~\ref{thm:acyc}. As a result, the algorithm returns the global optimum. 
		\end{enumerate}
		
		From what has been mentioned above, the statement holds.	 
	\end{proof}
\end{theorem}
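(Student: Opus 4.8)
The plan is to reduce the bounded-SCC case to the DAG case already handled by Theorem~\ref{thm:acyc}, via graph condensation. First I would form the condensation $G_c = (V_c, E_c)$ of the epistatic graph $G$, in which every strongly connected component of $G$ is contracted to a single vertex; it is a standard fact that $G_c$ is a DAG. Since $G_c$ is acyclic, it admits a topological order $Q = (q_0, q_1, \ldots, q_{|V_c|-1})$ with the property that no edge of $E_c$ points from a later vertex to an earlier one, i.e. $\forall\, 0 < i < |V_c|,\ (q_i, q_{i-1}) \notin E_c$. Let $D = (d_0, \ldots, d_{|V_c|-1})$ be the induced partition of $V$, where each block $d_i$ is exactly the set of loci contracted into $q_i$. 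This $D$ is the partition I will feed to PE.

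There are two things to verify. For the running time: each block $d_i$ is either a single vertex of $G$ or an entire SCC of $G$, so by hypothesis $|d_i| = O(\log \ell)$, hence the inner loop of Algorithm~\ref{alg:pe} performs $O(2^{|d_i|}) = \ell^{O(1)}$ evaluations on block $i$; since $|D| \le \ell$, the total is $\ell \cdot \ell^{O(1)} = \ell^{O(1)}$. For correctness, I would argue by induction on $i$ that after PE has processed blocks $d_0, \ldots, d_i$, every locus in $d_0 \cup \cdots \cup d_i$ carries its global-optimum allele. The key ingredient is Theorem~\ref{thm:m=in}: for any locus $v \in d_i$, the minimum stationary optimum $\mathcal{M}_{SO}(v)$ has coverage $\mathcal{IN}^*(v)$, and because $Q$ is a topological order of the condensation, $\mathcal{IN}^*(v)$ is entirely contained in $d_0 \cup \cdots \cup d_i$ (any locus epistatically reaching $v$ lies either in $v$'s own SCC — i.e. in $d_i$ — or in an earlier block of the topological order). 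Thus, at the moment PE enumerates the $2^{|d_i|}$ assignments over $d_i$, all loci of $\mathcal{M}_{SO}(v)$ outside $d_i$ are already set correctly (induction hypothesis), and the stationary-optimum property forces the enumeration over $d_i$ to settle on the globally optimal alleles there; since this holds simultaneously for every $v \in d_i$, all of $d_i$ ends up correct. Once every block has been processed, the returned chromosome equals $\{(V, g)\}$, the global optimum.

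The main obstacle is the correctness step, and specifically making precise why enumerating one block $d_i$ suffices to fix all of its loci at once rather than merely one locus at a time. The cleanest route is to observe that $d_i$ is a union of whole SCCs' worth of in-closures: concretely, $\bigcup_{v \in d_i} \mathcal{IN}^*(v) \subseteq d_0 \cup \cdots \cup d_i$ because an edge into any $v \in d_i$ originates in a vertex whose SCC is a predecessor of $q_i$ in $G_c$ (or is $q_i$ itself), and $Q$ lists predecessors earlier. Granting the induction hypothesis that $d_0 \cup \cdots \cup d_{i-1}$ is correct, the restriction of the current chromosome to $\mathcal{IN}^*(v) \setminus d_i$ agrees with $g$; then among the $2^{|d_i|}$ candidate assignments on $d_i$, the one agreeing with $g$ on $d_i$ realizes $\mathcal{M}_{SO}(v)$ and hence by Definition~\ref{def:SO} is strictly superior in fitness to every competitor on that coverage, so PE's greedy update selects it. Applying this for each $v \in d_i$ shows the whole block is set to $g$, which is exactly what Theorem~\ref{thm:acyc} gives us once we pass to the component graph. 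No step beyond this requires more than the propositions already in hand, so I expect the write-up to be short.
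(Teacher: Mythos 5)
Your proposal is correct and follows essentially the same route as the paper: condense the SCCs, take a topological order of the resulting DAG, feed the corresponding partition to PE, and bound the cost by $O(\ell)$ blocks of size $O(\log \ell)$ each. Your explicit block-by-block induction (using $\bigcup_{v\in d_i}\mathcal{IN}^*(v)\subseteq d_0\cup\cdots\cup d_i$ together with Theorem~\ref{thm:m=in} and the stationary-optimum property) is simply a spelled-out version of the correctness step that the paper delegates to Theorem~\ref{thm:acyc}.
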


In summary, if the SCCs of EG have sizes that are bounded by the logarithm of the problem size, the global optimum can be obtained via partial enumeration in polynomial time with an appropriate order. Since the sizes of SCCs in an acyclic graph are either 0 or 1 depending on the definition of reachability, the above conclusion also applies to acyclic graphs. We believe the concept here reflects the term ``nearly decomposable problems with bounded difficulty''~\cite{Pelikan:decompose:2006, Goldberg:doi:2002}. In addition, the partitions of the component graph (SCCs) correspond to the idea of building blocks~\cite{Goldberg:1992:ga-decomposition,Goldberg:doi:2002}.

Note that \textsc{CycTrap} contains weak epistases, so the way of constructing a sequence of partitions for PE via component graphs does not directly apply to \textsc{CycTrap}. Nevertheless, as pointed out before, the partition $(\{0,1,\ldots, 9\}, \{10,11,12\}, \{13, 14, 15\}, \ldots)$ still enables PE to find the global optimum for  \textsc{CycTrap}. This observation indicates that some non-additively separable problems can still be solved using a decomposition method that appears to work only on additively separable problems. Such a message is also carried on \textsc{LeadingTraps}, in which case the decomposition can be fully explained by the tools developed in this paper. However, finding an appropriate partition for problems like \textsc{CycTrap} requires further investigation into weak epistases, which is beyond the scope of this paper.

One may find that the setting in this section resembles that of NP. If we partition $V$ in a non-deterministic way, there exists a way to decompose the problem such that the global optimum can be found in polynomial time. Alternatively, we can view the way of constructing $D$ in the proof of Theorem~\ref{thm:np} as an \textit{oracle}.  As shown in this section, knowing EG in advance provides a sufficient condition for such an oracle to be realizable; further investigation is needed for the necessary conditions.
Nevertheless, the next section shows that without the oracle, or the EG, the global optima of such problems are PAC learnable with additional requirements. 

\section{Global Optima of Nearly Decomposable Problems with Bounded Difficulty Are PAC Learnable}
\label{sec:learnability}

\noindent The previous section proved the existence of the partial enumeration sequence to solve nearly decomposable problems in polynomial time, or equivalently, assuming the problem structure (EG) is known in advance. In this section, we show that even without knowing the problem structure, the global optima of nearly decomposable problems with bounded difficulty
can still be found in polynomial time and are PAC learnable~\cite{Valiant:pac:1984}.

\subsection{Extreme Balanced Accuracy and PAC}
\label{sec:pac}
To relate optimization with PAC, we need to state clearly what the hypothesis space is. 
\begin{definition}
A \textbf{hypothesis} is a function that takes a chromosome as input, outputs \textsc{True} if the chromosome is globally optimal, and outputs \textsc{False} otherwise: $h: X \rightarrow B$, where chromosome $\vec x \in X$ and $B=\{\textsc{True},\textsc{False}\}$. The \textbf{hypothesis space} $H$ is a set of hypotheses.
\end{definition}
Since in optimization, the two classes (global optima or not) are usually imbalanced, we consider the \textit{balanced accuracy} (BACC) in the PAC definition:
\begin{linenomath*}
\begin{equation*}
BACC = \frac{1}{2} (sensitivity + specificity).
\end{equation*}
\end{linenomath*}

In addition, since most optimization algorithms terminate when \textit{any} one of the global optima is reached, we define the \textit{extreme sensitivity} based on the ordinary sensitivity:
\begin{linenomath*}
\begin{equation*}
sensitivity^* = [sensitivity >0],
\end{equation*}
\end{linenomath*}
where $[Q]$ is the Iverson bracket, which is 1 when $Q$ is \textsc{True} and 0 when $Q$ is \textsc{False}. The extreme balanced accuracy (EBACC) is then defined as 
\begin{linenomath*}
\begin{equation*}
EBACC = \frac{1}{2} (sensitivity^* + specificity).
\end{equation*}
\end{linenomath*}

Under the assumption where the global optimum is unique (Assumption~\ref{asm:unique}), EBACC of any hypothesis that incorrectly identifies the unique optimum is less than $\frac{1}{2}$ (because $sensitivity^*=0$ and $specificity<1$). EBACC of the hypothesis that correctly identifies the unique optimum is 1.

Originally, a concept is PAC learnable if there exists an algorithm that outputs a hypothesis with an average error less than or equal to $\epsilon$ with probability $1-\delta$ in time polynomial to $\frac{1}{\epsilon}$, $\frac{1}{\delta}$, and the size of samples~\cite{Valiant:pac:1984}.  With the definition of EBACC, the global optimum of an optimization problem is PAC learnable if there exists an algorithm that achieves EBACC of $1-\epsilon$ with  probability $1-\delta$ consuming (fitness) function evaluations polynomial to $\frac{1}{\epsilon}$, $\frac{1}{\delta}$, and the problem size $\ell$.

\subsection{The Global Optimum Is PAC Learnable}
Consider a routine \textsc{TestSO} (Algorithm~\ref{alg:testso}) that tests via enumeration whether a given set $S$ is the coverage of an SO based on a population $P$ with $n$ randomly generated chromosomes. It returns \textsc{False} if $S$ is not and returns \textsc{True} with the optimal assignment if $S$ is. Intuitively, the larger $n$ is, the more accurate the test can be. \textsc{TestSO} always returns \textsc{True} on SOs by definition.

\begin{algorithm}
	\caption{\textsc{TestSO}($S$, $P$)}
	\label{alg:testso}
	
	\KwIn{$S \subseteq V$; $P$: a population containing $n$ chromosomes}
	\KwOut{\textsc{True}/\textsc{False} and an assignment on $S$}
	
	\For {each chromosome in $P$} {
		Enumerate all $2^{|S|}$ assignments on $S$ in the chromosome; record the assignment that is superior to all others as $A$; \textbf{return} (\textsc{False}, $\phi$) if no such assignment exists.\\
		\If {$A$ changes between iterations}{\textbf{return} (\textsc{False}, $\phi$)}
	}
	\textbf{return} (\textsc{True}, $A$)\\
\end{algorithm}

\begin{proposition}
If the input $S$ is the coverage of an SO (including MSO), \textsc{TestSO} always returns \textsc{True} with the correct assignment $A$ for any $n \geq 1$. Consequently, if $S$ is the coverage of an SO minus a set $S'$ and the loci in $S'$ are correctly assigned in all $n$ chromosomes, \textsc{TestSO} always returns \textsc{True}.
\label{prop:so}
\end{proposition}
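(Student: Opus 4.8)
The plan is to reduce both claims to the defining strict-dominance property of a stationary optimum in Definition~\ref{def:SO}. First I would settle the main assertion (where $S$ is exactly the coverage of an SO), and then derive the ``consequently'' part by a short promotion argument, observing that it contains the first sentence as the special case $S' = \phi$.

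For the main assertion, let $A_0$ be an SO with $\mathcal{C}(A_0) = S$. By Proposition~\ref{prop:so-g} every allele of $A_0$ agrees with the global optimum, so $A_0 = \{(S,g)\}$. Fix any chromosome $\vec y \in P$ and let $R$ be the full assignment $\vec y$ restricted to $V - S$, so that $\mathcal{C}(A_0) \cup \mathcal{C}(R) = V$. For every assignment $A$ on $S$, the chromosome $\vec y^{A}$ realizes the full assignment $A \cup R$; in particular $\vec y^{\{(S,g)\}}$ realizes $A_0 \cup R$. Definition~\ref{def:SO} applied to $A_0$ with remaining assignment $R$ gives $f(A_0 \cup R) > f(A \cup R)$ for every $A \ne \{(S,g)\}$ with $\mathcal{C}(A)=S$; equivalently, $\{(S,g)\}$ is the unique assignment on $S$ superior to all others inside $\vec y$. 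Hence the loop of \textsc{TestSO} records $A = \{(S,g)\}$ on this chromosome, and since $\vec y$ was arbitrary the recorded assignment never changes between iterations; \textsc{TestSO} therefore returns $(\textsc{True}, \{(S,g)\})$ for every $n \ge 1$ and every population $P$.

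For the ``consequently'' part, suppose $S = \mathcal{C}(A_0) - S'$ for some SO $A_0$ (again $A_0 = \{(\mathcal{C}(A_0),g)\}$ by Proposition~\ref{prop:so-g}) with $S' \subseteq \mathcal{C}(A_0)$, and suppose every $\vec y \in P$ satisfies $\vec y[v] = g[v]$ for all $v \in S'$. Fix $\vec y \in P$ and write $R'$ for $\vec y$ restricted to $V - \mathcal{C}(A_0)$. For any assignment $A$ on $S$, the chromosome $\vec y^{A}$ agrees with $A$ on $S$, with $g$ on $S'$ (because $\vec y$ does and $A$ does not touch $S'$), and with $\vec y$ on the rest; that is, $\vec y^{A}$ realizes the full assignment $\big(A \cup \{(S',g)\}\big) \cup R'$. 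The assignment $A \cup \{(S',g)\}$ has coverage $\mathcal{C}(A_0)$ and equals $A_0$ iff $A = \{(S,g)\}$. Applying Definition~\ref{def:SO} to the SO $A_0$ with remaining assignment $R'$ yields $f(A_0 \cup R') > f\big((A \cup \{(S',g)\}) \cup R'\big)$ for every $A \ne \{(S,g)\}$ on $S$. Since $A_0 \cup R'$ is exactly the full assignment realized by $\vec y^{\{(S,g)\}}$, we again conclude that $\{(S,g)\}$ is the unique superior assignment on $S$ within $\vec y$; as this holds for every chromosome in $P$, \textsc{TestSO} returns $(\textsc{True},\{(S,g)\})$.

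The argument is essentially bookkeeping, and the only delicate point — the one I would call the main obstacle — is to match the enumeration in \textsc{TestSO} (over the $2^{|S|}$ assignments on $S$ \emph{inside a single fixed chromosome}) with the two nested quantifiers in Definition~\ref{def:SO} (over all assignments on a coverage \emph{and} over all remaining assignments $R$), and, in the second part, to notice that the chromosomes being correct on $S'$ is precisely what lets one ``promote'' an SO over $\mathcal{C}(A_0)$ to a strict-dominance statement about assignments over the smaller set $S$. One should also confirm that ``superior'' in \textsc{TestSO} is read as strict fitness dominance, matching the strict inequality in Definition~\ref{def:SO}, which is what guarantees both existence and uniqueness of the recorded assignment.
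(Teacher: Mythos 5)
Your proof is correct and matches the paper's intent: the paper omits the proof entirely, remarking only that ``\textsc{TestSO} always returns \textsc{True} on SOs by definition,'' and your argument is precisely that definitional reasoning made explicit (using Proposition~\ref{prop:so-g} to pin the SO to $\{(S,g)\}$ and the strict dominance in Definition~\ref{def:SO} to get existence and uniqueness of the recorded assignment in every chromosome). The promotion argument for the ``consequently'' clause, absorbing the correctly-set $S'$ into the assignment so that the SO's strict dominance over $\mathcal{C}(A_0)$ descends to the smaller set $S$, is exactly the right bookkeeping.
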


Algorithm~\ref{alg:ie} sketches an iterative partial enumeration algorithm (IPE). The idea is to try all combinations of loci in ascending order by size; assign and freeze the alleles of the combination as the stationary superior pattern if such a pattern is  found; the procedure does not terminate until all combinations of unassigned loci are tested. If \textsc{TestSO} keeps failing to find the superior pattern, the value of $k$ keeps increasing until $k=\ell$, which causes \textsc{TestSO} to perform a full enumeration and return with the global optimum assignment.
Also, early incorrect assignments may cause \textsc{TestSO} to fail to find any SO on the remaining loci, which causes IPE to return \textsc{Failure}.
We will show that this is unlikely to happen on nearly decomposable problems with bounded difficulty when $n$ is sufficiently large.

\begin{algorithm}
	\caption{Iterative Partial Enumeration (IPE)}
	\label{alg:ie}
	
	\KwIn{$n$: the number of samples}
	\KwOut{a full assignment or \textsc{Failure}}
	
	Generate a population $P$ with $n$ chromosomes by uniformly randomly assigning alleles.\\	
	$U \leftarrow V$\\
	$k \leftarrow 1$\\
	\While{$k \leq |U|$} {
		\For{each $S \subseteq U$ where $|S|=k$} {
			($foundSO$, $A$) = \textsc{TestSO}($S, P$)\\
			\If {foundSO is \textsc{True}} {
				Apply assignment $A$ to all $n$ chromosomes in $P$.\\
				$k \leftarrow 1$\\
				$U \leftarrow U-S$\label{alg-line:U}\\
				\If {$U$ is empty} {
					\textbf{return} the assignment of a chromosome in $P$.
				}
			}			
		}
		\If {none of $S\subseteq U$ where $|S|=k$ makes $foundSO$ \textsc{True}} {$k\leftarrow k+1$}
		
	}
	\Return \textsc{Failure}
\end{algorithm}

IPE is, of course, by no means a GA. Nevertheless, their working principles are similar in many aspects. First, they both adopt a population. Second, \textsc{TestSO} in IPE imitates the survival-of-the-fittest behavior of GAs in an extreme manner. IPE adopts the mechanism of partial enumeration, which does not appear in most GAs. However, as long as the size of partial enumeration is bounded such that all patterns occurring in the partial enumeration exist in a sufficiently large population with a high probability, GAs can perform similar recombination without partial enumeration. This is the key argument in the supply theory~\cite{Goldberg:supply:2001} in GA. Finally, IPE invokes \textsc{TestSO} on sets of loci in ascending order by size. This corresponds to the fact that shorter fragments tend to converge faster than longer ones in GAs due to a more sufficient initial supply and fewer competitors in general~\cite{Goldberg:doi:2002}.

In IPE, whenever \textsc{TestSO} returns \textsc{False}, $U$ remains unchanged. For ease of analysis, we define several terms as follows based on the $i$-th time that \textsc{TestSO} returns \textsc{True}.

\begin{definition}\text{ }
	\begin{itemize}
	\item Let \textbf{$S_i$} be the set $S$ at the $i$-th time that \textsc{TestSO} returns \textsc{True}.
	\item Let \textbf{$U_i$} be the set $U$ \textit{immediately after} the $i$-th time that \textsc{TestSO} returns \textsc{True} in Algorithm~\ref{alg:ie} ($U_0 = V$ and $S_i \subseteq U_i$).
	\item Let $\mathcal{T}_i(v) = \mathcal{C}(\mathcal{M}_{SO}(v)) - (V-U_i)$, which is the set of remaining unassigned loci of the MSO.
	\end{itemize}
\end{definition}

Now we start analyzing the size of $S_i$. The following proposition states that the size of $S_i$ is upper bounded by $\min_{v\in U_i} |\mathcal{T}_i(v)|$.

\begin{proposition}
	If $V-U_i$ is correctly assigned, $|S_i| \leq \min_{v\in U_i} |\mathcal{T}_i(v)|$.\label{prop:pmso}
	\begin{proof}
	The statement holds because \textsc{TestSO} returns \textsc{True} on the input $S=\mathcal{T}_i(v)$ for all $v\in U_i$, given $V-U_i$ is correctly assigned (Proposition~\ref{prop:so}), and IPE tests $S$ in ascending order by size.
	\end{proof}
\end{proposition}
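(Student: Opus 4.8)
The plan is to read the bound directly off the control flow of IPE. Between the $(i-1)$-th and the $i$-th time \textsc{TestSO} returns \textsc{True}, the working set $U$ is frozen at $U_i$ and IPE enumerates the subsets of $U_i$ in nondecreasing order of size, stopping at the first one on which \textsc{TestSO} returns \textsc{True}; hence $|S_i|$ is exactly the least $k$ such that some size-$k$ subset of $U_i$ passes \textsc{TestSO}. So it suffices to produce, for each $v \in U_i$, a subset of $U_i$ of size $|\mathcal{T}_i(v)|$ on which \textsc{TestSO} succeeds, and the obvious candidate is $\mathcal{T}_i(v)$ itself.

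First I would collect the elementary facts about this candidate. Put $I_v := \mathcal{C}(\mathcal{M}_{SO}(v))$. Then $\mathcal{T}_i(v) = I_v - (V - U_i) = I_v \cap U_i$, so $\mathcal{T}_i(v) \subseteq U_i$ and IPE does examine it during the round in which $S_i$ is found. Moreover $\mathcal{T}_i(v)$ is nonempty: the MSO of $v$ assigns $v$, so $v \in I_v$, and $v \in U_i$, whence $v \in \mathcal{T}_i(v)$.

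Next I would check that \textsc{TestSO}$(\mathcal{T}_i(v), P) = \textsc{True}$. Decompose $I_v = \mathcal{T}_i(v) \cup S'$ with $S' := I_v \cap (V - U_i)$, a disjoint union. By definition $\mathcal{M}_{SO}(v)$ is a stationary optimum, so $I_v$ is the coverage of an SO; and the loci of $S'$ lie in $V - U_i$, which by the hypothesis of the proposition are correctly assigned in every one of the $n$ chromosomes currently in $P$ (these are exactly the loci frozen by the first $i-1$ successful calls). This is precisely the situation covered by the second clause of Proposition~\ref{prop:so}: $\mathcal{T}_i(v)$ is the coverage of an SO minus a set whose loci are correctly assigned in all $n$ chromosomes, so \textsc{TestSO} returns \textsc{True} on input $\mathcal{T}_i(v)$ for any $n \geq 1$.

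Putting these together, for each $v \in U_i$ the set $\mathcal{T}_i(v)$ is a subset of $U_i$, of size $|\mathcal{T}_i(v)|$, on which \textsc{TestSO} succeeds; so the least successful size satisfies $|S_i| \leq |\mathcal{T}_i(v)|$, and minimizing over $v \in U_i$ yields $|S_i| \leq \min_{v \in U_i} |\mathcal{T}_i(v)|$. I expect the only point requiring care to be the bookkeeping around $U_i$: making precise that $\mathcal{T}_i$ is built from the value of $U$ in force exactly while IPE is hunting for its $i$-th SO, that ``$V - U_i$ correctly assigned'' means the alleles applied in the first $i-1$ successes all equal $g$, and that the reset $k \leftarrow 1$ after each success cannot let IPE step past size $\min_{v \in U_i}|\mathcal{T}_i(v)|$ without triggering a success. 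None of this is analytically hard; it is just a matter of unwinding the definitions and the pseudocode of Algorithm~\ref{alg:ie}.
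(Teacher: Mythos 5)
Your proposal is correct and follows the same route as the paper's own (very terse) proof: both rest on the second clause of Proposition~\ref{prop:so} applied to $\mathcal{T}_i(v)$, viewed as the coverage of the SO $\mathcal{M}_{SO}(v)$ minus the already correctly assigned loci in $V-U_i$, combined with the fact that IPE enumerates candidate sets in ascending order of size. You have merely made explicit the bookkeeping ($\mathcal{T}_i(v)\subseteq U_i$, nonemptiness, the disjoint decomposition of $\mathcal{C}(\mathcal{M}_{SO}(v))$) that the paper leaves implicit.
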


To practically upper bound the size of $S_i$, next we show that $\min_v |\mathcal{T}_i(v)|$ is upper bounded when the size of SCCs and the maximum in-degree are upper bounded. In the rest of this paper, we often need to refer to the above condition to ensure decomposability, and hence we give the following definition.

\begin{definition}
The \textbf{decomposition difficulty} of the problem (or EG) is $\max(k_{SCC}, k_{in}+1)$, where
\begin{itemize}
	\item the maximum size of SCCs of its EG is $k_{SCC}$, and 
	\item the maximum in-degree of its EG is $k_{in}$.
\end{itemize}\label{def:diff}
\end{definition}

Note that the minimum decomposition difficulty is $1$, which corresponds to the case of \textsc{OneMax}. We deliberately add one to the maximum in-degree to conceptually map the decomposition difficulty to the size of building blocks.

\begin{proposition}	
	For any EG with decomposition difficulty of $k$, we have $\min_v |\mathcal{IN}^*(v)| \leq k$.\label{prop:in-closure-is-k}
	\begin{proof}
	Consider an arbitrary vertex $v$ and its in-closure $\mathcal{IN}^*(v)$. Either of the following conditions holds.
	\begin{enumerate}	
	\item $\exists u \in \mathcal{IN}^*(v)$, $v \not\in \mathcal{IN}^*(u)$.
	\item $\forall u\in \mathcal{IN}^*(v)$, $v \in \mathcal{IN}^*(u)$.
	\end{enumerate}
	For any vertex $v$ that belongs to case 1, we have $\mathcal{IN}^*(u) \subset \mathcal{IN}^*(v)$. The minimum in-closure does not occur at vertex $v$; instead, it only occurs at some vertex that belongs to case 2. For any $v$ in case 2, we know that $\mathcal{IN}^*(u) = \mathcal{IN}^*(v)$ and that $v$ and $u$ belong to the same SCC, which is exactly their in-closures. Since the maximum size of SCCs is less than or equal to $k$, the statement holds.
	\end{proof}
\end{proposition}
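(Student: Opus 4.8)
The plan is to prove Proposition~\ref{prop:in-closure-is-k} by a dichotomy on the structure of in-closures, exactly as hinted by the two cases listed in the statement's vicinity. First I would observe the basic fact that $\mathcal{IN}^*$ is ``transitive'' in the sense that $u \in \mathcal{IN}^*(v)$ implies $\mathcal{IN}^*(u) \subseteq \mathcal{IN}^*(v)$; this follows directly from the definition of the in-closure as the union of iterated $\mathcal{IN}$ sets (any path into $u$ extends to a path into $v$). This monotonicity is the workhorse of the argument.

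Next I would fix an arbitrary vertex $v$ and split into the two cases. In case~1, where some $u \in \mathcal{IN}^*(v)$ has $v \notin \mathcal{IN}^*(u)$, the transitivity fact gives $\mathcal{IN}^*(u) \subseteq \mathcal{IN}^*(v)$, and the containment is strict because $v$ lies in the right-hand side but not the left. Hence the minimum of $|\mathcal{IN}^*(\cdot)|$ is not attained at such a $v$; since the minimum is attained somewhere, it must be attained at a vertex in case~2. In case~2, for every $u \in \mathcal{IN}^*(v)$ we have both $u \in \mathcal{IN}^*(v)$ and $v \in \mathcal{IN}^*(u)$, so by transitivity $\mathcal{IN}^*(u) = \mathcal{IN}^*(v)$ and $u,v$ are mutually reachable in the EG, i.e.\ they lie in the same strongly connected component. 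Thus $\mathcal{IN}^*(v)$ is contained in the SCC of $v$ (and in fact equals it), so $|\mathcal{IN}^*(v)| \le k_{SCC} \le k$, which gives $\min_v |\mathcal{IN}^*(v)| \le k$.

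One small point I would want to be careful about: to pass from ``minimum not attained in case~1'' to ``minimum attained in case~2'' I should note that $V$ is finite and nonempty, so the minimum is attained by some vertex, and that vertex is in case~2. Also I should double-check the degenerate possibility that $\mathcal{IN}^*(v) = \{v\}$ (an isolated vertex or a sink): then case~2 holds vacuously-plus-trivially and the SCC of $v$ is the singleton $\{v\}$, so the bound $|\mathcal{IN}^*(v)| = 1 \le k$ holds since $k \ge 1$ by Definition~\ref{def:diff}. I expect the main (and only real) obstacle to be stating the transitivity of $\mathcal{IN}^*$ cleanly and using it in both directions within case~2 to conclude set equality and membership in a common SCC; the rest is bookkeeping. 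Note that the in-degree bound $k_{in}$ does not enter this particular proposition at all — it only bounds how the remaining pieces $\mathcal{T}_i(v)$ shrink in the subsequent analysis — so I would not attempt to invoke it here.
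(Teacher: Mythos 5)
Your proposal is correct and follows essentially the same two-case argument as the paper's own proof: case-1 vertices cannot attain the minimum because $\mathcal{IN}^*(u)\subsetneq\mathcal{IN}^*(v)$, and a case-2 vertex has its in-closure equal to its SCC, which is bounded by $k$. The extra care you take — making the transitivity of $\mathcal{IN}^*$ explicit, noting finiteness of $V$ so the minimum is attained, and checking the singleton case — only tightens details the paper leaves implicit.
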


Finally, we have $|S_i|$ upper bounded by a constant under the following conditions.

\begin{lemma} 
In IPE, if the decomposition difficulty of the problem is $k$ and $V-U_i$ is correctly assigned, we have $|S_i| \leq k$.\label{lemma:s-k}
\begin{proof}
Consider the subgraph of the EG defined on $U_i$, namely $G_i$. The decomposability of $G_i$ is less than or equal to $k$. Note that $\mathcal{T}_i(v) = \mathcal{C}(\mathcal{M}_{SO}(v)) - (V-U_i)$ is the coverage of MSO of the subproblem defined on $U_i$. The minimum size, $\min_{v \in U_i} |\mathcal{T}_i(v)|$, is less than or equal to $k$ according to Proposition~\ref{prop:in-closure-is-k}. We then have $|S_i| \leq k$ directly by Proposition~\ref{prop:pmso}.
\end{proof}
\end{lemma}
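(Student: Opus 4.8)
The plan is to chain together three facts already in hand: Proposition~\ref{prop:pmso}, which bounds $|S_i|$ by $\min_{v\in U_i} |\mathcal{T}_i(v)|$ under the hypothesis that $V-U_i$ is correctly assigned; Proposition~\ref{prop:in-closure-is-k}, which bounds the minimum in-closure of any EG by its decomposition difficulty; and the observation that when the assigned loci are \emph{correct}, the behavior of IPE on the remaining loci $U_i$ is exactly the behavior of a fresh run on the induced subproblem whose EG is $G_i = G[U_i]$. So the real content of the lemma is just the bookkeeping that glues these together, plus one genuinely substantive claim, which I flag below.

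First I would set up the induced subproblem. Restrict the fitness landscape by fixing the loci of $V-U_i$ to their global-optimal alleles; call the resulting subproblem $f_i$ on variable set $U_i$. I claim its EG is exactly $G_i$, the subgraph of the original EG induced on $U_i$. The easy direction is that removing correctly-fixed loci cannot create new epistasis among the survivors. The direction that needs a word is that no epistatic edge among survivors is \emph{destroyed}: if $\{u\}\Rightarrow w$ with $u,w\in U_i$, there is an assignment on $U_i$ witnessing it, and fixing the complementary loci to $g$ does not interfere (this is essentially the content already used implicitly in Lemma~\ref{lemma:in-so} and Proposition~\ref{prop:someone} — weak epistasis is excluded by Assumption~\ref{asm:no-weak}, so the witnessing single-locus relations survive). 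Next, observe that $\mathcal{T}_i(v) = \mathcal{C}(\mathcal{M}_{SO}(v)) - (V-U_i)$ is precisely the coverage of the MSO of $v$ \emph{in the subproblem} $f_i$: by Theorem~\ref{thm:m=in} the MSO coverage in the original problem is $\mathcal{IN}^*(v)$, and intersecting with $U_i$ (equivalently, subtracting $V-U_i$) gives the in-closure of $v$ within $G_i$. Then the decomposition difficulty of $G_i$ is at most $k$, since deleting vertices can only shrink SCCs and in-degrees; Proposition~\ref{prop:in-closure-is-k} applied to $G_i$ yields $\min_{v\in U_i}|\mathcal{T}_i(v)| \leq k$; and Proposition~\ref{prop:pmso} finishes with $|S_i|\leq k$.

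The step I expect to be the main obstacle is the claim that $\mathcal{T}_i(v)$ really is the in-closure of $v$ in $G_i$, i.e., that ``MSO of the subproblem'' and ``MSO of the whole problem, then restrict'' coincide. This hinges on (a) the EG of the subproblem being the induced subgraph — which in turn relies on the no-weak-epistasis assumption being inherited by the subproblem, something one should check rather than assume, since an induced subgraph of an EG is an EG of \emph{some} problem but one must confirm it is the EG of this particular restricted $f_i$ — and on (b) Theorem~\ref{thm:m=in} being applicable to $f_i$, which again needs $f_i$ to have no weak epistasis and (nominally) a unique optimum; the latter holds because $\{(V,g)\}$ restricted to $U_i$ is still the unique maximizer of $f_i$ by Proposition~\ref{prop:opt}. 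Once these are nailed down the rest is immediate composition of the cited results, so I would spend essentially the whole proof on the subproblem-EG identity and state the final chain in one line.
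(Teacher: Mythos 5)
Your proposal follows exactly the same route as the paper's proof: restrict to the subproblem on $U_i$, observe that its decomposition difficulty is at most $k$ and that $\mathcal{T}_i(v)$ is the MSO coverage of the subproblem, then chain Proposition~\ref{prop:in-closure-is-k} and Proposition~\ref{prop:pmso}. The only difference is that you explicitly justify the subproblem-EG identity (that the restricted problem's EG is the induced subgraph and that restricting MSOs commutes with restriction), which the paper simply asserts; your added care there is sound and arguably an improvement.
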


Note that even if $V-U_i$ is correctly assigned, $S_i$ is not necessarily  $\mathcal{T}_i(  \argmin_{v\in U_i} |\mathcal{T}_i(v)|)$. \textsc{TestSO} may return \textsc{True} on $S_i$ smaller than that, but the question is that whether \textsc{TestSO} returns \textsc{True} with the correct assignment, $A=\{(S_i, g)\}$.

We call the next theorem the \emph{epistasis blanket theorem}, which is one of our major results.
It states that given a set of loci, as long as the 1st tier of in-nodes is set correctly, there exists a specific assignment on the 2nd tier of in-nodes that guarantees the correct gene assignments on that given set of loci belong to the constrained optima, no matter what the assignment is on the remaining loci.

\begin{theorem}[Epistasis Blanket Theorem]
	$\forall S \subseteq V, \forall R$, we have $g[s] \in \Psi_{\{( \mathcal{IN}(S)-S, g)\} \cup R}[s]$ for all $s\in S$, where $R$ is an arbitrary assignment with the coverage $\mathcal{C}(R) = V- S - \mathcal{IN}(S) - \mathcal{IN}^2(S)$.\label{thm:blanket}
	\begin{proof}
		Suppose the claim is not true. $\exists R$ with $\mathcal{C}(R) = V- S- \mathcal{IN}(S) - \mathcal{IN}^2(S), \Psi_{\{( \mathcal{IN}(S)-S, g)\} \cup R} [s] = \{\overline{g[s]}\}$ for some $s\in S$.
		Now we keep dropping gene assignments until the assignment on constraint is minimum while the constrained optimal assignment at $s$ remains $\overline{g[s]}$: 
		$\Psi_{\{(S', g)\} \cup R'} [s] = \{\overline{g[s]}\}$, where $S'\subseteq \mathcal{IN}(S)-S$, $R' \subseteq R$, and $S'\cup \mathcal{C}(R')$ is minimum. Note that $S'$ cannot be empty; otherwise, $\Psi_{R'} [s] = \{\overline{g[s]}\}$ indicates that $\exists r\in \mathcal{C}(R'), \{r\}\Rightarrow s$ (Proposition~\ref{prop:someone}), which makes $r \in \mathcal{IN}(S)$. Also, $R'$ cannot be empty since we know that $\Psi_{\{(S', g)\}}[s] = \{g[s]\}$. 
		
		Since the size of such an assignment is minimum, and both $S'$ and $R'$ are nonempty, $\Psi_{\{(S', g)\} \cup R'} \neq \Psi_{R'}$. By Proposition~\ref{prop:alter}, 
		$\exists s'\in S',\; \overline{g[s']} \in \Psi_{R'}[s']$. By Proposition~\ref{prop:someone}, $\exists r \in \mathcal{C}(R')$ such that $\{r\} \Rightarrow s'$. However, we have $r\not\in \mathcal{IN}^2(S)$ and $s'\in \mathcal{IN}(S)$, which leads to a contradiction.
	\end{proof}
\end{theorem}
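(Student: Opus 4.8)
The plan is to argue by contradiction, leveraging the structural propositions already established about epistasis---in particular Proposition~\ref{prop:someone} (an allele in a constrained optimum that deviates from the global optimum forces an order-1 epistasis from the coverage) and Proposition~\ref{prop:alter} (if adding an assignment changes the constrained optima, some added allele was ``wrong'' in the old constrained optima). Suppose for contradiction that for some $s\in S$ and some assignment $R$ with $\mathcal{C}(R) = V - S - \mathcal{IN}(S) - \mathcal{IN}^2(S)$, we have $\Psi_{\{(\mathcal{IN}(S)-S,\,g)\}\cup R}[s] = \{\overline{g[s]}\}$, i.e., the correct allele at $s$ is excluded.

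The central idea is a minimality reduction: shrink the constraining assignment while preserving the bad behavior at $s$. Concretely, I would pass to $S' \subseteq \mathcal{IN}(S)-S$ and $R' \subseteq R$ with $S' \cup \mathcal{C}(R')$ of minimum total size subject to $\Psi_{\{(S',g)\}\cup R'}[s] = \{\overline{g[s]}\}$. The first checkpoint is that neither $S'$ nor $R'$ can be empty: if $S'=\emptyset$, then $\Psi_{R'}[s]=\{\overline{g[s]}\}$, so by Proposition~\ref{prop:someone} some $r\in\mathcal{C}(R')\subseteq\mathcal{C}(R)$ satisfies $\{r\}\Rightarrow s$, forcing $r\in\mathcal{IN}(S)$---but $\mathcal{C}(R)$ is disjoint from $\mathcal{IN}(S)$, a contradiction; and if $R'=\emptyset$, then $\Psi_{\{(S',g)\}}[s]=\{g[s]\}$ by Proposition~\ref{prop:opt}, contradicting the bad behavior. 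With both pieces nonempty and the size minimal, removing any single assignment from $R'$ must change the constrained optima, which (combined with $\{(S',g)\}$ being a fragment of $g$, so $\Psi_{\{(S',g)\}}\ne\Psi_{\{(S',g)\}\cup R'}$ in the relevant sense) lets me invoke Proposition~\ref{prop:alter}: there is some $s'\in S'$ with $\overline{g[s']}\in\Psi_{R'}[s']$. Then Proposition~\ref{prop:someone} gives some $r\in\mathcal{C}(R')$ with $\{r\}\Rightarrow s'$. Since $s'\in S'\subseteq\mathcal{IN}(S)$, this $r$ lies in $\mathcal{IN}^2(S)$ by definition of the second in-tier---yet $r\in\mathcal{C}(R')\subseteq\mathcal{C}(R)$, which is disjoint from $\mathcal{IN}^2(S)$. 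Contradiction, completing the proof.

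The step I expect to require the most care is applying Proposition~\ref{prop:alter} correctly---specifically, making sure the hypothesis $\Psi_{A\cup R}\ne\Psi_R$ of that proposition is genuinely available in the reduced setting. One must check that with $A = \{(S',g)\}$ and the role of ``$R$'' played by $R'$, the constrained optima actually do differ: this follows because if they agreed, $\{(S',g)\}$ could be dropped entirely, contradicting minimality (and returning us to the $S'=\emptyset$ case already handled). A secondary subtlety is bookkeeping with the three disjoint sets $S$, $\mathcal{IN}(S)$ (more precisely $\mathcal{IN}(S)-S$), and the ``frontier'' $\mathcal{IN}^2(S)$: one has to be careful that $\mathcal{C}(R)=V-S-\mathcal{IN}(S)-\mathcal{IN}^2(S)$ is indeed disjoint from both $\mathcal{IN}(S)$ and $\mathcal{IN}^2(S)$ as sets, and that ``$\{r\}\Rightarrow s'$ with $s'\in\mathcal{IN}(S)$ implies $r\in\mathcal{IN}^2(S)$'' is exactly the recursive definition $\mathcal{IN}^2(S)=\mathcal{IN}(\mathcal{IN}(S))$ unwound. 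These are routine once the disjointness is pinned down, so the contradiction argument should go through cleanly.
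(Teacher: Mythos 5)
Your proposal is correct and follows essentially the same route as the paper's proof: the same minimality reduction to $S'\cup\mathcal{C}(R')$, the same two nonemptiness checks via Proposition~\ref{prop:someone} and Proposition~\ref{prop:opt}, the same invocation of Proposition~\ref{prop:alter} (with the hypothesis $\Psi_{\{(S',g)\}\cup R'}\neq\Psi_{R'}$ justified by minimality), and the same final contradiction with the disjointness of $\mathcal{C}(R)$ from $\mathcal{IN}^2(S)$.
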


Note that if $\mathcal{IN}(S)=S$, we have $\mathcal{IN}^*(S)=S$. In this case, the blanket theorem is reduced to Lemma~\ref{lemma:in-so} (an in-closure is stationarily optimal): $S$ is an in-closure, and no matter what the remaining assignment is on $V-S$, the assignment $\{(S,g)\}$ remains optimal. Figure~\ref{fig:blanket} illustrates the concept of epistasis blanket theorem.

\begin{figure}
\centering
\includegraphics[width=0.7\textwidth]{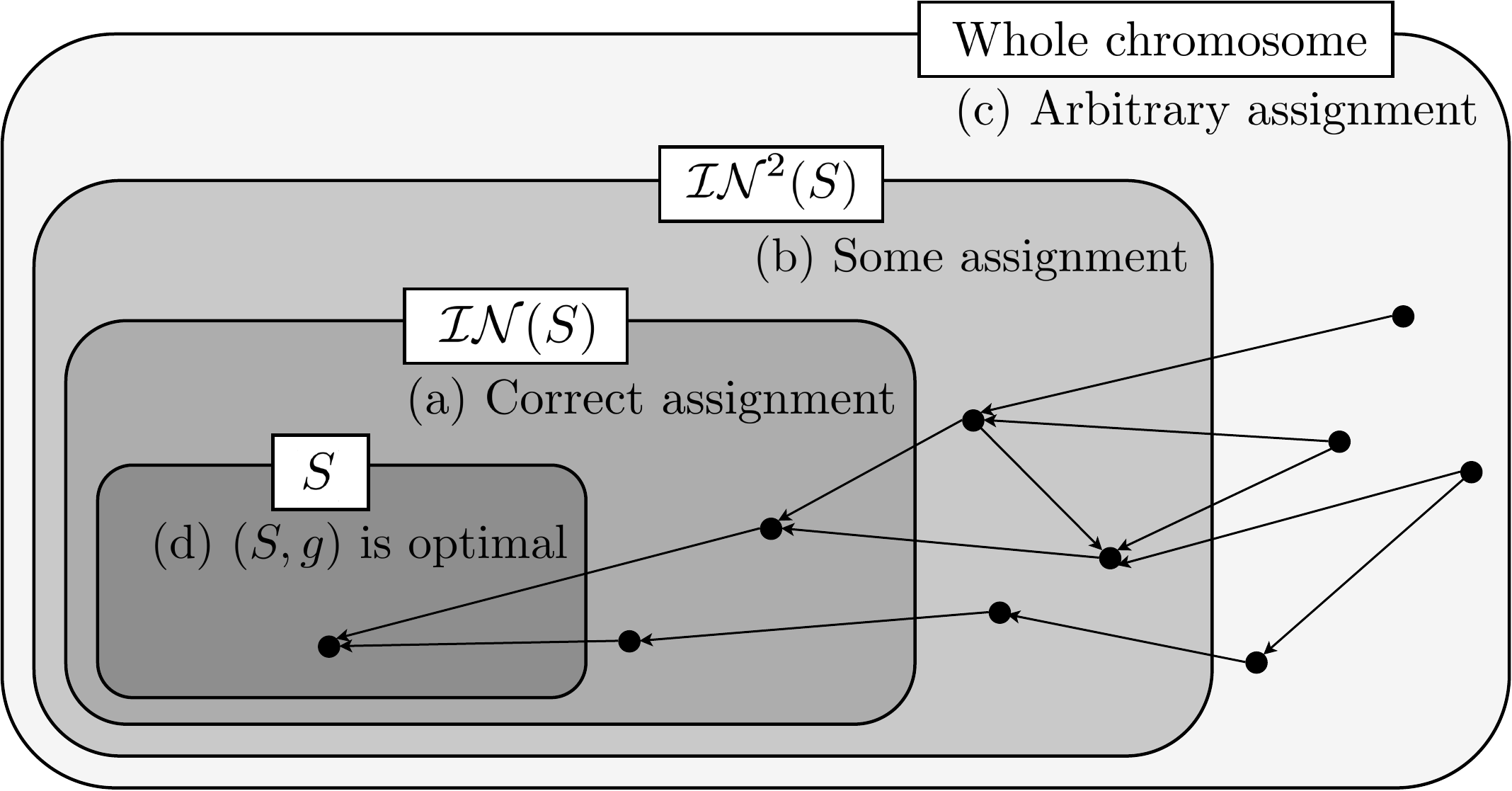}
\caption{Illustration of the epistatic blanket theorem. For any $S$, if (a) $\mathcal{IN}(S)-S$ is correctly assigned, there exists (b) some assignment to $\mathcal{IN}^2(S)-\mathcal{IN}(S)$ such that (c) for any assignment to the rest of the chromosome, (d) $S$ can be correctly assigned according to fitness superiority.}
\label{fig:blanket}

\end{figure}

Under the condition specified in the blanket theorem, for all $s \in S$, at least one constrained optimum contains $g[s]$ at locus $s$. These $g[s]$ may distribute over different constrained optima for different $s$. However, when only one optimal pattern exists on $S$ with the constraint, it must contain all $g[s]$ for all $s$. 

\begin{corollary}
$\forall S \subseteq V, \forall R$, $|\Psi_{\{( \mathcal{IN}(S)-S, g)\} \cup R}[s]| = 1$ for all $s\in S$ implies that $\psi[s] = \{g[s]\}$ for all $s\in S$ and for all $\psi \in \Psi_{\{( \mathcal{IN}(S)-S, g)\} \cup R}$, where $R$ is an arbitrary assignment with the coverage $\mathcal{C}(R) = V- S - \mathcal{IN}(S) - \mathcal{IN}^2(S)$.
\label{coro:blanket}
\end{corollary}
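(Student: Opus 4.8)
The plan is to derive Corollary~\ref{coro:blanket} as an immediate consequence of the Epistasis Blanket Theorem (Theorem~\ref{thm:blanket}) together with the hypothesis that the constrained optima are singletons at every locus of $S$. First I would fix an arbitrary $S \subseteq V$, an arbitrary assignment $R$ with $\mathcal{C}(R) = V - S - \mathcal{IN}(S) - \mathcal{IN}^2(S)$, and write $B = \{(\mathcal{IN}(S)-S, g)\} \cup R$ for the combined constraint, so that the goal becomes: if $|\Psi_B[s]| = 1$ for all $s \in S$, then every $\psi \in \Psi_B$ satisfies $\psi[s] = g[s]$ for all $s \in S$.

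The key step is to invoke Theorem~\ref{thm:blanket}, which gives $g[s] \in \Psi_B[s]$ for every $s \in S$. Combined with the hypothesis $|\Psi_B[s]| = 1$, this forces $\Psi_B[s] = \{g[s]\}$ for each $s \in S$. Then I would unpack the definition of $\Psi_B[s]$: it is the set of alleles appearing at locus $s$ among all chromosomes in $\Psi_B$. Since this set equals $\{g[s]\}$, every $\psi \in \Psi_B$ must have $\psi[s] = g[s]$. Running this argument simultaneously over all $s \in S$ yields the claim. (Here one should be slightly careful about the corollary's notation: the statement writes ``$\psi[s] = \{g[s]\}$'' where evidently ``$\psi[s] = g[s]$'' is meant, since $\psi[s]$ is an allele, not a set; I would either follow the paper's notation verbatim or quietly read it as the allele equality, which is what the proof actually establishes.)

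I do not expect any real obstacle here — the corollary is essentially a restatement of the theorem under the added singleton hypothesis, and the only subtlety is the bookkeeping of what $\Psi_B[s]$ denotes versus what a single $\psi \in \Psi_B$ looks like at locus $s$. The one thing worth making explicit is that the singleton condition is assumed to hold for \emph{all} $s \in S$, so there is no issue of different constrained optima disagreeing on $S$: the intersection $\bigcap_{s \in S}\{\psi \in \Psi_B : \psi[s] = g[s]\}$ is forced to be all of $\Psi_B$. A one-line proof suffices: apply Theorem~\ref{thm:blanket}, combine with $|\Psi_B[s]|=1$ to get $\Psi_B[s]=\{g[s]\}$, and conclude by the definition of $\Psi_B[s]$.
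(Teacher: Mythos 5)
Your proof is correct and matches the paper's (implicitly stated) justification exactly: the paper itself derives the corollary by noting that Theorem~\ref{thm:blanket} places $g[s]$ in $\Psi_{\{(\mathcal{IN}(S)-S,\, g)\}\cup R}[s]$ for every $s\in S$, so the singleton hypothesis pins each of these sets to $\{g[s]\}$ and hence every constrained optimum agrees with $g$ on $S$. Your side remark about the notation $\psi[s]=\{g[s]\}$ versus $\psi[s]=g[s]$ is also a fair reading of a minor typographical slip in the statement.
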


With the epistasis blanket theorem and its corollary, we can upper bound the probability of \textsc{TestSO} being mistaken given the decomposition difficulty is $k$.

\begin{lemma}
	Suppose the decomposition difficulty is $k$. In IPE, the probability of \textsc{TestSO} returning \textsc{True} but $S_i$ is not correctly assigned, \textit{i.e.}, the returned assignment is not $\{(S_i,g)\}$, is less than or equal to $e^{-n/2^{k^2+k^3}}$. \label{lemma:prob-bound-1}
	\begin{proof}
	First, we consider any time when \textsc{TestSO} is invoked with either \textsc{True} or \textsc{False} returned. By Theorem~\ref{thm:blanket}, as long as $\mathcal{IN}(S)-S$ is set to $g$, there exists an assignment $A$ on $\mathcal{IN}^2(S)-\mathcal{IN}(S)-S$ such that for all $s\in S$, at least one constrained optimum has $g[s]$ at $s$ no matter what the remaining assignment $R$ is on $V- S- \mathcal{IN}(S) - \mathcal{IN}^2(S)$. 
	
	Under the above condition, $|\Psi_{\{( \mathcal{IN}(S)-S, g)\} \cup R}[s]| > 1$ for any $s\in S$ causes \textsc{TestSO} to return \textsc{False}. For \textsc{TestSO} to return \textsc{True}, $|\Psi_{\{( \mathcal{IN}(S)-S, g)\} \cup R}[s]|$ must be 1 for all $s \in S$, and the constrained optimal pattern on $S$ is $\{(S,g)\}$ (Corollary~\ref{coro:blanket}).
	
	When \textsc{TestSO} returns \textsc{True}, the input $S$ is defined as $S_i$. For \textsc{TestSO} to return \textsc{True} but with incorrect assignment on $S_i$, the assignment $\{(\mathcal{IN}(S_i)-S_i, g)\} \cup A$ must not exist in the population. An upper bound of such a probability can be calculated since we know $|S_i| \leq k$ (Lemma~\ref{lemma:s-k}), in-degree $< k$, $|\{(\mathcal{IN}(S_i)-S_i, g)\} \cup A| \leq k^2+k^3$, and that the chromosomes are uniformly randomly initialized:
\begin{linenomath*}
\begin{equation*}
Pr \leq \left(1- \frac{1}{2^{k^2+k^3}} \right)^n.
\end{equation*}
\end{linenomath*}	
With $\left(1-\frac{x}{n}\right)^n \leq e^{-x}$, 
$Pr \leq e^{-n/2^{k^2+k^3}}.$
\end{proof}
\end{lemma}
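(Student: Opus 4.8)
\textbf{Proof proposal for Lemma~\ref{lemma:prob-bound-1}.}

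The plan is to bound, at a single invocation of \textsc{TestSO} that returns \textsc{True}, the probability that the returned assignment is wrong, and then argue this single-call bound is what the lemma asks for. First I would invoke the epistasis blanket theorem (Theorem~\ref{thm:blanket}) with $S$ equal to the set on which \textsc{TestSO} is called: it guarantees the existence of a particular assignment $A$ on the ``2nd tier'' $\mathcal{IN}^2(S)-\mathcal{IN}(S)-S$ with the property that, once the ``1st tier'' $\mathcal{IN}(S)-S$ is set to $g$ and $A$ is in place, every locus $s\in S$ has $g[s]$ in at least one constrained optimum, regardless of the remaining free assignment $R$ on $V-S-\mathcal{IN}(S)-\mathcal{IN}^2(S)$. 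Then I would use Corollary~\ref{coro:blanket} to turn this into a dichotomy: under that conditioning, if $|\Psi[s]|>1$ for some $s\in S$ then \textsc{TestSO} sees a pattern that is not strictly superior and returns \textsc{False}; and if $|\Psi[s]|=1$ for all $s\in S$, the unique superior pattern on $S$ must be $\{(S,g)\}$. Hence the only way \textsc{TestSO} can return \textsc{True} with a wrong assignment on $S_i$ is that the triggering pattern $\{(\mathcal{IN}(S_i)-S_i,g)\}\cup A$ simply never appears in any of the $n$ chromosomes — so I have reduced the event to a pure ``missing pattern in a random population'' event.

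Next I would count loci. By Lemma~\ref{lemma:s-k} (applicable because, by the time \textsc{TestSO} succeeds at step $i$, the already-assigned loci $V-U_i$ are, in the event under analysis, correctly assigned — or more precisely, I would frame the bound conditionally so that Lemma~\ref{lemma:s-k} applies), $|S_i|\le k$. The maximum in-degree is at most $k-1<k$, so $|\mathcal{IN}(S_i)-S_i|\le k\cdot k=k^2$ and $|\mathcal{IN}^2(S_i)-\mathcal{IN}(S_i)-S_i|\le k^2\cdot k=k^3$; adding these gives $|\{(\mathcal{IN}(S_i)-S_i,g)\}\cup A|\le k^2+k^3$. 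Since the $n$ chromosomes are drawn uniformly and independently, a single chromosome matches this fixed partial pattern with probability $2^{-(k^2+k^3)}$, so the probability that none of the $n$ matches is at most $(1-2^{-(k^2+k^3)})^n$. Finally I would apply the standard inequality $(1-x/n)^n\le e^{-x}$ with $x=n/2^{k^2+k^3}$ to get the stated bound $e^{-n/2^{k^2+k^3}}$.

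The main obstacle — and the point that needs the most care in the write-up — is the conditioning on ``$V-U_i$ is correctly assigned,'' which is what licenses the use of Lemma~\ref{lemma:s-k} and makes the blanket-theorem conditioning ($\mathcal{IN}(S)-S$ set to $g$) consistent with the state of the population. One has to be careful that $\mathcal{IN}(S_i)-S_i$ may contain loci both inside $V-U_i$ (already frozen, assumed correct in this event) and inside $U_i$ (still free and uniformly random); the counting above is for the free portion, while the frozen portion contributes no additional $2$-factor precisely because it is assumed correct. I would state the lemma's probability as a conditional one (conditioned on all earlier freezes being correct), or equivalently bound the probability of the bad event at step $i$ within the ``good history'' branch, and note that the later union-bound over all $i$ (done in the subsequent development of IPE) handles the propagation of errors. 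A secondary subtlety is that \textsc{TestSO} also returns \textsc{False} whenever the superior pattern changes between chromosomes; this only helps (it can only make \textsc{True} rarer), so it does not affect the upper bound, but I would remark on it so the reader sees why restricting attention to the ``missing pattern'' event is legitimate.
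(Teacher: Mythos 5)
Your proposal follows essentially the same route as the paper's own proof: invoke the epistasis blanket theorem and Corollary~\ref{coro:blanket} to reduce the bad event to the absence of the pattern $\{(\mathcal{IN}(S_i)-S_i,g)\}\cup A$ of size at most $k^2+k^3$ in the random population, then bound $(1-2^{-(k^2+k^3)})^n$ by $e^{-n/2^{k^2+k^3}}$. Your explicit remarks on the conditioning needed for Lemma~\ref{lemma:s-k} and on the frozen-versus-free loci are points the paper leaves implicit (it defers the sequential conditioning to Lemma~\ref{lemma:prob-bound-2}), but they do not change the argument.
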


Furthermore, the probability of IPE failing to find the global optimum is upper bounded as follows.

\begin{lemma}
If the decomposition difficulty of the problem is $k$, IPE returns the global optimum with probability greater than or equal to $\left( 1- e^{-n/2^{k^2+k^3}} \right)^{\ell}.$
\label{lemma:prob-bound-2}
\end{lemma}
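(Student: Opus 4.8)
The plan is to bound the probability that IPE ever makes an error, and then take the complement. The key observation is that IPE only ever modifies its state — assigning and freezing alleles — when \textsc{TestSO} returns \textsc{True}. So the run is a sequence of such "commit" events, indexed $i = 1, 2, \ldots$, and the algorithm succeeds (returns the global optimum) precisely when every commit assigns the correct pattern $\{(S_i, g)\}$. If all commits are correct, then $V - U_i$ is always correctly assigned, eventually $U$ is emptied, and by Proposition~\ref{prop:so} combined with the correctness guaranteed above, the returned chromosome equals $g$. (Note that $k = \ell$ is a fallback: if \textsc{TestSO} keeps failing, $k$ rises until a full enumeration forces the correct commit, so IPE cannot return \textsc{Failure} once on this correct path — it always terminates.)

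First I would argue by induction on $i$ that, conditioned on the first $i-1$ commits being correct (so that $V - U_{i-1}$ is correctly assigned), Lemma~\ref{lemma:prob-bound-1} applies verbatim to the $i$-th commit: the hypotheses "$V - U_i$ correctly assigned" and "decomposition difficulty $k$" used in Lemmas~\ref{lemma:s-k} and~\ref{lemma:prob-bound-1} are exactly what the inductive hypothesis supplies. Hence the conditional probability that the $i$-th commit is incorrect is at most $e^{-n/2^{k^2+k^3}}$. Therefore the probability that commit $i$ is the first incorrect one is also at most $e^{-n/2^{k^2+k^3}}$. Summing (or chaining) over all commits, I get
\begin{linenomath*}
\begin{equation*}
\Pr[\text{IPE fails}] \;\leq\; \sum_i e^{-n/2^{k^2+k^3}},
\end{equation*}
\end{linenomath*}
so it remains only to bound the number of commits. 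Each commit strictly shrinks $U$ (line~\ref{alg-line:U} removes the nonempty set $S_i$), and $U$ starts with $\ell$ elements, so there are at most $\ell$ commits. A cleaner multiplicative version: the probability all commits are correct is $\prod_{i} (1 - p_i)$ where each $p_i \leq e^{-n/2^{k^2+k^3}}$ conditioned on the past, and there are at most $\ell$ factors, giving the bound $\bigl(1 - e^{-n/2^{k^2+k^3}}\bigr)^{\ell}$ as claimed.

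The main obstacle — and the point that needs care rather than calculation — is the conditioning structure. Lemma~\ref{lemma:prob-bound-1} is stated as an unconditional bound on a single \textsc{TestSO} invocation, but in IPE the sets $S_i$, $U_i$, and even which invocations count as "the $i$-th \textsc{True}" are random and depend on the population $P$. The argument must make precise that we condition on the event $E_{i-1}$ that the first $i-1$ commits are correct, observe that on $E_{i-1}$ the coverage $V - U_{i-1}$ is deterministically the correct $g$-assignment, and that the relevant pattern $\{(\mathcal{IN}(S_i) - S_i, g)\} \cup A$ whose absence from $P$ drives the error depends only on $S_i$ (of size $\leq k$ by Lemma~\ref{lemma:s-k}) and on fresh, unconditioned randomness in the chromosomes — the same computation as in Lemma~\ref{lemma:prob-bound-1}. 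One subtlety worth a sentence: distinct commits may look at overlapping bits of the same chromosomes, so the events are not independent; this is exactly why the proof should proceed via "first incorrect commit" union bound (or via a conditional-probability chain $\Pr[E_\ell] = \prod \Pr[\text{commit } i \text{ correct} \mid E_{i-1}]$) rather than claiming independence. With that framing the statement follows immediately from Lemmas~\ref{lemma:s-k} and~\ref{lemma:prob-bound-1} and the bound of $\ell$ on the number of commits.
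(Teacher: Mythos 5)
Your proposal is correct and follows essentially the same route as the paper: decompose the run into at most $\ell$ ``commit'' events where \textsc{TestSO} returns \textsc{True}, bound the conditional error probability of each by Lemma~\ref{lemma:prob-bound-1}, and multiply. If anything, your explicit conditional-probability chain $\prod_i \Pr[\text{commit } i \text{ correct} \mid E_{i-1}]$ is a more careful justification of the product than the paper's informal appeal to events becoming ``independent chronologically.''
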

\begin{proof}
Whenever \textsc{TestSO} returns \textsc{True}, IPE assigns and fixes the alleles on at least one locus. Therefore, at most $\ell$ invocations of \textsc{TestSO} return \textsc{True}.\\
Let $\alpha_i$ be the event of the $i$-th time that \textsc{TestSO} returns \textsc{True}. Let $\beta_i$ be the event of the returned assignment being correct at the $i$-th time that \textsc{TestSO} returns \textsc{True}.\\
The probability of IPE returning the global optimum is then $Pr\left( \bigwedge_i (\beta_i \mid \alpha_i) \right)$.  Note that these events occur in a sequential manner. Once $\beta_i$ occurs, the corresponding alleles are correctly assigned and fixed and hence do not affect the remaining alleles in any way.  We can think of those corresponding vertices as pruned from the EG and do not affect the remaining graph anymore. As a result, these events become independent chronologically, and the probability is equivalent to  $\prod_i Pr\left( \beta_i \mid \alpha_i\right)$. We already have the bound of $1-Pr\left( \beta_i \mid \alpha_i\right)$ in Lemma~\ref{lemma:prob-bound-1}, which leads to the conclusion of the statement.
\end{proof}

With all the analyses above, now we are ready to link the whole thing with PAC learning.

\begin{theorem} If the decomposition difficulty of the problem is $O(1)$, IPE returns the global optimum with probability $(1-\delta)$ within the number of function evaluations polynomial to $\ell$ and $\frac{1}{\delta}$ with an appropriate $n$.
\label{thm:pac}
\begin{proof}		
Let the decomposition difficulty be $k$. By Lemma~\ref{lemma:prob-bound-2}, the probability that IPE returns the global optimum is greater than or equal to $\left( 1-e^{-n/2^{k^2+k^3}} \right)^{\ell}$.
		
For $0<p<1$, $(1-p)^{\ell} \geq 1-{\ell} p$, and  we desire the probability greater than or equal to a threshold.
\begin{linenomath*}
\begin{equation*}
\left( 1-e^{-n/2^{k^2+k^3}} \right)^{\ell} \geq 1-{\ell} \cdot e^{-n/2^{k^2+k^3}} \geq 1-\delta.	
\end{equation*}
\end{linenomath*}
Solving the latter inequality yields
\begin{linenomath*}
\begin{equation*}
n \geq 2^{k^2+k^3} \cdot (\ln \ell + \ln \frac{1}{\delta}).
\end{equation*}
\end{linenomath*}

The above $n$ is sufficiently large for IPE to return the global optimum with probability greater than or equal to $(1-\delta)$. The total number of function evaluations can be calculated as follows. The outer loop terminates when $U$ becomes empty, and each time that \textsc{TestSO} returns \textsc{True}, $|U|$ is decremented by at least 1. Let $|S_i| = k_i$. The number of the invocations of \textsc{TestSO} is at most $(C^\ell_1+C^\ell_2+\ldots+C^\ell_{k_0}) + (C^\ell_1+C^\ell_2+\ldots+C^\ell_{k_2}) + \ldots +(C^\ell_1+C^\ell_2+\ldots+C^\ell_{k_t})$, where $t < \ell$ is the number of outer iterations. Since all $k_i\leq k$ (Lemma~\ref{lemma:s-k}), the number of the invocations of \textsc{TestSO} is $O(\ell^{k+1})$. 
		
Each invocation costs $O(n 2^k)$ function evaluations, which makes the total number of function evaluations 
\begin{linenomath*}
\begin{equation*}	
O\left(2^{k+k^2+k^3} \ell^{k+1} \cdot (\ln \ell + \ln \frac{1}{\delta})\right).
\end{equation*}
\end{linenomath*}
Since $k$ is a constant, this is equivalent to 
\begin{linenomath*}
\begin{equation*}	
O\left(\ell^{k+1} \cdot (\ln \ell + \ln \frac{1}{\delta})\right),
\end{equation*}
\end{linenomath*}
which is polynomial to $\ell$ and $\frac{1}{\delta}$.
\end{proof}
	
\end{theorem}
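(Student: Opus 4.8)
The plan is to bootstrap from Lemma~\ref{lemma:prob-bound-2}, which already packages all the combinatorial work: conditioned on every earlier assignment being correct, each invocation of \textsc{TestSO} that returns \textsc{True} with a wrong pattern has probability at most $e^{-n/2^{k^2+k^3}}$ (Lemma~\ref{lemma:prob-bound-1}), at most $\ell$ such invocations occur because each one freezes at least one locus, and a union/independence argument then yields success probability at least $\bigl(1-e^{-n/2^{k^2+k^3}}\bigr)^{\ell}$. So the first step is purely a calibration of the sample size $n$. I would apply Bernoulli's inequality $(1-p)^{\ell}\ge 1-\ell p$ to reduce the target ``success probability $\ge 1-\delta$'' to the simpler requirement $\ell\, e^{-n/2^{k^2+k^3}}\le\delta$, and solve it to obtain
\[
n \;\ge\; 2^{k^2+k^3}\Bigl(\ln\ell + \ln\tfrac{1}{\delta}\Bigr),
\]
where $k$ is the (constant) decomposition difficulty from Definition~\ref{def:diff}. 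Taking $n$ equal to this bound makes the success probability at least $1-\delta$.

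The second step is to count function evaluations, which I would carry out on the event that every frozen assignment is correct — the very event whose probability we have just lower-bounded. Between two consecutive successful \textsc{TestSO} calls, $k$ is reset to $1$ and then incremented one step at a time; Lemma~\ref{lemma:s-k} guarantees that whenever $V-U_i$ is correctly assigned, some stationary optimum of size $\le k$ already exists among the remaining loci, so $k$ never has to exceed $k$ before the next success. Hence each ``round'' issues at most $C^{\ell}_1+C^{\ell}_2+\ldots+C^{\ell}_k = O(\ell^{k})$ calls to \textsc{TestSO}, and since each round freezes at least one locus there are at most $\ell$ rounds, for a total of $O(\ell^{k+1})$ calls. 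Each call enumerates $2^{|S|}\le 2^{k}$ assignments within each of the $n$ chromosomes, i.e.\ $O(n\,2^{k})$ evaluations, so the grand total is $O\bigl(2^{k}\,\ell^{k+1}\,n\bigr) = O\bigl(2^{k+k^2+k^3}\,\ell^{k+1}(\ln\ell+\ln\tfrac{1}{\delta})\bigr)$, which for constant $k$ collapses to $O\bigl(\ell^{k+1}(\ln\ell+\ln\tfrac{1}{\delta})\bigr)$ — polynomial in $\ell$ and $\frac{1}{\delta}$.

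I expect the genuinely delicate point, rather than any hard calculation, to be the conditional nature of the evaluation bound: Lemma~\ref{lemma:s-k}, and therefore the $O(\ell^{k+1})$ call count, is valid only while the already-frozen prefix is correct, which is precisely the high-probability event of Lemma~\ref{lemma:prob-bound-2}; off that event a single wrong freeze can eventually force $k$ up to $\ell$ and trigger a $2^{\ell}$-cost full enumeration. So the statement has to be read as ``with probability $\ge 1-\delta$, IPE returns the global optimum \emph{and} does so within the stated polynomial budget,'' and the proof should make this joint event explicit rather than bounding probability and runtime separately. Finally, the PAC conclusion is then immediate from Section~\ref{sec:pac}: returning the unique global optimum gives a hypothesis with $EBACC = 1 \ge 1-\epsilon$ for every $\epsilon$, so the dependence on $\frac{1}{\epsilon}$ is vacuous and the run time is polynomial in $\frac{1}{\epsilon}$, $\frac{1}{\delta}$, and $\ell$ as required by the definition of PAC learnability.
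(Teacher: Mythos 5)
Your proposal is correct and follows essentially the same route as the paper's proof: Lemma~\ref{lemma:prob-bound-2} plus Bernoulli's inequality to calibrate $n \geq 2^{k^2+k^3}(\ln\ell+\ln\frac{1}{\delta})$, then the same $O(\ell^{k+1})$ call count via Lemma~\ref{lemma:s-k} and $O(n2^k)$ cost per call. Your explicit remark that the evaluation bound holds only on the high-probability event where every frozen prefix is correct (so the claim must be read as a joint statement about correctness and budget) is a point the paper's proof leaves implicit, and making it explicit is a mild improvement rather than a divergence.
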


Finally, with the definition of PAC for optimization in Section~\ref{sec:pac}, we have the following corollary, which is a special case where $\epsilon = 0$. 

\begin{corollary}
The global optima are PAC learnable for problems with decomposition difficulty of $O(1)$.
\end{corollary}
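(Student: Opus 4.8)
The plan is to obtain the corollary directly from Theorem~\ref{thm:pac} together with the notion of PAC learnability for optimization set up in Section~\ref{sec:pac}, specialized to $\epsilon = 0$. First I would fix the hypothesis space to be the collection of point hypotheses $H = \{h_{\vec y} \mid \vec y \in X\}$, where $h_{\vec y}(\vec x)$ outputs \textsc{True} if and only if $\vec x = \vec y$. The learning algorithm is IPE (Algorithm~\ref{alg:ie}): upon termination it returns either a full assignment $\vec y$, which we identify with the hypothesis $h_{\vec y}$, or \textsc{Failure}, which we may identify with any fixed rejecting hypothesis.

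Next I would invoke Theorem~\ref{thm:pac}. Since the decomposition difficulty is $k = O(1)$, for any target confidence $\delta$ there is a sample size $n$ (of order $\ln \ell + \ln \frac{1}{\delta}$ up to the constant factor $2^{k^2+k^3}$) for which IPE returns the true global optimum $g$ with probability at least $1-\delta$, consuming $O\!\left(\ell^{k+1}(\ln \ell + \ln \frac{1}{\delta})\right)$ function evaluations. Because $k$ is a constant, this budget is polynomial in $\ell$ and $\frac{1}{\delta}$, and it does not depend on $\epsilon$ at all.

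Then I would handle the accuracy requirement. By Assumption~\ref{asm:unique} the global optimum is unique, so, as already observed in Section~\ref{sec:pac}, the hypothesis $h_g$ singling out $g$ has $sensitivity^* = 1$ and $specificity = 1$, hence $EBACC(h_g) = 1$, while every other point hypothesis $h_{\vec y}$ with $\vec y \neq g$ has $sensitivity^* = 0$ and $specificity < 1$, hence $EBACC < \frac{1}{2}$. Consequently, on the event that IPE returns $g$, which has probability at least $1-\delta$, the output hypothesis attains $EBACC = 1 = 1 - \epsilon$ for $\epsilon = 0$. This is precisely the statement that IPE achieves $EBACC \geq 1-\epsilon$ with probability at least $1-\delta$ within a number of function evaluations polynomial in $\ell$, $\frac{1}{\delta}$, and (vacuously) $\frac{1}{\epsilon}$, so by the definition in Section~\ref{sec:pac} the global optima are PAC learnable.

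I do not expect a genuine obstacle here, as the corollary is essentially Theorem~\ref{thm:pac} re-expressed in the PAC vocabulary of Section~\ref{sec:pac}. The only point needing a word of care is the reading of ``polynomial in $\frac{1}{\epsilon}$'' at $\epsilon = 0$: I would simply note that the evaluation budget is independent of $\epsilon$, so the polynomial-in-$\frac{1}{\epsilon}$ requirement is met trivially, and the guarantee obtained ($EBACC = 1$, i.e.\ exact identification of the unique optimum) is the strongest one could ask for.
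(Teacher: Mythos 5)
Your proposal is correct and matches the paper's own route: the paper derives this corollary directly from Theorem~\ref{thm:pac} together with the EBACC-based PAC definition of Section~\ref{sec:pac}, treating it as the special case $\epsilon = 0$. You simply spell out the details (point hypotheses, $EBACC(h_g)=1$, budget independent of $\epsilon$) that the paper leaves implicit.
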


\noindent MBGAs build models to decompose the problems and then recombine chromosomes with promising partitions, possibly MSOs or SOs, to construct the optimum. In this section, we show that under certain conditions, SOs are PAC decidable.

\subsection{SOs of Bounded Sizes Are PAC Decidable Under Sufficiently Large Minimum Deception Rate}

If we are only interested in deciding whether a given set is the coverage of an SO, we can discard the part of partial enumeration and merely invoke \textsc{TestSO} on that set. The concept is decidability that follows the PAC framework, and hence we define PAC decidability as follows.

\begin{definition}
\label{def:PAC-dec}
A concept is \textbf{PAC decidable} if it can be correctly decided (yes or no) with probability $(1-\delta)$ in time polynomial to $\frac{1}{\delta}$ and the problem size.
\end{definition}

For problems where the global optimum can be found without correctly identifying SOs, the identification of SOs is unnecessary because there are insufficient misleading assignments. Accordingly, we focus on those problems for which correctly identifying SOs is necessary to achieve the global optimum. That is, there exist sufficiently many misleading assignments such that without correctly identifying SOs, the global optimum cannot be found. We define the concept of deception rate as follows.

The significance of Theorem~\ref{thm:pac} and its following corollary lies in the scaling behavior of the population size $n$ relative to the problem's structural properties. By establishing that the probability of an incorrect decision by \textsc{TestSO} decays exponentially with $n$, we demonstrate that achieving high confidence does not necessitate an infinitely large population. Rather, the reliability of SO identification is governed by the interplay between the decomposition difficulty $k_d$ and the minimum deception rate $r$.

\begin{definition}
The \textbf{minimum deception rate} $r$ of a problem with EG $= (V,E)$ is defined as 

\begin{tabular}{p{4cm}p{1cm}p{4cm}}
$r =
\begin{cases} 
	\displaystyle \min_{e \in E}\; r_{e} & \text{ if } E \neq \phi\\ 
	\hfil 0 & \text{ if } E = \phi 
\end{cases}$	&
\text{   , where   } &
$
r_e = \displaystyle \min_{\substack{S}} \frac{|\mathcal{D}|}{|\mathcal{A}|},
$
\\
\end{tabular}
\\
$e=(u,v)$, $u \in S$, $v \not\in S$, $\mathcal{A} = \{ A \mid \mathcal{C}(A) = S\}$ and $\mathcal{D} = \{ A \mid \mathcal{C}(A) = S$ and $\overline{g[v]} \in \Psi_A[v]\}$.
\end{definition}

In the above definition, $\mathcal{A}$ is the set of all assignments with coverage of $S$, and $\mathcal{D}$ is the set of all deceptive assignments with coverage of $S$, and $v$ being incorrectly assigned according to constrained optima. Conceptually, the minimum deception rate provides a lower bound on the probability that, when the other genes are fixed, a uniformly generated chromosome has a gene incorrectly assigned by fitness superiority.
Note that since for all $A$, $\Psi_{\{(\mathcal{C}(A), g)\}} = g$ and hence  $\overline{g[v]} \notin \Psi_{\{(\mathcal{C}(A), g)\}}[v]$. It follows that $r \in [0, 1)$.

Below are some examples of the minimum deception rate. 
In \textsc{OneMax}, all genes are independent, \textit{i.e.}, $E = \phi$, so the minimum deception rate is zero. In the 4-bit \textsc{CTrap}, which consists of one trap defined over $V = \{0, 1, 2, 3\}$, all genes are epistatically related to each others and hence $(u, v) \in E$ for all $u, v \in V$. In this problem, $r_{e} = 0.5$ for all $e$, resulted in $r=0.5$. As an example, the calculation of $r_{(2, 3)} = 0.5$ is detailed in Table~\ref{tbl:deception}. Consider any set $S$ where $2 \in S$ and $3 \not\in S$. $\mathcal{A}$ contains all possible assignments over $S$ and hence $|\mathcal{A}| = 2^{|S|}$. Among these assignments, the deceptive condition 
$\overline{g[v]} \in \Psi_A[v]$ occurs on all but one assignment: ${\{(S, g)\}}$ and hence $|\mathcal{D}| = 2^{|S|}-1$. Since the minimum value of $|\mathcal{D}|/|\mathcal{A}|$ for any given $S$ is 0.5, $r_{(2, 3)} = 0.5$. 

The minimum deception rate characterizes the likelihood that deceptive assignments induce incorrect fitness superiority when the remaining genes are fixed. Consequently, it determines the lower bound of probability that \textsc{TestSO} is misled when evaluating whether or not a candidate set satisfies the SO condition on a uniformly generated chromosome. The following lemma and theorem state that SOs are PAC decidable under the condition that the minimum deception rate is bounded by a constant.

\begin{table}[t]
\centering
\caption{In the 4-bit \textsc{CTrap} problem, $r_{(2, 3)}$ is $0.5$. Each row corresponds to a subset $S$ such that $2 \in S$ and $3 \notin S$, and shows the values of $|\mathcal{A}|$, $|\mathcal{D}|$, and the ratio $|B'|/|B|$. The minimum ratio, $0.5$, determines $r_{(2, 3)}$.}
\label{tbl:deception}

\begin{tabular}{p{3.5cm}cp{3.5cm}}
&
\begin{tabular}{lccl}
\toprule
$S$ & $|\mathcal{D}|$ & $|\mathcal{A}|$ & $|\mathcal{D}|/|\mathcal{A}|$\\ \hline
$\{2\}$ & 1 & 2 & 0.5  \\
$\{0, 2\}$ & 3 & 4 & 0.75 \\
$\{1, 2\}$ & 3 & 4 & 0.75 \\
$\{0, 1, 2\}$ & 7 & 8 & 0.875  \\
\hline
\end{tabular}
& \\
\end{tabular}
\end{table}

\begin{lemma}
	Suppose that (1) the decomposition difficulty is $k_d$, and (2) the minimum deception rate is $r$, the probability of \textsc{TestSO} returning \textsc{True} on the input $S$ while $S$ is not an SO is less than or equal to $e^{-n / \max(2^{k_d^2+k_d^3}, \frac{1}{r})}$.
	\label{lemma:mdr}
	\begin{proof}
		If $S$ contains (if not is) only part of an in-closure while \textsc{TestSO} returns \textsc{True}, there exist $u \not\in S$ and $v\in S$ such that $\{u\} \Rightarrow v$. The blanket theorem (Theorem 21) states that as long as $\mathcal{IN}(S)-S$ is set correctly, there exists an assignment $A$ on $\mathcal{IN}^2(S)-\mathcal{IN}(S)-S$ that makes the constrained optimal pattern ${g[v]}$ at locus $v$. On the other hand, an assignment \(A_r\) on $V-S$ is one that causes the optimal assignment at \(v\) to differ from \(g[v]\). As long as these two assignments, $\{(\mathcal{IN}(S)-S, g)\} \cup A$ and $A_r$, exist in the population, \textsc{TestSO} returns \textsc{False}.

Let event \(\alpha\) be the event that a randomly generated chromosome contains the assignment of the blanket. Since the decomposition difficulty is \(k_d\), we have
\[
\Pr(\alpha) \geq \frac{1}{2^{k_d^2+k_d^3}}.
\]

Let event \(\beta\) be the event that a randomly generated chromosome contains the assignment \(A_r\). Since the minimum deception rate is \(r\), we have
\[
\Pr(\beta) \geq r.
\] Hence, the probabilities of events \(\alpha\) and \(\beta\) are lower bounded by constants greater than zero.

Let event \(\alpha'\) be the event that half of the population does not contain the assignment of the blanket, and let event \(\beta'\) be the event that the other half of the population does not contain the assignment corresponding to $A_r$. Finally, let event $\gamma$ be the event of not having both assignments in the whole population. Since $\gamma$ entails ($\alpha'$ and $\beta'$), we know that the probability of $\gamma$ is less then or equal to the joint probability of $\alpha'$ and $\beta'$. In addition, we know that events $\alpha'$ and $\beta'$ are independent from the semantic. As a result, the following inequalities hold.

\begin{linenomath*}
    \begin{eqnarray*}
        Pr(\gamma) &\leq& Pr(\alpha' \wedge \beta') = Pr(\alpha')\cdot Pr(\beta')\\
        &\leq& \left(1-\frac{1}{2^{k_d^2+k_d^3}}\right)^{n/2} \cdot (1-r)^{n/2}\\
        &\leq& e^{-n / 2^{1+k_d^2+k_d^3}} \cdot e^{ -n / 2^{1+\log_2(\frac{1}{r})}}\\
        &\leq& e^{-n / \max(2^{k_d^2+k_d^3}, \frac{1}{r})}. 
    \end{eqnarray*}
\end{linenomath*}
\end{proof} 
\end{lemma}

\begin{theorem} If the decomposition difficulty of the problem is $k_d$ and the minimum deception rate of the problem is $r$, \textsc{TestSO} returns the correct SO with probability $(1-\delta)$ within the number of function evaluations polynomial to $\frac{1}{\delta}$ with an appropriate $n$.
\label{thm:pac}
\begin{proof}		
By Lemma~\ref{lemma:mdr}, the probability that \textsc{TestSO} returns an SO is greater than or equal to $1-e^{-n / \max(2^{k_d^2+k_d^3}, \frac{1}{r})}$. We desire the probability greater than or equal to a threshold.
\begin{linenomath*}
\begin{equation*}
1-e^{-n/\max(2^{k_d^2+k_d^3}, \frac{1}{r})} \geq 1-\delta.	
\end{equation*}
\end{linenomath*}
Solving the latter inequality yields
\begin{linenomath*}
\begin{equation*}
n \geq {2^{\max(k_d^2+k_d^3, \frac{1}{r})} \cdot \ln \frac{1}{\delta}}.
\end{equation*}
\end{linenomath*}

The above value of $n$ is sufficiently large for \textsc{TestSO} to return the correct SO with probability at least $(1 - \delta)$. Specifically, a population of size $2^{\max(k_d^2 + k_d^3, \frac{1}{r})} \cdot \ln \frac{1}{\delta}$ is used in \textsc{TestSO}, which results in a total of 
\[
O\left(2^{|S|} \cdot 2^{\max(k_d^2 + k_d^3, \frac{1}{r})} \cdot \ln \frac{1}{\delta}\right)
\]
function evaluations, which is polynomial to $\frac{1}{\delta}$, where $S$ is the input to \textsc{TestSO}. 

\end{proof}
	
\end{theorem}

With the definition of PAC decidability (Definition~\ref{def:PAC-dec}), we have the following corollary.

\begin{corollary}
SOs are PAC decidable if decomposition difficulty is $O(\log^{1/3} \ell)$ and the minimum deception rate of epistasis is $\Omega(\frac{1}{\log \ell})$, and the maximum size of SOs is $O(\log \ell)$.
\end{corollary}

\subsection{Maximum In-degree Upper Bounds The Decomposition Difficulty on EGs with Only Strict Epistases}
\label{sec:cyclic-eg-with-bounded-in-degree}
Recall that the decomposition difficulty is defined according to both the maximum in-degree and the maximum size of SCCs (Definition~\ref{def:diff}). However, if the EG contains only strict epistases (Assumption~\ref{asm:strict}), bounded in-degree implies boundedly sized SCCs. We start the derivations by first defining epistatic equivalency.

\begin{assumption}
	The EG of the problem contains only strict epistases. 
	\label{asm:strict}
\end{assumption}

\begin{definition}
	$u$ and $v$ are \textbf{epistatically equivalent} if $u\rightarrow v$ and $v \rightarrow u$.
\end{definition}

If two vertices are epistatically equivalent, the strictly epistatic relation propagates as follows.
\begin{lemma}
	\label{lemma:equivalent}
	Given that $u$ and $v$ are epistatically equivalent, $u\rightarrow w$ implies $v\rightarrow w$.
	\begin{proof}
		Since $u\rightarrow v$, by definition $\Psi_{\{(u, \overline{g})\}}[v] = \{\overline{g[v]}\}$. By Proposition~\ref{prop:opt-remain1}, we have $\Psi_{\{(u, \overline{g})\}} = \Psi_{\{(u, \overline{g}), (v, \overline{g})\}}$.
		Likewise, since $v\rightarrow u$, we have $\Psi_{\{(v, \overline{g})\}} = \Psi_{\{(u, \overline{g}), (v, \overline{g})\}}$.
		As a result, $\Psi_{\{(v, \overline{g})\}}[w] = \Psi_{\{(u, \overline{g})\}}[w]$.\\
		Since $u\rightarrow w$, by definition $\Psi_{\{(u, \overline{g})\}}[w] = \{\overline{g[w]}\}$. From above, we also have $\Psi_{\{(v, \overline{g})\}}[w] = \{\overline{g[w]}\}$, and hence $v \rightarrow w$.
	\end{proof}
\end{lemma}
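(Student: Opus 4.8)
The plan is to show that epistatic equivalence of $u$ and $v$ forces the two singleton-complement constraints to induce the \emph{same} constrained optima, i.e.\ $\Psi_{\{(u,\overline{g})\}} = \Psi_{\{(v,\overline{g})\}}$ as sets; once that is in hand, $v\rightarrow w$ falls out by reading off the allele at locus $w$.

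First I would unfold the definitions. Since $u\rightarrow v$, the definition of strict epistasis together with Proposition~\ref{prop:opt} (which gives $\Psi_{\{(u,g)\}}[v]=\{g[v]\}$) yields $\Psi_{\{(u,\overline{g})\}}[v]=\{\overline{g[v]}\}$. As this allele set is a singleton, Proposition~\ref{prop:opt-remain1} applies with $A=\{(u,\overline{g})\}$ at locus $v$, so appending the constraint $(v,\overline{g})$ is redundant: $\Psi_{\{(u,\overline{g})\}}=\Psi_{\{(u,\overline{g}),(v,\overline{g})\}}$. Symmetrically, $v\rightarrow u$ gives $\Psi_{\{(v,\overline{g})\}}[u]=\{\overline{g[u]}\}$, and the same proposition gives $\Psi_{\{(v,\overline{g})\}}=\Psi_{\{(u,\overline{g}),(v,\overline{g})\}}$. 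Chaining the two equalities produces $\Psi_{\{(u,\overline{g})\}}=\Psi_{\{(v,\overline{g})\}}$.

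With the set equality established, I would restrict attention to locus $w$: $\Psi_{\{(v,\overline{g})\}}[w]=\Psi_{\{(u,\overline{g})\}}[w]$. Invoking $u\rightarrow w$, which says $\Psi_{\{(u,\overline{g})\}}[w]=\{\overline{g[w]}\}$, we conclude $\Psi_{\{(v,\overline{g})\}}[w]=\{\overline{g[w]}\}$; since $\Psi_{\{(v,g)\}}[w]=\{g[w]\}$ automatically by Proposition~\ref{prop:opt}, the two constrained-optima allele sets at $w$ are disjoint, which is exactly the definition of $v\rightarrow w$.

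The only delicate step is the passage from ``the constrained optima agree at one locus'' to ``the constrained-optima sets coincide'', and this is precisely where Proposition~\ref{prop:opt-remain1} does the work: a singleton allele set at a constrained locus makes the extra constraint redundant, so both $\Psi_{\{(u,\overline{g})\}}$ and $\Psi_{\{(v,\overline{g})\}}$ collapse onto the common refinement $\Psi_{\{(u,\overline{g}),(v,\overline{g})\}}$. I expect this to be the main (albeit modest) obstacle; everything else is definition-chasing. I would also point out that strictness (Assumption~\ref{asm:strict}) is what guarantees these allele sets are singletons rather than $\{0,1\}$, so the argument genuinely consumes the ``strict'' hypothesis and would break for non-strict epistases.
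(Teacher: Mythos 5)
Your proposal is correct and follows essentially the same route as the paper's proof: both use Proposition~\ref{prop:opt-remain1} to show that each of $\Psi_{\{(u,\overline{g})\}}$ and $\Psi_{\{(v,\overline{g})\}}$ equals the common refinement $\Psi_{\{(u,\overline{g}),(v,\overline{g})\}}$, then read off the allele set at locus $w$. Your explicit statement of the intermediate set equality $\Psi_{\{(u,\overline{g})\}}=\Psi_{\{(v,\overline{g})\}}$ and the closing remark on where strictness is consumed are accurate but do not change the argument.
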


With these tools, we show that cycles only occur in cliques and that maximal cliques do not share any common vertex. The derivations start by showing that there are no chordless cycles except those of size 2, in which case $u\rightarrow v$ and $v\rightarrow u$. Because chordless cycles are not well defined in directed graphs, here we give mathematical expressions of what we mean by that. 

\begin{definition}
Consider the following cycle $v_0 \rightarrow v_1 \rightarrow \cdots \rightarrow v_{c-1} \rightarrow v_0$, where $\forall i$, $v_{i} \in V$ and $c \leq \ell$ is the size of the cycle. For convenience, define $a\oplus b$ as the result of $(a+b)$ modulo $c$, and $a\ominus b$ as the result of $(a-b)$ modulo $c$. We call the cycle \textbf{chordless} if $\forall i, \forall j\not=i\oplus1$, $v_i \not\rightarrow v_j$.\footnote{By this definition, in the cycle $a\rightarrow b\rightarrow c\rightarrow a$, edge $b\rightarrow a$ is considered a chord.}
\end{definition}
 
\begin{lemma}
\label{lemma:chordless}
There is no chordless cycle of size greater than 2 in the EG. 
\begin{proof}
Suppose that there exists a chordless cycle of size greater than 2. There exist three adjacent vertices on the cycle, namely $v_{i\ominus1}$, $v_{i}$, and $v_{i\oplus 1}$. Let $(a, b, c)$ denote the assignment where the alleles of $v_{i\ominus1}$, $v_{i}$, and $v_{i\oplus 1}$ are set to $a$, $b$, and $c$, respectively.
		
	Since $v_{i\ominus1} \rightarrow v_i$ and $v_{i\ominus1} \not\rightarrow v_{i\oplus1}$,   $( \overline{g[v_{i\ominus1}]}, \overline{g[v_{i}]}, g[v_{i\oplus1}] )$ occurs in $\Psi_{\{(v_{i\ominus1}, \overline{g})\}}$.
	
	Since $v_{i} \rightarrow v_{i\oplus1}$ and $v_{i} \not\rightarrow v_{i\ominus1}$,  $( g[v_{i\ominus1}], \overline{g[v_{i}]}, \overline{g[v_{i\oplus1}]} )$ occurs in $\Psi_{\{(v_{i}, \overline{g})\}}$.

	We can conclude that $f(\{( \{v_i, v_{i\oplus 1}\}, \overline{g}) \} ) > f( \{( \{v_{i\ominus 1}, v_i\}, \overline{g}) \} ) $; otherwise, we would have $v_i \rightarrow v_{i\ominus 1}$. Since $i$ is arbitrary and the relation is cyclic, eventually we have $f(\{( \{v_i, v_{i\oplus 1}\}, \overline{g}) \} ) > f(\{( \{v_i, v_{i\oplus 1}\}, \overline{g}) \} )$, which leads to a contradiction.
	\end{proof}
\end{lemma}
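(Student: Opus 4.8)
The plan is a proof by contradiction: assume a chordless cycle $v_0\rightarrow v_1\rightarrow\cdots\rightarrow v_{c-1}\rightarrow v_0$ with $c>2$ exists (take it simple, as is standard), and manufacture a strictly increasing cyclic chain of fitness values. The first step is a pair of bookkeeping facts, both immediate from the definitions: (i) for any $u,w$, if $\{u\}\not\Rightarrow w$ then $\Psi_{\{(u,\overline{g})\}}[w]=\{g[w]\}$ --- the only assignment on $\{u\}$ that could move the optimal allele at $w$ is $(u,\overline{g})$, since $(u,g)$ leaves it at $g[w]$ by Proposition~\ref{prop:opt}; and (ii) under Assumption~\ref{asm:strict}, every edge $u\rightarrow w$ is a strict epistasis, so $\Psi_{\{(u,\overline{g})\}}[w]=\{\overline{g[w]}\}$. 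Note that ``$v_i\not\rightarrow v_j$'' in the definition of chordless is exactly ``$\{v_i\}\not\Rightarrow v_j$'' here, because under Assumption~\ref{asm:strict} the EG has no non-strict edges.

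The main step is local, at two consecutive cycle vertices $v_{i\ominus1}\rightarrow v_i$. Since $c>2$, the indices $i\ominus1$ and $i\oplus1$ differ, so chordlessness forbids the edge $v_i\rightarrow v_{i\ominus1}$; hence $\{v_i\}\not\Rightarrow v_{i\ominus1}$, and (i) gives $\Psi_{\{(v_i,\overline{g})\}}[v_{i\ominus1}]=\{g[v_{i\ominus1}]\}$. Meanwhile (ii) applied to the cycle edge $v_{i\ominus1}\rightarrow v_i$ gives $\Psi_{\{(v_{i\ominus1},\overline{g})\}}[v_i]=\{\overline{g[v_i]}\}$, so Proposition~\ref{prop:opt-remain1} lets us append $(v_i,\overline{g})$ for free: $f(\{(v_{i\ominus1},\overline{g})\})=f(\{(\{v_{i\ominus1},v_i\},\overline{g})\})$. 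Because $f$ of a partial assignment is the maximum fitness over all completions, the extra constraint $(v_{i\ominus1},\overline{g})$ on $\{(v_i,\overline{g})\}$ cannot raise that maximum, so $f(\{(\{v_{i\ominus1},v_i\},\overline{g})\})\le f(\{(v_i,\overline{g})\})$; and this is \emph{strict}, since equality would mean some completion of $\{(v_i,\overline{g})\}$ with $v_{i\ominus1}$ set to $\overline{g[v_{i\ominus1}]}$ is optimal, contradicting $\Psi_{\{(v_i,\overline{g})\}}[v_{i\ominus1}]=\{g[v_{i\ominus1}]\}$. Thus $f(\{(v_{i\ominus1},\overline{g})\})<f(\{(v_i,\overline{g})\})$ for every $i$.

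Closing the argument is then immediate: running this inequality around the cycle gives $f(\{(v_0,\overline{g})\})<f(\{(v_1,\overline{g})\})<\cdots<f(\{(v_{c-1},\overline{g})\})<f(\{(v_0,\overline{g})\})$, an impossibility; hence no chordless cycle of size greater than $2$ exists. I expect the only delicate point to be the $f$-on-partial-assignments bookkeeping and, in particular, getting a strict inequality rather than a tie --- which is precisely where Assumption~\ref{asm:strict} earns its keep: strictness of $v_{i\ominus1}\rightarrow v_i$ is what makes appending $(v_i,\overline{g})$ cost nothing, and the singleton $\Psi_{\{(v_i,\overline{g})\}}[v_{i\ominus1}]=\{g[v_{i\ominus1}]\}$ is what upgrades ``$\le$'' to ``$<$''. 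The cyclic telescoping and the index bookkeeping ($i\ominus1\neq i\oplus1$ when $c>2$) are routine.
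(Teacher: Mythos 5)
Your proof is correct and follows essentially the same route as the paper's: both arguments use chordlessness (no edge $v_i\rightarrow v_{i\ominus1}$ when $c>2$) together with strictness of the cycle edges to force $f(\{(v_{i\ominus1},\overline{g})\}) < f(\{(v_i,\overline{g})\})$ at each step, and then telescope this strict inequality around the cycle to reach a contradiction. The only cosmetic difference is that the paper states the chain in terms of the pair-assignments $\{(\{v_i,v_{i\oplus1}\},\overline{g})\}$ (which equal the singleton values by the same Proposition~\ref{prop:opt-remain1} argument you use), whereas you work directly with singletons; your handling of the strict-versus-weak inequality is, if anything, spelled out more carefully than the paper's.
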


Next, we show that all cycles occur only in cliques.

\begin{lemma}
	Cycles only occur in cliques.
	\label{thm:clique}
\end{lemma}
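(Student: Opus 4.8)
The plan is to leverage the previously established Lemma~\ref{lemma:chordless} (no chordless cycle of size greater than 2) together with Lemma~\ref{lemma:equivalent} (epistatic equivalency propagates strict epistasis). The goal is to show that if $v_0, v_1, \ldots, v_{c-1}$ lie on a cycle, then every pair among them is connected by an edge in both directions, i.e., they form a clique of mutually epistatically equivalent vertices. I would proceed by induction on the cycle length $c$, the base case $c=2$ being immediate since a 2-cycle $u\rightarrow v\rightarrow u$ is already a clique.

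For the inductive step, suppose the claim holds for all cycles of length less than $c$, and let $C: v_0 \rightarrow v_1 \rightarrow \cdots \rightarrow v_{c-1} \rightarrow v_0$ be a cycle of length $c\geq 3$. By Lemma~\ref{lemma:chordless}, $C$ cannot be chordless, so there exists a chord: some edge $v_i \rightarrow v_j$ with $j \neq i\oplus 1$. This chord splits $C$ into two shorter closed walks, each of which contains a genuine cycle of length strictly less than $c$ (one going $v_i \rightarrow v_j \rightarrow v_{j\oplus 1}\rightarrow \cdots \rightarrow v_i$, the other using the chord in the reverse sense — here I need to be a bit careful because the chord only runs one direction, so I would instead argue that the chord $v_i\to v_j$ together with the arc from $v_j$ back around to $v_i$ forms a shorter cycle, apply the induction hypothesis to conclude that sub-cycle is a clique, then bootstrap). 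Once a sub-cycle is known to be a clique, all its vertices are pairwise epistatically equivalent, so by Lemma~\ref{lemma:equivalent} I can transfer edges: any vertex epistatically equivalent to $v_i$ inherits the edge $v_i\to v_j$ and vice versa, which lets me grow the clique to absorb the remaining vertices of $C$.

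The main obstacle I anticipate is the asymmetry of the chord: a single chord $v_i\to v_j$ gives one shorter cycle directly (the ``short way'' around from $v_j$ back to $v_i$ through the original arcs), but the complementary piece is only a directed path $v_i\to v_{i\oplus1}\to\cdots\to v_j$ plus a would-be reverse edge $v_j\to v_i$ that need not exist. So the induction cannot be applied symmetrically to both halves. The way I would handle this: apply the induction hypothesis to the one legitimate shorter cycle to get a clique (hence epistatic equivalence among those vertices, including $v_i$ and $v_j$), then use Lemma~\ref{lemma:equivalent} to show $v_j \rightarrow v_i$ after all (since $v_i$ is equivalent to some vertex $w$ on the sub-cycle with $w\rightarrow v_i$... more directly, equivalence of $v_i$ and $v_j$ is exactly what the sub-cycle clique gives), which then makes the complementary piece a genuine cycle of length less than $c$ as well, and a second application of the induction hypothesis finishes the absorption. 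Care is also needed when $c=3$, where the only possible chord is $v_i\to v_{i\ominus1}$ and the two ``halves'' degenerate into 2-cycles; this should be checked separately as a small base-like case, and it directly yields that the triangle is a clique via two applications of Lemma~\ref{lemma:equivalent}.
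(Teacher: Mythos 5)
Your overall strategy---induction on the cycle length, invoking Lemma~\ref{lemma:chordless} to obtain a chord and Lemma~\ref{lemma:equivalent} to propagate edges through epistatically equivalent vertices---is the same as the paper's, and your generic inductive step (chord $v_i\to v_j$ with $j\neq i\ominus 1$) is sound. But there is a genuine gap in the degenerate case you only check for $c=3$: when the chord supplied by Lemma~\ref{lemma:chordless} is a back-edge $v_i\to v_{i\ominus 1}$ (which the lemma permits; the paper's footnote explicitly counts $b\to a$ as a chord of $a\to b\to c\to a$, and nothing guarantees a second, non-back-edge chord exists), your first sub-cycle is just the $2$-cycle on $\{v_{i\ominus 1},v_i\}$, and the ``reverse edge'' it yields, $v_{i\ominus 1}\to v_i$, is already an edge of the original cycle. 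The complementary piece $v_i\to v_{i\oplus 1}\to\cdots\to v_{i\ominus 1}\to v_i$ is then the \emph{entire} original cycle of length $c$, so the induction hypothesis cannot be applied to it; your claim that the complementary piece is ``a genuine cycle of length less than $c$'' fails precisely here.

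The missing idea, which is how the paper closes this case (its subcase $z=1$), is to use the epistatic equivalence of $v_{i\ominus 1}$ and $v_i$ obtained from the $2$-cycle to transfer an outgoing edge \emph{past} $v_i$: by Lemma~\ref{lemma:equivalent}, $v_i\to v_{i\oplus 1}$ gives $v_{i\ominus 1}\to v_{i\oplus 1}$, so the cycle that skips $v_i$ altogether has length $c-1$ and the induction hypothesis applies to it. The vertex $v_i$ is then absorbed into the resulting clique because it is epistatically equivalent to $v_{i\ominus 1}$, which lies in that clique (two more applications of Lemma~\ref{lemma:equivalent}, one for each direction). With that repair your argument coincides with the paper's proof, whose case split $z=1$ versus $z>1$ is exactly the distinction between the back-edge chord and the generic chord.
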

\begin{proof}
	Consider the cycle $v_0 \rightarrow v_1 \rightarrow \cdots \rightarrow v_{c-1} \rightarrow v_0$. Prove by induction on $c$.
	\begin{enumerate}
		\item Base: 
		\begin{itemize}
			\item $c=1$. There is no cycle.
			\item $c=2$. The cycle itself is a clique.
			\item $c=3$. By Lemma~\ref{lemma:chordless}, there exists a chord. Without loss of generality, let the chord be $v_1 \rightarrow v_0$. Since $v_0$ and $v_1$ are epistatically equivalent, $v_1 \rightarrow v_2$ implies $v_0 \rightarrow v_2$ by Lemma~\ref{lemma:equivalent}. Now that $v_0$ and $v_2$ are epistatically equivalent, $v_0 \rightarrow v_1$ implies $v_2 \rightarrow v_1$, making $v_0$, $v_1$ and $v_2$ a clique.
		\end{itemize}
		\item Induction hypothesis:
		The statement holds for $c \leq i$.
		\item Consider $c=i+1$. By Lemma~\ref{lemma:chordless}, there exists a chord. Without loss of generality, let the chord be $v_z \rightarrow v_0$, where $1\leq z \leq i-1$.
		
		First, we claim that the whole cycle contains two splitting cliques sharing $v_0$, namely $C_1$ and $C_2$, as follows.
		
		\begin{itemize}
			\item $z=1$.\\
			$v_0$ and $v_1$ are epistatically equivalent. $v_1 \rightarrow v_2$ implies $v_0 \rightarrow v_2$. $C_1 = v_0 \rightarrow v_2 \rightarrow v_3 \rightarrow \ldots \rightarrow v_i \rightarrow v_0$ is a cycle of size $i$, which is a clique by the induction hypothesis. Therefore, we have $v_2 \rightarrow v_0$ making $C_2 = v_0 \rightarrow v_1 \rightarrow v_2 \rightarrow v_0$ a cycle of size 3, which is the base case, and hence $C_2$ is also a clique.
			\item $z>1$.\\
			Let $C_1$ denote $v_0 \rightarrow v_1 \rightarrow \ldots \rightarrow v_z \rightarrow v_0$. $C_1$ is a cycle of size $z+1$. Since $z+1 \leq i$, by the induction hypothesis, $C_1$ is a clique, and hence $v_0 \rightarrow v_z$.\\
			Now consider $C_2 = v_0 \rightarrow v_z \rightarrow v_{z+1} \rightarrow \ldots \rightarrow v_{i} \rightarrow v_0$. $C_2$ is a cycle of size $i-z+2$. Again, $i-z+2 \leq i$ due to $z>1$, and hence $C_2$ is also a clique by the induction hypothesis. 
		\end{itemize}
		Now consider any two vertices $v_a$ and $v_b$ in the whole cycle. If both $v_a$ and $v_b$ belong to the same clique, we have $v_a \rightarrow v_b$ and $v_b \rightarrow v_a$.
		If $v_a$ and $v_b$ belong to different cliques, let $v_a \in C_1$ and $v_b \in C_2$ without loss of generality. Since $C_1$ is a clique, $v_0$ is either $v_a$ or is epistatically equivalent to $v_a$. Since $C_2$ is a clique, $v_0 \rightarrow v_b$. By Lemma~\ref{lemma:equivalent}, $v_a \rightarrow v_b$. Symmetrically, $v_b \rightarrow v_a$. Since $a$ and $b$ are arbitrary, $v_0, v_1, \ldots, v_i$ form a clique of size $i+1$. 
	\end{enumerate}
	By the principle of mathematical induction, the statement holds for all sizes of cycles.
\end{proof}

\begin{corollary}
	None of the maximal cliques share common vertices.
	\label{coro:nonoverlapping}
	\begin{proof}
		Suppose two maximal cliques $C_1$ and $C_2$ share a common vertex $v$. Consider another vertex $u\neq v$ in $C_2$. Since every vertex in $C_1$ is epistatically equivalent to $v$, $v \rightarrow u$ implies that every vertex in $C_1$ is strictly epistatic to $u$ (Lemma~\ref{lemma:equivalent}). Symmetrically, $u$ is epistatically equivalent to $v$, which makes $u$ strictly epistatic to every vertex in $C_1$. As a result, $C_1$ and $u$ form a clique which makes $C_1$ non-maximal, and this leads to a contradiction.
	\end{proof}
\end{corollary}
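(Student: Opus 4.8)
The plan is to prove Corollary~\ref{coro:nonoverlapping} by contradiction, leveraging the propagation property of Lemma~\ref{lemma:equivalent}. Suppose two \emph{distinct} maximal cliques $C_1$ and $C_2$ of the EG share a vertex $v$. Under Assumption~\ref{asm:strict} every edge of the EG is strict, so being a clique means that every two of its vertices are mutually strictly epistatic, i.e.\ epistatically equivalent; in particular every vertex of $C_1$ is epistatically equivalent to $v$, and every vertex of $C_2$ is epistatically equivalent to $v$.

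First I would pick some $u \in C_2$ with $u \neq v$ (the alternative $C_2 = \{v\}$ forces $C_2 \subseteq C_1$ immediately, contradicting distinctness, so such a $u$ exists). The goal is to show that $C_1 \cup \{u\}$ is still a clique. Fix an arbitrary $w \in C_1$. From $v \rightarrow w$ (since $v,w$ lie in the clique $C_1$) together with the fact that $u$ and $v$ are epistatically equivalent, Lemma~\ref{lemma:equivalent} yields $u \rightarrow w$. Symmetrically, from $w \rightarrow v$ and $v \rightarrow u$ (since $v,u$ lie in the clique $C_2$), applying Lemma~\ref{lemma:equivalent} again with the vertex roles renamed appropriately gives $w \rightarrow u$. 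Hence $u$ is mutually strictly epistatic with every $w \in C_1$, so $C_1 \cup \{u\}$ is a clique properly containing $C_1$, contradicting the maximality of $C_1$.

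It then follows that no vertex of $C_2$ can lie outside $C_1$, i.e.\ $C_2 \subseteq C_1$; by the symmetric argument $C_1 \subseteq C_2$, so $C_1 = C_2$, contradicting the assumption that the two maximal cliques were distinct. The only place that needs care is the double application of Lemma~\ref{lemma:equivalent}: the lemma is stated one-directionally (if $u,v$ are epistatically equivalent and $u \rightarrow w$ then $v \rightarrow w$), so obtaining both edges between $u$ and $w$ requires invoking it once for each direction and carefully matching the lemma's named vertices to the current ones each time. This is essentially bookkeeping; conceptually the whole argument rests on the single observation that epistatic equivalence acts as a ``transitive-like'' glue merging any two cliques that meet at a common vertex, which only maximality can forbid.
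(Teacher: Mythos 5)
Your proposal is correct and follows essentially the same route as the paper's proof: pick $u\in C_2\setminus\{v\}$, use the epistatic equivalence of $u$ with $v$ and of each $w\in C_1$ with $v$ to apply Lemma~\ref{lemma:equivalent} in both directions, conclude that $C_1\cup\{u\}$ is a clique, and contradict the maximality of $C_1$. Your extra bookkeeping (handling the singleton case $C_2=\{v\}$ and spelling out the two separate invocations of the lemma) is a harmless refinement of the same argument.
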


If a problem consists of only non-weak strict epistases, the cycles in SCCs occur only in cliques. That makes SCCs maximal cliques, and they do not share common vertices. In other words, such a problem embodies a quality to that of additively separable problems in terms of optimization, even if its fitness function is defined on overlapping subfunctions. 

\begin{theorem}
If the maximum in-degree is $k$ of an EG that contains only non-weak strict epistases, the maximum size of SCCs is $k+1$, and consequently, the decomposition difficulty is $k+1$.
\begin{proof}
Since cycles occur only in cliques, all SCCs are maximal cliques, and hence the statement holds. 
\end{proof}
\end{theorem}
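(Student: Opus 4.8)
The statement to prove is: if an EG contains only non-weak strict epistases and has maximum in-degree $k$, then the maximum SCC size is $k+1$, and hence (by Definition~\ref{def:diff}) the decomposition difficulty equals $\max(k_{SCC}, k_{in}+1) = \max(k+1, k+1) = k+1$. The plan is to combine the structural results already established — Lemma~\ref{thm:clique} (cycles occur only in cliques) and Corollary~\ref{coro:nonoverlapping} (maximal cliques are vertex-disjoint) — to pin down exactly what the SCCs look like, then read off the size bound from the in-degree constraint.

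First I would identify each nontrivial SCC with a clique. An SCC of size at least $2$ contains a cycle through all of its vertices (by strong connectivity), so by Lemma~\ref{thm:clique} those vertices form a clique; conversely any maximal clique is trivially strongly connected. Hence every SCC of size $\geq 2$ is contained in a maximal clique, and being an SCC it must in fact equal that maximal clique (a proper subset of a clique inside a bigger clique would not be a maximal strongly connected set, since the extra clique vertices are mutually reachable). So the SCCs of size $\geq 2$ are precisely the maximal cliques of the EG, which by Corollary~\ref{coro:nonoverlapping} are pairwise vertex-disjoint.

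Next I would bound the size of a maximal clique by $k+1$. In a clique $C$ with $|C| = c$, every vertex $v \in C$ satisfies $u \rightarrow v$ for all $u \in C \setminus \{v\}$, so $\mathcal{IN}(v) \supseteq C \setminus \{v\}$, giving in-degree at least $c - 1$. Since the maximum in-degree is $k$, we get $c - 1 \leq k$, i.e. $c \leq k+1$. SCCs of size $1$ are trivially $\leq k+1$. Therefore $k_{SCC} \leq k+1$. Plugging into Definition~\ref{def:diff}: the decomposition difficulty is $\max(k_{SCC}, k_{in}+1) \leq \max(k+1, k+1) = k+1$; and since some vertex attains in-degree $k$ (by hypothesis $k_{in}=k$), the difficulty is exactly $k+1$ whenever $k$ is actually realized, and in any case is bounded above by $k+1$.

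The reasoning is essentially a bookkeeping argument once the hard structural lemmas (chordless-cycle-free, cycles-in-cliques, disjoint-maximal-cliques) are in hand, so I do not anticipate a genuine obstacle here. The one point that needs a little care is the claim that an SCC of size $\geq 2$ \emph{equals} rather than merely \emph{is contained in} a maximal clique — this uses that a maximal clique is itself strongly connected, so it cannot be properly contained in a larger SCC, together with Lemma~\ref{thm:clique} to rule out an SCC that is a strict superset of a clique but not itself a clique. This is exactly what the paper's one-line proof is implicitly invoking when it says "all SCCs are maximal cliques," and I would spell it out only to the extent needed for rigor.
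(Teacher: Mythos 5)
Your overall route is the same as the paper's: invoke Lemma~\ref{thm:clique} to conclude that every SCC is a maximal clique, bound the clique size by the in-degree, and plug into Definition~\ref{def:diff}. The bookkeeping at the end is correct and usefully explicit where the paper is silent: in a clique of size $c$ every vertex has at least $c-1$ in-neighbours, so $c-1\leq k$ gives $k_{SCC}\leq k+1$, and then $\max(k_{SCC},k_{in}+1)=k+1$.

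However, one step in your justification is not valid as stated: ``an SCC of size at least $2$ contains a cycle through all of its vertices (by strong connectivity).'' Strong connectivity does not imply the existence of a Hamiltonian cycle --- a digraph with edges $a\to b$, $b\to a$, $a\to c$, $c\to a$ is strongly connected, yet no simple cycle passes through both $b$ and $c$ --- so you cannot apply Lemma~\ref{thm:clique} to a single cycle covering the whole SCC. The gap is repairable with tools already in the paper: any edge $a\to b$ whose endpoints lie in a common SCC sits on a simple cycle (append a shortest path from $b$ back to $a$), so by Lemma~\ref{thm:clique} its endpoints are epistatically equivalent; Lemma~\ref{lemma:equivalent} makes epistatic equivalence transitive; and since any two vertices of an SCC are joined by a directed path whose intermediate vertices all stay inside the SCC, all vertices of the SCC are pairwise equivalent, i.e.\ the SCC is a clique. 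With that substitution your argument goes through; note that the paper's own one-line proof makes the same silent leap from ``cycles are cliques'' to ``SCCs are cliques,'' so spelling this out is a genuine improvement rather than a detour.
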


Non-strict epistases occur only when there exist different chromosomes with equal fitness value(s), which is often the case for combinatorial problems. For non-combinatorial problems, such as the NK-landscape, the fitness is real-valued, and the values are rarely equal. On those problems with only strict epistases, bounded maximum in-degree implies bounded decomposition difficulty, and the aforementioned PAC learnable properties apply.

\section{Conclusion}
\label{sec:conc}

\noindent In this paper, we gave a formal definition of epistasis to represent the concept of linkage. With the definition, we proved several theorems showing the working principles of MBGAs. Specifically, we would like to highlight two important theorems among others developed in this paper. The problem decomposition theorem (Theorem~\ref{thm:m=in}) formally relates the problem decomposition in GA to the concept of linkage by stating that the minimum stationary optima are the in-closures in the epistatic graph of the problem. The epistasis blanket theorem (Theorem~\ref{thm:blanket}) states that as long as a small portion of loci that are epistatic (the 1st and 2nd tiers) to the target loci are set in a particular pattern, the stationarily optimal pattern at the target loci is indeed the global optimal pattern no matter what the remaining assignments are. With these theorems, we presented different results that are related to complexity, PAC learning with different preconditions. Table~\ref{tbl:summary} summarizes our main results. Here, `in polynomial time' assumes polynomial-time fitness evaluation; otherwise, the results apply only to a polynomial number of evaluations.

\begin{table*}
\caption{Summary of major results under different preconditions.}
\label{tbl:summary}
\begin{tabular}{lp{0.8\textwidth}}
Global assumptions: & \parbox{0.79\textwidth}{\begin{itemize}
\item Unique global optimum
\item The problem does not contain weak epistasis
\end{itemize}}\\
\end{tabular}
 
\begin{tabular}{c}
Preconditions:\\
\begin{tabular}{ll}
	(A) EG is known & (B) EG contains only strict epistases\\
	(C) size of SCCs is $O(1)$ & (D) size of SCCs is $O(\log \ell)$\\
	(E) in-degree is $O(1)$ & (F) in-degree is $O(\log \ell)$\\
	(G) size of SOs is $O(\log \ell)$ & (H) size of SCCs is $O(\log^{1/3} \ell)$\\
	(I) minimum deception rate is $\Omega(\frac{1}{\log \ell})$  & (J) in-degree is $O(\log^{1/3} \ell)$\\

\end{tabular}\\
\end{tabular}

\vspace{5px}
\centering
\begin{tabular}{|c|p{5.5cm}|}
	\hline
	Preconditions & \multicolumn{1}{|c|}{Results} \\
	\hline
	\hline
	(A)+(D) or (A)+(B)+(F)  & Global optimum can be obtained in polynomial time. \\
	\hline
	(C)+(E) or (B)+(E) & Global optimum is PAC learnable.\\
	\hline
	(G)+(H)+(I)+(J) or (G)+(B)+(I)+(J) & SOs are PAC decidable.\\
	\hline
\end{tabular}
\end{table*}

 



There is plenty of room in this paper that can be further developed. The concept of linkage can be time-variant, while the epistasis in this paper is not. When a subproblem is nearly converged to the correct alleles, the information of linkage in that subproblem may not be as important for finding the global optimum anymore. Also, most of the results in this paper are obtained by assuming the problem does not contain weak epistasis. We would like to investigate the impact of weak epistases on model quality as well as on finding the global optimum. 

The establishment of theoretical foundations in the GA field is never easy, probably due to the diversity of the stochastic nature and sophisticated characteristics of GAs. While many of those proposed in the literature have been criticized as overly simplified~\cite{Reeves:critique:2002} or only work on additively separable problems~\cite{Eiben:critique:1999}, making progress along this direction of research is still a necessity for the advancement of GAs. Unlike methods in mathematical programming and optimization, such as linear programming and semidefinite programming, for which the scope and specifics of target problems are formally defined and corresponding algorithms are accordingly developed, GAs were proposed based on inspirations from nature and biology and have been proven to work via practice. The scope of applicability for GAs remains unknown, and the lack of rigorous mathematical definitions and fundamentals impedes the theoretical development of GAs. The work presented in this paper provides a formal foundation towards this end by making assumptions only when necessary for the breadth of applicability, while bearing in mind the working principles of MBGAs. Along the line towards understanding the way of problem decomposition, we encountered many concepts that are similar to those in MBGA literature. We cannot say that the epistasis defined in this paper is linkage, or SCCs/in-closures are building blocks, but we do hope that one finds the resemblance between those concepts. In addition, we hope the results in this paper help bridge the GA field with the communities in theoretical machine learning and complexity. 

\pagebreak

\bibliography{submission}

\end{document}